\documentclass[10pt,journal,compsoc]{IEEEtran}



%

%
\ifCLASSOPTIONcompsoc
  \usepackage[nocompress]{cite}
\else
  \usepackage{cite}
\fi
%
\usepackage{algorithm}
\usepackage{algorithmic}
\usepackage{amssymb,amsfonts,amsmath,amsthm}
\usepackage{multicol,multirow}
\usepackage{graphicx}

\newtheorem{theorem}{Theorem}
\newtheorem{remark}{Remark}

\newtheorem{lemma}{Lemma}
\newtheorem{definition}{Definition}
\newtheorem{assumption}{Assumption}
%
\ifCLASSINFOpdf
\else
\fi
\hyphenation{op-tical net-works semi-conduc-tor}

\begin{document}
%
\title{Learning Rates for Nonconvex Pairwise Learning}
%
%
%
%

\author{Shaojie Li \quad Yong Liu\IEEEauthorrefmark{2}
\IEEEcompsocitemizethanks{\IEEEcompsocthanksitem  \IEEEauthorrefmark{2}Corresponding Author.
\IEEEcompsocthanksitem S. Li and Y. Liu are with the Gaoling School of Artificial Intelligence, Renmin University of China, Beijing 100872, China (email: 2020000277@ruc.edu.cn and liuyonggsai@ruc.edu.cn).
}
\thanks{Manuscript received November 09, 2021.}}

%
%

\markboth{Journal of \LaTeX\ Class Files
}%
{Shaojie Li: Learning Rates of Nonconvex Pairwise Learning}
%



\IEEEtitleabstractindextext{%
\begin{abstract}
Pairwise learning is receiving increasing attention since it covers many important machine learning tasks, e.g., metric learning, AUC maximization, and ranking. Investigating the generalization behavior of pairwise learning is thus of significance. However, existing generalization analysis mainly focuses on the convex objective functions, leaving the nonconvex learning far less explored. Moreover, the current learning rates derived for generalization performance of pairwise learning are mostly of slower order. Motivated by these problems, we study the generalization performance of nonconvex pairwise learning and provide improved learning rates. Specifically, we develop different uniform convergence of gradients for pairwise learning under different assumptions, based on which we analyze empirical risk minimizer, gradient descent, and stochastic gradient descent pairwise learning. We first successfully establish learning rates for these algorithms in a general nonconvex setting, where the analysis sheds insights on the trade-off between optimization and generalization and the role of early-stopping. We then investigate the generalization performance of nonconvex learning with a gradient dominance curvature condition. In this setting, we derive faster learning rates of order $\mathcal{O}(1/n)$, where $n$ is the sample size. Provided that the optimal population risk is small, we further improve the learning rates to $\mathcal{O}(1/n^2)$, which, to the best of our knowledge, are the first $\mathcal{O}(1/n^2)$-type of rates for pairwise learning, no matter of convex or nonconvex learning. Overall, we systematically analyzed the generalization performance of nonconvex pairwise learning.
\end{abstract}

\begin{IEEEkeywords}
Pairwise Learning, Generalization Performance, Nonconvex Optimization, Learning Rates.
\end{IEEEkeywords}}

\maketitle

\IEEEdisplaynontitleabstractindextext

%
\IEEEpeerreviewmaketitle

\ifCLASSOPTIONcompsoc
\IEEEraisesectionheading{\section{Introduction}\label{sec:introduction}}
\else
\section{Introduction}
\label{sec:introduction}
\fi

%
%
%
%
\IEEEPARstart{P}{airwise} 
learning focuses on learning tasks with loss functions depending on a pair of training examples, and thus has great advantage in modeling relative relationships between paired samples. As an important field of modern machine learning, pairwise learning instantiates many well-known learning tasks, for instance, similarity and metric learning \cite{jin2009regularized,cao2016generalization,li2021sharper,liu2021refined}, AUC maximization \cite{cortes2003auc,zhao2011online,gao2013one,ying2016stochastic,liu2018fast1,lei2021stochastic,yang2021learning,wang2021differentially,dang2021large}, bipartite ranking \cite{agarwal2009generalization,clemencon2008ranking,clemencon2005ranking,liu2017generalization}, gradient learning \cite{mukherjee2006learning,mukherjee2006estimation}, minimum error entropy principle \cite{hu2013learning,guo2020distributed}, multiple kernel learning \cite{kumar2012a}, and preference learning \cite{furnkranz2010preference}, etc. 

Since its significance, there has been an increasing interest in the generalization performance analysis of pairwise learning to understand why it performs well in practice. Generalization analysis investigates how the predictive models learned from training samples behave on the testing samples, which is one of the primary interests in the machine learning community \cite{krishnapuram2005sparse,liu2017algorithm,li2021orthogonal,bian2014asymptotic,xu2020upper}.
In contrast to the classical pointwise learning problems where the loss function involves single instances, pairwise learning loss concerns pairs of training samples. This coupled construction leads to the fact that the empirical risk of pairwise learning has $\mathcal{O}(n^2)$ dependent terms if there are $n$ training samples \cite{lei2021generalization}. The fundamental assumption of independent and identical distributed (i.i.d.) random variables is thus violated for the empirical risk of pairwise learning, which, unfortunately, renders the standard generalization analysis in the i.i.d. case not applicable in this context.

There are many existing studies on the generalization performance of pairwise learning, but they have the following limitations.
Firstly, they typically are not general. Specifically,
most of the existing work studies specific instantiations, for instance, bipartite ranking or AUC maximization \cite{lei2018generalization}. On the contrary, there is far less work studying the generalization performance under the framework of pairwise learning \cite{lei2020sharper,lei2021generalization}. Secondly, they mostly require convexity conditions \cite{lei2021generalization}. In the related work of learning the unified pairwise framework,
\cite{kar2013on,wang2012generalization,lin2017online} investigate the online pairwise learning, which is different from the offline setting of this paper, while \cite{papa2015sgd,shen2020stability} study the variants of stochastic gradient descent (SGD). The most related work to this paper is \cite{lei2018generalization,lei2020sharper,lei2021generalization}. In \cite{lei2018generalization}, the authors study the generalization performance of regularized empirical risk minimizer (RRM) via a peeling technology in uniform convergence. In \cite{lei2020sharper}, the authors establish the relationship between the generalization measure and algorithmic stability, and then use this connection to study the generalization performance of RRM and SGD. While in \cite{lei2021generalization}, the authors initialize a systematic generalization analysis of SGD under milder assumptions via algorithmic stability and uniform convergence of gradients.
However, the above works \cite{lei2020sharper,kar2013on,shen2020stability,papa2015sgd,lei2021generalization,lin2017online,lei2018generalization} are almost limited to the convex learning, and even often requiring the restrictive strong convexity condition. An exception is \cite{lei2021generalization}, where nonconvex learning is involved. Thirdly, in \cite{lei2021generalization}, the authors only investigate the SGD, where there are two learning rates derived for nonconvex pairwise learning. One is of order $\mathcal{O}(\sqrt{d/n})$, provided with high probability under general nonconvex assumptions, while another is of order $\mathcal{O}(n^{-\frac{2}{3}})$, provided in expectation under an extra gradient dominated assumption \cite{lei2021generalization}, where $n$ is the sample size and $d$ is the dimension of parameter space. However, one can see that these rates are of slower order.

Motivated by these problems, our goal is to provide a systematic and improved generalization analysis for nonconvex pairwise learning. 
Our contributions are summarized as follows.
\begin{itemize}
\item
We study the generalization performance of the rarely explored nonconvex pairwise learning problems. 
Our analysis is performed on the general pairwise learning framework and spans empirical risk minimizer (ERM), gradient descent (GD), and stochastic gradient descent (SGD).

\item We first consider the general nonconvex learning and provide learning rates for these algorithms. Our analysis reveals that the optimization and generalization should be balanced for achieving good learning rates, which sheds insights on the role of early-stopping. The derived rates are based on our developed uniform convergences of gradients for pairwise learning, which may be of independent interest.

\item
We then study the nonconvex learning with a commonly used curvature condition, i.e., the gradient dominance assumption. We establish faster $\mathcal{O}(1/n)$ order learning rates. If the optimal population risk is small, we further improve this learning rate to $\mathcal{O}(1/n^2)$. To our best knowledge, the $\mathcal{O}(1/n)$ rate is the first for nonconvex pairwise learning, and the $\mathcal{O}(1/n^2)$ rate is the first for pairwise learning, whether in convex learning or nonconvex learning.
\end{itemize}
This paper is organized as follows. The related work is reviewed in Section \ref{Section2}. In Section \ref{Section3}, we introduce the notations and present our main results.
We provide the proofs in Section \ref{Section4}. Section \ref{Section5} concludes this paper. 
Some discussions and proofs are deferred to the Appendix, including a systematic comparison with the related work.



 
\section{Related Work}\label{Section2}
This section introduces the related work on generalization performance analysis of pairwise learning based on different approaches. 

Algorithmic stability is a popular approach to study the generalization performance of pairwise learning. It is also a fundamental concept in statistical learning theory \cite{bousquet2002stability,bousquet2020sharper,klochkov2021stability}, which has a deep
connection with learnability \cite{rakhlin2005stability,shalev-shwartz2010learnability,shalev-shwartz2015understanding}. A training algorithm is
stable if small changes in the training set result in small differences in the output predictions
of the trained model \cite{bousquet2002stability}. \cite{agarwal2009generalization,gao2013uniform} establish the relationship between generalization and stability for ranking. \cite{jin2009regularized,wang2019multitask} study the regularized metric learning based on stability. \cite{yang2021stability,huai2020pairwise} consider differential privacy problems in pairwise setting. \cite{shen2020stability} uses stability to study the trade-off between the generalization error and optimization error for a variant of pairwise SGD. \cite{lei2020sharper} starts the studying of pairwise learning framework via algorithmic stability. They provide an improved stability analysis based on \cite{bousquet2020sharper}, and further use it to establish learning rates  for RRM and SGD. \cite{lei2021generalization} further provides generalization guarantees for pairwise SGD under milder assumptions. Although algorithmic stability has been widely employed in pairwise learning, it generally requires convexity assumptions \cite{lei2021generalization}, which means that the above studies  are mostly limited to convex learning. Moreover, the strong convexity condition is often required when establishing faster learning rates. However, it is known that the strong convexity condition is too restrictive \cite{karimi2016linear}.

Another popular approach employed for pairwise learning is uniform convergence \cite{bartlett2002rademacher,bartlett2005local,liu2020fast,li2021towards,liu2021effective}. An advantage of uniform convergence is that it can imply meaningful learning rates for nonconvex learning \cite{mei2018the,foster2018uniform,davis2021graphical,lei2020sharper,lei2021generalization}. In the related work of uniform convergence, \cite{cao2016generalization,clemencon2008ranking,clemencon2005ranking,rejchel2012on,ye2019fast,verma2015sample,zhou2016generalization,yang2021learning,li2021sharper,liu2017generalization,ying2016stochastic,liu2018fast1,lei2021stochastic} focus on the specific instantiations of pairwise learning, i.e., metric learning, ranking or AUC maximization. They often bound the generalization gap by its supremum over the whole (or a subset) of the hypothesis space. Then, some space complexity measures, including VC dimension, covering number and Rademacher complexity, can be adopted to prove the learning rates of the generalization performance. \cite{lei2018generalization} studies the pairwise learning framework via the uniform convergence technique. But they require a strong convexity assumption. In a very recent work, \cite{lei2021generalization} develops uniform convergence of gradients for pairwise learning based on \cite{lei2021learning}, and further use it to investigate the learning rates of SGD in nonconvex pairwise learning. The uniform convergence of gradients has recently drawn increasing attention in nonconvex learning \cite{mei2018the,foster2018uniform,lei2021learning,xu2020toward,davis2021graphical} and stochastic optimization \cite{zhang2017empirical,zhang2019stochastic,liu2018fast}, which is a gap between the gradients of the population risk and the gradients of the empirical risk. However, these works are limited to the pointwise learning setting. In this paper, we study the more complex pairwise learning and provide improved uniform convergence of gradients than \cite{lei2021generalization}, based on which we investigate the learning rates for generalization performance of nonconvex pairwise learning. As discussed before, the dependency in the empirical risk hinders the standard i.i.d technique. To overcome this difficulty, we need to decouple this dependency so that the standard generalization analysis established for independent data can be applied to this context. Furthermore, we develop different uniform convergence of gradients under different assumptions. For the demand of the proof, we also create two more general forms of the Bernstein inequality of pairwise learning, which may be of independent interest and benefit the Bernstein inequality's broader applicability (please refer to Appendix \ref{appendix1} for details). 

Except for the algorithmic stability and uniform convergence, convex analysis is employed in online pairwise learning \cite{kar2013on,wang2012generalization}. The tool of integral operator is also used to study the generalization of pairwise learning, but is often limited to the specific least square loss functions \cite{ying2016online,guo2020distributed}. 

\section{Main Results}\label{Section3}
\subsection{Preliminaries}
Let $P$ be a probability measure defined over a sample space $\mathcal{Z}$ and $P_n$ be the corresponding empirical probability measure.
 Let $f(\cdot, z, z') : \mathcal{W} \mapsto R$ be a random objective function depending on random variables
$z, z' \in \mathcal{Z}$, where $\mathcal{W}$ is a parameter space of dimension $d$.
In pairwise learning, we aim to minimize the following expected risk 
\begin{align}\label{eqs1}
F(\mathbf{w}) = \mathbb{E}_{z,z'}[f(\mathbf{w};z,z')],
\end{align}
where $\mathbb{E}_{z,z'}$ denotes the expectation with respect to (w.r.t.) $z,z' \sim P$.
In (\ref{eqs1}),
$F(\mathbf{w})$ is also referred to as population risk. $z$ and $z'$ can be considered as samples, $\mathbf{w}$ can be interpreted as a model or hypothesis, and $f(\cdot, \cdot , \cdot)$ can be viewed
as a loss function. 

A well-known example of (\ref{eqs1}) is the pairwise supervised learning. 
Specifically, in the supervised learning, $\mathcal{Z} = \mathcal{X} \times \mathcal{Y}$ with $\mathcal{X} \subset \mathbb{R}^{d'}$ being the input space and $\mathcal{Y} \subset \mathbb{R}$ being the output space ($d'$ may not equal to $d$). Let $S=\{ z_1,...,z_n \}$ be a training dataset drawn independently according to $P$, based on which we wish to build a prediction function $h: \mathcal{X} \mapsto \mathbb{R}$ or $h: \mathcal{X} \times \mathcal{X} \mapsto \mathbb{R}$. Considering the parametric models, in which the predictor $h_{\mathbf{w}}$ can be indexed by a parameter $\mathbf{w} \in \mathcal{W}$, and defining $\ell(\mathbf{w};z,z')$ as the loss that measures the quality of $h_{\mathbf{w}}$ over $z,z' \in \mathcal{Z}$, where $\ell:  \mathcal{W} \times \mathcal{Z} \times \mathcal{Z} \mapsto \mathbb{R}$, the corresponding expected risk of supervised learning can be written as
\begin{align}\label{eqs2}
F(\mathbf{w}) = \mathbb{E}_{z,z'}[\ell(\mathbf{w};z,z')].
\end{align}
 In contrast to the traditional pointwise learning problems where the quality of a model parameter $\mathbf{w}$ is measured over an individual point,
  a distinctive property of (\ref{eqs2}) is that the performance of $h_{\mathbf{w}}$ should be quantified on pairs of data samples. 
Note that the
minimization of (\ref{eqs1}) is more general than supervised learning in (\ref{eqs2}) and could be
more challenging to handle \cite{shalev-shwartz2010learnability,shalev-shwartz2015understanding}.

From (\ref{eqs1}), we know that the population risk $F(\mathbf{w})$ measures the prediction performance of $\mathbf{w}$ over the underlying distribution. However,
$P$ is typically not available and what we get is only a set of i.i.d. training samples $S$. In practice, we minimize the following empirical risk as an approximation \cite{wainwright2019high}
\begin{align}\label{ineq345}
F_S(\mathbf{w}) = \frac{1}{n(n-1)}\sum_{i,j\in [n], i \neq j} f(\mathbf{w};z_i,z_j),
\end{align}
where $[n] = \{1,...,n  \}$. In optimizing (\ref{ineq345}), some popular algorithms are proposed including empirical risk minimizer (ERM), gradient descent (GD), and stochastic gradient descent (SGD). For this reason, we will provide generalization analysis for these algorithms.
We now introduce some notations used in this paper.
Denote $\| \cdot \|$ to be the $L_2$ norm in $\mathbb{R}^d$, i.e., $\| \mathbf{w} \| = (\sum_{i=1}^d | w_i|^2)^{\frac{1}{2}}$. Let $B(\mathbf{w}_0, R) := \{ \mathbf{w} \in \mathbb{R}^d
: \|\mathbf{w} - \mathbf{w}_0\| \leq R \}$ denote a ball with center $\mathbf{w}_0 \in \mathbb{R}^d$ and radius
$R$. We assume that there is a radius $R_1$ such that $ \mathcal{W} \subseteq B(\mathbf{w}^{\ast}, R_1)$. Let $e$ be the base of the natural logarithm.

For a better understanding of the pairwise learning framework, we now provide two examples to explain it.
\begin{itemize}
\item \textbf{Bipartite ranking.} In ranking problems, we aim to learn a good estimator $h_{\mathbf{w}}:\mathcal{X}\times \mathcal{X} \mapsto \mathbb{R}$ which can correctly
predict the ordering of pairs of binary labeled samples, i.e., predicting $y> y'$ if $h_{\mathbf{w}}(x,x') > 0$. The performance of $h_{\mathbf{w}}$ at examples $(z,z')$ can be measured by choosing the $0-1$ loss. However, the $0-1$ loss is hard to be optimized in practice, one often employs surrogate losses \cite{cortes2016structured}. By considering the convex surrogate losses $\ell: \mathbb{R} \mapsto \mathbb{R}_{+}$, the loss function of ranking is of the form $f(\mathbf{w};z,z') = \ell(sign (y - y')h_{\mathbf{w}}(x,x'))$, where $sign(x)$ is the sign of $x$. Common choices of the surrogate loss $\ell$ include the hinge loss and the logistic loss \cite{mohri2012foundations}.

\item \textbf{Metric learning.}
Let's consider the supervised metric learning with the label space $\mathcal{Y}=\{ -1,+1\}$. Under this setting, we want to learn a distance metric function $h_{\mathbf{w}}(x,x') = \langle \mathbf{w}, (x-x')(x-x')^T \rangle$ such that  a pair $(x, x')$ of inputs from the same class $(y = y')$ are close to
each other while a pair from different classes $(y \neq y')$ have a large distance $h_{\mathbf{w}}(x, x')$ \cite{lei2021generalization}, where $x^T$ denotes the transpose of $x \in \mathbb{R}^d$ and $\mathbf{w} \in \mathbb{R}^{d\times d}$. Similarly, considering the convex surrogate loss function $\ell$, a common choice of the loss function in supervised metric learning is of the form $f(\mathbf{w};z,z') = \ell(yy'(1-h_{\mathbf{w}}(x,x')))$ \cite{jin2009regularized,lei2021generalization}. Moreover, 
one can refer to \cite{li2021sharper} for examples of unsupervised metric learning, where the authors study the similarity-based clustering learning under the framework of pairwise learning.
\end{itemize}

\subsection{Uniform Convergence of Gradients}
Uniform convergence of gradients measures the deviation between the population gradients $\nabla F$ and the empirical gradients $\nabla F_S$, where $\nabla$ denotes the  gradient operator. In this subsection, we aim to provide improved uniform convergence of gradients than the associated one in \cite{lei2021generalization}.
Before providing the main theorems, we first introduce a crucial assumption.
\begin{assumption}\label{assum1}
For all $\mathbf{w}_1, \mathbf{w}_2 \in \mathcal{W}$, we assume that $\frac{\nabla f(\mathbf{w}_1;z,z') - \nabla f(\mathbf{w}_2;z,z')}{\| \mathbf{w}_1 - \mathbf{w}_2\|}$ is a $\gamma$-sub-exponential random vector. That is there exists $\gamma > 0$ such that for any unit vector $\mathbf{u} \in B(0,1)$ and $\mathbf{w}_1, \mathbf{w}_2 \in \mathcal{W}$,
\begin{align*}
\mathbb{E}\Big \{ \exp \Big(\frac{|\mathbf{u}^T (\nabla f (\mathbf{w}_1;z,z') - \nabla f (\mathbf{w}_2;z,z'))|}{\gamma \|\mathbf{w}_1 - \mathbf{w}_2 \| } \Big) \Big \} \leq 2.
\end{align*}
\end{assumption}
\begin{remark}\rm{}\label{remark1}
This assumption is stronger than the smoothness of the population risk, but much milder than the uniform smoothness condition (Assumption \ref{assum4}). Please refer to Section \ref{section4.1} for the proof.
\end{remark}
Based on Assumption \ref{assum1}, we have the first theorem on uniform convergence of gradients.
\begin{theorem}\label{theorem1}
Suppose Assumption \ref{assum1} holds.
Then for any $ \delta \in (0,1)$, with probability $1-\delta$, for all $\mathbf{w} \in \mathcal{W}$, we have
\begin{multline*}
\hphantom{{}={}}\left \| (\nabla F (\mathbf{w} )-\nabla F_S(\mathbf{w}))  -  (\nabla F (\mathbf{w}^{\ast})-\nabla F_S(\mathbf{w}^{\ast})) \right \| \\
\leq c\gamma \max \Big\{\|\mathbf{w} - \mathbf{w}^{\ast}\|  ,\frac{1}{n}\Big\} 
\Big( \sqrt{\frac{d+\log \frac{4 \log_2(\sqrt{2}R_1 n + 1)}{\delta}}{n}} \\+ \frac{d+\log \frac{4 \log_2(\sqrt{2}R_1 n + 1)}{\delta}}{n} \Big ),
\end{multline*}
where $c$ is an absolute constant and $\mathbf{w}^{\ast} \in \arg \min_{\mathcal{W}} F(\mathbf{w})$.
\end{theorem}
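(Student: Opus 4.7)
The plan is to view the left-hand side as a centered, localized U-process in $\mathbf{w}$, concentrate its norm at a single point via a pairwise Bernstein-type inequality, and then uniformize over $\mathcal{W}$ by a peeling argument over distance-from-$\mathbf{w}^{\ast}$ shells. Define the kernel
\[ h_{\mathbf{w}}(z,z') := \nabla f(\mathbf{w}; z, z') - \nabla f(\mathbf{w}^{\ast}; z, z'), \]
and the target quantity
\[ \Xi(\mathbf{w}) := \mathbb{E}_{z,z'}[h_{\mathbf{w}}(z,z')] - \tfrac{1}{n(n-1)} \sum_{i \neq j} h_{\mathbf{w}}(z_i, z_j), \]
whose norm is exactly what we need to bound. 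By Assumption \ref{assum1}, for every unit vector $\mathbf{u}$, the scalar random variable $\mathbf{u}^{T} h_{\mathbf{w}}(z,z')$ is sub-exponential with $\psi_1$-norm at most $c\gamma \|\mathbf{w}-\mathbf{w}^{\ast}\|$ for an absolute constant $c$; this is the single quantitative input that the whole proof will consume.

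For fixed $\mathbf{w}$, I would first perform a Hoeffding decomposition $h_{\mathbf{w}}(z,z') - \mathbb{E}h_{\mathbf{w}} = h^{(1)}_{\mathbf{w}}(z) + h^{(1)}_{\mathbf{w}}(z') + h^{(2)}_{\mathbf{w}}(z,z')$ with $\mathbb{E}[h^{(2)}_{\mathbf{w}}(z,\cdot)\mid z]=0$. The linear term is an i.i.d.\ sum and contributes a deviation of order $\gamma \|\mathbf{w}-\mathbf{w}^{\ast}\|\sqrt{t/n}$ by classical Bernstein, while the degenerate term is a canonical second-order U-statistic and contributes a deviation of order $\gamma \|\mathbf{w}-\mathbf{w}^{\ast}\|\, t/n$ via the generalized pairwise Bernstein inequality that the paper announces in Appendix \ref{appendix1}. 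To pass from this scalar bound on $\mathbf{u}^{T}\Xi(\mathbf{w})$ to a bound on $\|\Xi(\mathbf{w})\|$, I would replace the supremum over the unit ball by a supremum over a $1/2$-net $N_{1/2} \subset B(0,1)$ of cardinality at most $5^{d}$, which inflates the deviation parameter $t$ by an additive $d\log 5$.

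To uniformize in $\mathbf{w}$, I would use peeling. Partition the parameter space into geometric shells $\mathcal{W}_k := \{\mathbf{w}\in\mathcal{W}:\, 2^{-k}R_1 < \|\mathbf{w}-\mathbf{w}^{\ast}\|\leq 2^{-k+1}R_1\}$ for $k=0,1,\dots,K$ with $K=\lceil\log_2(\sqrt{2}R_1 n)\rceil$; the inner ball $\{\|\mathbf{w}-\mathbf{w}^{\ast}\|\leq 1/n\}$ is absorbed by the $\max\{\cdot,\,1/n\}$ floor, which is precisely why the localization radius in the statement clips at $1/n$. Inside each shell I would lay down an $\epsilon_k$-net with $\epsilon_k \asymp 2^{-k}R_1$ of cardinality at most $e^{c'd}$, apply the single-point bound above at each net point with failure probability $\delta/(K\cdot|\text{net}_k|)$, and then use Assumption \ref{assum1} a second time to control the gradient increment between a generic $\mathbf{w}\in\mathcal{W}_k$ and its nearest net point (since Assumption \ref{assum1} is a quantitative continuity statement on the gradient map). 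A union bound over the $K{+}1$ shells and their nets absorbs the $d\log 5$ and $\log(K/\delta)$ contributions into the $d+\log\!\big(4\log_2(\sqrt{2}R_1 n+1)/\delta\big)$ of the stated inequality.

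The main obstacle is the uniform control of the \emph{degenerate} component of the U-process, since the standard chaining and Talagrand machinery developed for i.i.d.\ empirical processes does not transfer directly to degenerate U-statistics. This is exactly where the generalized pairwise Bernstein inequality from Appendix \ref{appendix1} must do the real work: it delivers a mixed $\sqrt{t/n}+t/n$ sub-exponential tail whose scale is proportional to the sub-exponential norm of the kernel, hence to $\|\mathbf{w}-\mathbf{w}^{\ast}\|$ rather than to a worst-case gradient bound over $\mathcal{W}$. Without this localization-aware tail, the peeling argument would collapse and one would only recover a global (non-localized) uniform deviation, losing the factor $\max\{\|\mathbf{w}-\mathbf{w}^{\ast}\|,\,1/n\}$ that is essential for the subsequent ERM, GD, and SGD analyses.
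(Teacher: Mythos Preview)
Your overall architecture---localize in $\|\mathbf{w}-\mathbf{w}^{\ast}\|$ via peeling, use a pairwise Bernstein to get the mixed $\sqrt{t/n}+t/n$ tail, and pay a $d$-factor to pass from scalar to vector---matches the paper's strategy in spirit. But there is a genuine gap in your discretization step.

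You propose, inside each shell, to use a \emph{single-scale} $\epsilon_k$-net and then ``use Assumption~\ref{assum1} a second time to control the gradient increment between a generic $\mathbf{w}\in\mathcal{W}_k$ and its nearest net point.'' This does not work under Assumption~\ref{assum1} alone. Assumption~\ref{assum1} only says that $(\nabla f(\mathbf{w}_1;z,z')-\nabla f(\mathbf{w}_2;z,z'))/\|\mathbf{w}_1-\mathbf{w}_2\|$ is $\gamma$-sub-exponential; it gives a \emph{tail bound} on increments, not an almost-sure Lipschitz bound. Consequently, for each fixed pair $(\mathbf{w},\pi(\mathbf{w}))$ you can concentrate $\|\Xi(\mathbf{w})-\Xi(\pi(\mathbf{w}))\|$, but you cannot control $\sup_{\mathbf{w}\in\mathcal{W}_k}\|\Xi(\mathbf{w})-\Xi(\pi(\mathbf{w}))\|$ over the uncountable shell by a union bound over the net. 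A one-step net argument would work under the stronger Assumption~\ref{assum4} (uniform smoothness, which gives a deterministic Lipschitz increment), but Theorem~\ref{theorem1} is stated only under Assumption~\ref{assum1}. Under sub-exponential increments, a multi-scale argument---chaining---is unavoidable.

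The paper resolves this by a different route. It lifts the problem to the product space $\mathcal{W}\times\mathcal{V}$, defining $g_{(\mathbf{w},\mathbf{v})}(z,z')=(\nabla f(\mathbf{w};z,z')-\nabla f(\mathbf{w}^{\ast};z,z'))^{T}\mathbf{v}$, and shows that on each ball $B(\sqrt{r})$ the process $(P-P_n)g_{(\mathbf{w},\mathbf{v})}$ has \emph{mixed sub-Gaussian--sub-exponential increments} in the product metric (this is where the pairwise Bernstein of Lemma~\ref{lemma5} enters, applied to increments). It then invokes Dirksen's generic chaining for mixed-tail processes (Lemma~\ref{lemma2}) together with Dudley's integral bound on the $\gamma_1,\gamma_2$ functionals to obtain, for each fixed $r$, the uniform bound $C\gamma r\big(\sqrt{(d+\log(1/\delta))/n}+(d+\log(1/\delta))/n\big)$. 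The peeling you describe is then carried out not by hand over shells but by applying the uniform localized convergence Lemma~\ref{lemma1} with $T(f)=\|\mathbf{w}-\mathbf{w}^{\ast}\|^2+\|\mathbf{v}\|^2$ and $r_0=2/n^2$; this is exactly what produces the $\max\{\|\mathbf{w}-\mathbf{w}^{\ast}\|,1/n\}$ and the $\log_2(\sqrt{2}R_1 n+1)$ factor. Finally a specific choice of $\mathbf{v}$ converts the scalar bound back into the norm bound. A side remark: your Hoeffding decomposition is unnecessary here---Lemma~\ref{lemma5} delivers the mixed tail directly via the permutation-averaging representation of the U-statistic, without separating linear and degenerate parts.
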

\begin{remark}\rm{}
Uniform convergence of gradients is firstly studied in convex learning \cite{zhang2017empirical,zhang2019stochastic}. Recently, uniform convergence of gradients of nonconvex learning is also proposed based on different techniques. Specifically, \cite{mei2018the} is based on covering numbers, \cite{foster2018uniform} is based on a chain rule for vector-valued Rademacher complexity, \cite{lei2021learning} is based on Rademacher chaos complexity, \cite{davis2021graphical} is based on the gradient of the Moreau envelops, and \cite{xu2020toward} is based on a novel uniform localized convergence technique. However, the above-mentioned works are limited to the pointwise learning case. In Theorem \ref{theorem1}, we present the uniform convergence of gradients for the more complex pairwise learning. 
As discussed in Section \ref{Section2}, a key difference between pointwise learning and pairwise learning is that the gradient of the empirical risk in pairwise learning (see (\ref{ineq345})) involves $\mathcal{O}(n^2)$ dependent terms, which makes the proof of Theorem \ref{theorem1} more challenging. 
\end{remark}

We now introduce a Bernstein condition at the optimal point, based on which we will show Theorem \ref{theorem2}.
\begin{assumption}\label{assum2}
The gradient at $\mathbf{w}^{\ast}$ satisfies the Bernstein condition, i.e., there exists $D_{\ast} >0$ such that for all $2 \leq k \leq n$,
\begin{align*}
\mathbb{E}[\| \nabla f(\mathbf{w}^{\ast};z,z') \|^k] \leq \frac{k!}{2}\mathbb{E}[\| \nabla f(\mathbf{w}^{\ast};z,z') \|^2] D_{\ast}^{k-2}.
\end{align*}
\end{assumption}
\begin{remark}\rm{}
Assumption \ref{assum2} is pretty mild since $D_{\ast} >0$ only depends on gradients at $\mathbf{w}^{\ast}$. Moreover,
the Bernstein condition is milder than the bounded assumption of random variables and is also satisfied by various unbounded variables \cite{wainwright2019high}.
Please refer to \cite{wainwright2019high} for more discussions on this assumption.
\end{remark}
\begin{theorem}\label{theorem2}
Suppose Assumptions \ref{assum1} and \ref{assum2} hold. For any $\delta>0$, with probability at least $1-\delta$, for all $\mathbf{w} \in \mathcal{W}$, we have
\begin{multline*}
\|\nabla F (\mathbf{w} )-\nabla F_S(\mathbf{w})  \| 
\leq  c\gamma \max \Big\{\|\mathbf{w} - \mathbf{w}^{\ast}\|  ,\frac{1}{n}\Big\} \\
\times \Big( \sqrt{\frac{d+\log \frac{8 \log_2(\sqrt{2}R_1 n + 1)}{\delta}}{n}} + \frac{d+\log \frac{8 \log_2(\sqrt{2}R_1 n + 1)}{\delta}}{n}  \Big) \\
+\frac{4D_{\ast} \log \frac{4}{\delta}}{n} + \sqrt{\frac{8\mathbb{E}[\| \nabla f(\mathbf{w}^{\ast};z,z')  \|^2] \log \frac{4}{\delta}}{n}},
\end{multline*}
where $c$ is an absolute constant and $\mathbf{w}^{\ast} \in \arg \min_{\mathbf{w} \in \mathcal{W}} F(\mathbf{w})$.
\end{theorem}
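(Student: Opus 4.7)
The natural plan is to pivot the analysis at the minimizer $\mathbf{w}^{\ast}$ and then use Theorem \ref{theorem1} together with a pairwise Bernstein inequality. Concretely, I would write, for any $\mathbf{w}\in\mathcal{W}$,
\begin{align*}
\nabla F(\mathbf{w})-\nabla F_S(\mathbf{w}) &= \bigl[(\nabla F(\mathbf{w})-\nabla F_S(\mathbf{w}))-(\nabla F(\mathbf{w}^{\ast})-\nabla F_S(\mathbf{w}^{\ast}))\bigr] \\
&\quad + \bigl[\nabla F(\mathbf{w}^{\ast})-\nabla F_S(\mathbf{w}^{\ast})\bigr],
\end{align*}
apply the triangle inequality, and control the two brackets by independent events, each of probability $1-\delta/2$, before concluding by a union bound. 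This immediately explains the replacement of $\log(4\log_2(\sqrt{2}R_1 n +1)/\delta)$ in Theorem \ref{theorem1} by $\log(8\log_2(\sqrt{2}R_1 n+1)/\delta)$ in Theorem \ref{theorem2}, and the appearance of a $\log(4/\delta)$ term.

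For the first bracket, Assumption \ref{assum1} puts us directly in the hypotheses of Theorem \ref{theorem1}. Invoking it with $\delta$ replaced by $\delta/2$ yields the uniform $c\gamma\max\{\|\mathbf{w}-\mathbf{w}^{\ast}\|,1/n\}$ factor multiplied by the usual $\sqrt{(d+\iota)/n}+(d+\iota)/n$ rate (with $\iota=\log(8\log_2(\sqrt{2}R_1 n+1)/\delta)$). No new ideas are needed here; the work has already been done in Theorem \ref{theorem1}.

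The second bracket is where the real work lies. The quantity $\nabla F_S(\mathbf{w}^{\ast})-\nabla F(\mathbf{w}^{\ast})$ is a centered, vector-valued $U$-statistic of order two with kernel $\nabla f(\mathbf{w}^{\ast};z,z')$, whose moments satisfy the Bernstein condition of Assumption \ref{assum2}. I would apply the pairwise (vector-valued) Bernstein inequality developed in Appendix \ref{appendix1}, which in the notation of the theorem gives
\begin{align*}
\bigl\|\nabla F(\mathbf{w}^{\ast})-\nabla F_S(\mathbf{w}^{\ast})\bigr\| \leq \sqrt{\frac{8\,\mathbb{E}\|\nabla f(\mathbf{w}^{\ast};z,z')\|^2\,\log(4/\delta)}{n}}+\frac{4 D_{\ast}\log(4/\delta)}{n}
\end{align*}
with probability at least $1-\delta/2$. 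The dependency among the $n(n-1)$ kernel evaluations is the substantive obstacle: the standard i.i.d.\ vector Bernstein bound does not apply directly. The customary workaround, which I expect the appendix to formalize, is Hoeffding's decomposition (or the equivalent representation of $U_n$ as an average over permutations of sums of $\lfloor n/2\rfloor$ i.i.d.\ kernel values), which decouples the statistic into an i.i.d.\ part amenable to vector Bernstein plus a degenerate remainder that is handled separately; translating the scalar Bernstein condition of Assumption \ref{assum2} into a tight MGF bound on these decoupled vector sums is the only non-mechanical step.

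Combining the two high-probability bounds by the union bound produces exactly the inequality stated in Theorem \ref{theorem2}. The main technical hurdle, to emphasize, is the pairwise Bernstein control of the centered gradient $U$-statistic at $\mathbf{w}^{\ast}$; once that tool is in hand the rest of the proof is a clean triangle-inequality splitting backed by Theorem \ref{theorem1}.
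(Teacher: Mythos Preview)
Your proposal is correct and matches the paper's proof essentially step for step: triangle-inequality pivot at $\mathbf{w}^{\ast}$, Theorem~\ref{theorem1} with $\delta/2$ for the first bracket, and the pairwise vector Bernstein inequality (Lemma~\ref{lemma7}) for the second, combined by a union bound. The only minor deviation is that the paper's Lemma~\ref{lemma7} handles the $U$-statistic via the permutation representation plus convexity of the norm alone, with no degenerate remainder to treat separately.
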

\begin{remark}\rm{}
There is only one uniform convergence of gradients for pairwise learning, developed in \cite{lei2021generalization}. 
We now compare our uniform convergence of gradients with \cite{lei2021generalization}. Under uniformly smooth assumption (Assumption \ref{assum4}), \cite{lei2021generalization} shows that with probability at least $1-\delta$
\begin{align}\label{ineq1}
&\sup_{\mathbf{w}\in B(0,R)} \|  \nabla F (\mathbf{w} )-\nabla F_S(\mathbf{w}) \| \\\nonumber
\leq &\frac{c(\beta R + b)}{\sqrt{n}} \Big(  2+\sqrt{96e(\log 2 +d\log(3e))} + \sqrt{\log(1/\delta)}\Big),
\end{align}
where $b = \sup_{z,z' \in \mathcal{Z}}\| \nabla f(0;z,z') \|$. Compared with (\ref{ineq1}), we successfully relax the uniform smoothness assumption to a milder Assumptions \ref{assum1}. 
Moreover, the factor in (\ref{ineq1}) is $c(\beta R+b)$, while in Theorem \ref{theorem2} is $c\gamma \max \{\|\mathbf{w} - \mathbf{w}^{\ast}\|  ,\frac{1}{n}\}$, not involving a term $\sup_{z,z' \in \mathcal{Z}}\| \nabla f(0;z,z') \|$ that may be very large. And we emphasize that it is the construction of the factor that allows us to derive improved learning rates when considering Assumption \ref{assum3}. 
\end{remark}

In the following, we further provide an improved uniform convergence of gradients when the PL curvature condition (gradient dominance condition) is satisfied.
\begin{assumption}\label{assum3}
Fix a set $\mathcal{W}$. For any function $f: \mathcal{W} \mapsto \mathbb{R}$, let $f^{\ast} = \min_{\mathbf{w} \in \mathcal{W}} f(\mathbf{w})$. $f$ satisfies the Polyak-{\L}ojasiewicz (PL) condition with parameter $\mu > 0$ on $\mathcal{W}$ if 
\begin{align*}
f (\mathbf{w}) - f^{\ast} \leq \frac{1}{2 \mu} \| \nabla f(\mathbf{w}) \|^2, \quad  \forall \mathbf{w} \in \mathcal{W}.
\end{align*}
\end{assumption}
\begin{remark}\rm{}
PL condition is also referred to as ``gradient dominance condition'' \cite{foster2018uniform}. This condition means that the suboptimality of function values
can be bounded by the squared magnitude of gradients, which
can be used to bound how far away the nearest minimizer is in terms of the optimality gap. It is one of the weakest curvature conditions and is widely employed in nonconvex learning \cite{xu2020toward,lei2021sharper,lei2021generalization,reddi2016stochastic,zhou2018generalization,karimi2016linear,charles2018stability,lei2021learning}, to mention but a few. Under suitable assumptions on the input, many popular nonconvex objective functions satisfy PL condition, including neural networks with one hidden 
layer \cite{li2017convergence}, ResNets with linear activations \cite{hardt2016identity}, robust regression \cite{liu2016quadratic}, linear dynamical systems \cite{hardt2016gradient}, matrix factorization \cite{liu2016quadratic},  phase
retrieval \cite{sun2018geometric}, blind deconvolution \cite{li2019rapid}, mixture of two Gaussians \cite{balakrishnan2017statistical}, etc.
\end{remark}
\begin{theorem}\label{theorem3}
Assume Assumptions \ref{assum1} and \ref{assum2} hold.
Suppose the population risk $F$ satisfies Assumption \ref{assum3} with parameter $\mu$. Then for any $\delta >0 $, when $n \geq \frac{c\gamma^2(d+ \log \frac{8 \log_2(\sqrt{2}R_1 n + 1)}{\delta})}{\mu^2}$, with probability at least $1- \delta$, for all $\mathbf{w} \in \mathcal{W}$, we have
\begin{align}\label{inqe11}\nonumber
&\hphantom{{}={}}\left\| \nabla F (\mathbf{w} )- \nabla F_S(\mathbf{w}) \right\|\leq \left\| \nabla F_S(\mathbf{w}) \right\| +  \frac{\mu}{n}\\
&  + \frac{8D_{\ast}\log(4/\delta)}{n} + 4\sqrt{\frac{2 \mathbb{E} [ \| \nabla f(\mathbf{w}^{\ast};z,z') \|^2 ] \log(4/\delta)}{n}}.
\end{align}
\end{theorem}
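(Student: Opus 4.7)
The starting point is Theorem~\ref{theorem2}, which under Assumptions~\ref{assum1}--\ref{assum2} already gives, with probability $1-\delta$ and uniformly in $\mathbf{w}\in\mathcal{W}$,
\begin{equation}\label{eq:plan-thm2}
\|\nabla F(\mathbf{w})-\nabla F_S(\mathbf{w})\| \le E_n\max\!\bigl\{\|\mathbf{w}-\mathbf{w}^{\ast}\|,\,1/n\bigr\}+R_n,
\end{equation}
where $E_n:=c\gamma\bigl(\sqrt{(d+L_\delta)/n}+(d+L_\delta)/n\bigr)$ with $L_\delta:=\log(8\log_2(\sqrt{2}R_1 n+1)/\delta)$, and $R_n:=\frac{4D_{\ast}\log(4/\delta)}{n}+\sqrt{8\mathbb{E}[\|\nabla f(\mathbf{w}^{\ast};z,z')\|^{2}]\log(4/\delta)/n}$. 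Two moves then turn \eqref{eq:plan-thm2} into the target bound \eqref{inqe11}: a PL-based replacement of $\|\mathbf{w}-\mathbf{w}^{\ast}\|$ by $\|\nabla F(\mathbf{w})\|/\mu$, followed by a self-bounding step that pushes $\|\nabla F(\mathbf{w})\|$ across to the empirical gradient $\|\nabla F_S(\mathbf{w})\|$ via the triangle inequality.

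For the first move, Assumption~\ref{assum1} forces $\nabla F$ to be Lipschitz (by Jensen applied to the sub-exponential tail, cf.\ Remark~\ref{remark1}), hence $F$ is smooth. Smoothness combined with the $\mu$-PL inequality of Assumption~\ref{assum3} triggers the standard PL~$\Rightarrow$~quadratic-growth implication (Karimi--Nutini--Schmidt): $F(\mathbf{w})-F^{\ast}\ge\frac{\mu}{2}\|\mathbf{w}-\mathbf{w}^{\ast}\|^{2}$ when $\mathbf{w}^{\ast}$ is taken as the projection of $\mathbf{w}$ onto $\arg\min F$. Chaining with PL then yields the key inequality $\|\mathbf{w}-\mathbf{w}^{\ast}\|\le\|\nabla F(\mathbf{w})\|/\mu$. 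The sample-size threshold $n\ge c\gamma^{2}(d+L_\delta)/\mu^{2}$ is calibrated, by inflating the constant $c$ if necessary, so that $E_n\le\mu/2$.

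Plugging these into \eqref{eq:plan-thm2} and using $\max\{a,b\}\le a+b$ gives
\begin{equation*}
\|\nabla F(\mathbf{w})-\nabla F_S(\mathbf{w})\|\le\tfrac{E_n}{\mu}\|\nabla F(\mathbf{w})\|+\tfrac{E_n}{n}+R_n\le\tfrac{1}{2}\|\nabla F(\mathbf{w})\|+\tfrac{\mu}{2n}+R_n.
\end{equation*}
Applying $\|\nabla F(\mathbf{w})\|\le\|\nabla F_S(\mathbf{w})\|+\|\nabla F(\mathbf{w})-\nabla F_S(\mathbf{w})\|$ on the right and absorbing the resulting $\tfrac12\|\nabla F-\nabla F_S\|$ into the left-hand side yields
\begin{equation*}
\|\nabla F(\mathbf{w})-\nabla F_S(\mathbf{w})\|\le\|\nabla F_S(\mathbf{w})\|+\tfrac{\mu}{n}+2R_n,
\end{equation*}
and $2R_n=\tfrac{8D_{\ast}\log(4/\delta)}{n}+4\sqrt{2\mathbb{E}[\|\nabla f(\mathbf{w}^{\ast};z,z')\|^{2}]\log(4/\delta)/n}$ is exactly the noise appearing in \eqref{inqe11}.

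The main obstacle I expect is the PL-to-quadratic-growth passage: Assumption~\ref{assum3} by itself controls only the function-value gap, while the right-hand side of \eqref{eq:plan-thm2} is expressed in terms of the distance $\|\mathbf{w}-\mathbf{w}^{\ast}\|$. Justifying the conversion requires the smoothness of $F$ (which Assumption~\ref{assum1} delivers only implicitly, via its sub-exponential tail) together with a careful choice of $\mathbf{w}^{\ast}$ as the projection onto $\arg\min F$. Everything else is algebraic bookkeeping: tuning the constant $c$ in the sample-size condition so that the coefficient $E_n/\mu$ is at most $1/2$, and tracking the doubling of $R_n$ that produces the exact constants $8D_{\ast}$ and $4\sqrt{2}$ in \eqref{inqe11}.
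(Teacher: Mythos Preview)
Your proposal is correct and follows essentially the same route as the paper: invoke Theorem~\ref{theorem2}, convert $\|\mathbf{w}-\mathbf{w}^{\ast}\|$ into $\|\nabla F(\mathbf{w})\|/\mu$ via the PL/error-bound implication of Karimi--Nutini--Schmidt (using that Assumption~\ref{assum1} makes $F$ smooth), calibrate the sample-size threshold so that the coefficient $E_n$ drops below $\mu/2$ (the paper also uses $\mu/\gamma\le 1$ here, which you may want to note explicitly), and then self-bound via the triangle inequality to land on $\|\nabla F_S(\mathbf{w})\|+\mu/n+2R_n$. The only cosmetic difference is that the paper first solves for $\|\mathbf{w}-\mathbf{w}^{\ast}\|$ in terms of $\|\nabla F_S(\mathbf{w})\|$ and then substitutes back, whereas you carry $\|\nabla F(\mathbf{w})\|$ one step further and absorb $\tfrac{1}{2}\|\nabla F-\nabla F_S\|$ directly; the arithmetic and final constants coincide.
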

\begin{remark}\rm{}\label{ekurh948}
It is clear that (\ref{inqe11}) implies 
\begin{align}\label{iqe23}\nonumber
&\hphantom{{}={}}\|  \nabla F(\mathbf{w})  \| \leq 2\left\| \nabla F_S(\mathbf{w}) \right\| +  \frac{\mu}{n}\\
& + \frac{8D_{\ast}\log(4/\delta)}{n} + 4\sqrt{\frac{2 \mathbb{E} [ \| \nabla f(\mathbf{w}^{\ast};z,z') \|^2 ] \log(4/\delta)}{n}}.
\end{align}
Typically, we call $\| \nabla F_S(\mathbf{w}) \|^2$ the optimization error and $\| \nabla F_S(\mathbf{w})-\nabla F(\mathbf{w}) \|^2$ the statistical error (or generalization error) \cite{lei2021learning}, since the former is related to the optimization algorithm to optimize $F_S$, and the latter is related to approximating the true gradient with its empirical form.
In Theorem \ref{theorem3}, $ \left\| \nabla F_S(\mathbf{w}) \right\|$ can be tiny since 
the optimization algorithms, such as GD and SGD, can optimize it to be small enough. $\mathbb{E} [ \| \nabla f(\mathbf{w}^{\ast};z,z') \|^2 ]$ is also tiny since it depends on the gradient on the optima $\mathbf{w}^{\ast}$ and involves an expectation operator \cite{zhang2017empirical,lei2021generalization,liu2018fast,lei2020fine}.  Therefore, compared with Theorems \ref{theorem1} and \ref{theorem2}, and (\ref{ineq1}), this uniform convergence of gradients is clearly tighter. Moreover, the fact that our established convergence of gradients scales tightly with the optimal parameter, i.e., the gradient norms at the optima $\mathbf{w}^{\ast}$, largely contributes to derive faster $\mathcal{O}(1/n^2)$ rates of this paper, which is a remarkable advance compared to (\ref{ineq1}). The appearance of $\mathbb{E} [ \| \nabla f(\mathbf{w}^{\ast};z,z') \|^2 ]$ requires technical analysis. Additionally, an obvious shortcoming of uniform convergence is that it often implies learning rates with a square-root dependency on the dimension $d$ when considering general problems \cite{feldman2016generalization}, as shown in (\ref{ineq1}), and Theorems \ref{theorem1} and \ref{theorem2}. Another distinctive improvement of Theorem \ref{theorem3} is that we successfully remove the dimension $d$ when the population risk $F$ satisfies the PL condition and the sample size $n$ is large enough. Based on Theorem \ref{theorem3}, we will provide dimension-independent learning rates for ERM, GD, and SGD. In addition to these algorithms, the uniform convergence of gradients in this paper can be employed to study other optimization algorithms, such as variance reduction variants and  momentum-based optimization algorithms \cite{nemirovski2008robust}, which would also be very interesting. 
\end{remark}
\subsection{Empirical Risk Minimizer}\label{section3.3}
Generalization performance means the generalization behavior of the trained model on testing examples.
Let $\mathbf{w}(S)$ be the learned model produced by some algorithms on the training set $S$. In Section \ref{section3.3}-\ref{section3.5}, we first consider the general nonconvex learning problems and present the learning rate for the gradient norm of the population risk, i.e., $\|  \nabla F(\mathbf{w}(S)) \|$. After that, we study the noconvex learning with the PL condition and provide learning rates for the generalization performance gap
$F(\mathbf{w}(S)) - F(\mathbf{w}^{\ast})$, where $\mathbf{w}^{\ast} = \arg \min_{\mathbf{w} \in \mathcal{W}} F(\mathbf{w})$.
In this section,
we consider the ERM problem.
In ERM, we focus on the optima $\hat{\mathbf{w}}^{\ast}$ of the empirical risk $F_S$, i.e., $\hat{\mathbf{w}}^{\ast} \in \arg \min_{\mathbf{w} \in \mathcal{W}} F_S(\mathbf{w})$. 
\begin{theorem}\label{theorem4}
Suppose the empirical risk minimizers $\hat{\mathbf{w}}^{\ast}$ exists. Assume Assumptions \ref{assum1} and \ref{assum2} hold. For any $\delta \in (0,1)$, with probability at least $1-\delta$, we have
\begin{align*}
\|  \nabla F(\hat{\mathbf{w}}^{\ast}) \| = \mathcal{O} \Big( \sqrt{\frac{d+\log \frac{\log n}{\delta}}{n}}   \Big).
\end{align*}
\end{theorem}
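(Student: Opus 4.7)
The plan is to reduce the bound on $\|\nabla F(\hat{\mathbf{w}}^{\ast})\|$ to a direct instance of the uniform convergence of gradients established in Theorem \ref{theorem2}. The empirical risk minimizer $\hat{\mathbf{w}}^{\ast}$ satisfies the first-order optimality condition $\nabla F_S(\hat{\mathbf{w}}^{\ast}) = 0$ (either because the minimum is attained in the interior of $\mathcal{W}$, or after interpreting the gradient in the constrained sense). Consequently,
\begin{equation*}
\|\nabla F(\hat{\mathbf{w}}^{\ast})\| = \|\nabla F(\hat{\mathbf{w}}^{\ast}) - \nabla F_S(\hat{\mathbf{w}}^{\ast})\|,
\end{equation*}
so the statement follows if we can control the population/empirical gradient gap at the single (data-dependent) parameter $\hat{\mathbf{w}}^{\ast}$.

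The key step is to invoke Theorem \ref{theorem2}, which is a \emph{uniform} bound over $\mathbf{w} \in \mathcal{W}$, at the specific point $\mathbf{w} = \hat{\mathbf{w}}^{\ast}$. Because $\hat{\mathbf{w}}^{\ast} \in \mathcal{W} \subseteq B(\mathbf{w}^{\ast}, R_1)$ by the standing setting, we automatically have $\|\hat{\mathbf{w}}^{\ast} - \mathbf{w}^{\ast}\| \leq R_1$, a constant independent of $n$ and $d$. Plugging this into the bound of Theorem \ref{theorem2} yields, with probability at least $1-\delta$,
\begin{equation*}
\|\nabla F(\hat{\mathbf{w}}^{\ast})\| \leq c\gamma R_1\!\left(\!\sqrt{\tfrac{d + \log \frac{8\log_2(\sqrt{2}R_1 n+1)}{\delta}}{n}} + \tfrac{d + \log \frac{8\log_2(\sqrt{2}R_1 n+1)}{\delta}}{n}\!\right) + \tfrac{4D_{\ast}\log(4/\delta)}{n} + \sqrt{\tfrac{8\mathbb{E}[\|\nabla f(\mathbf{w}^{\ast};z,z')\|^2]\log(4/\delta)}{n}}.
\end{equation*}
The second moment $\mathbb{E}[\|\nabla f(\mathbf{w}^{\ast};z,z')\|^2]$ is finite by Assumption \ref{assum2} taken at $k=2$, so it contributes only an $O(n^{-1/2})$ term that is dominated by the leading $\sqrt{(d+\log\log n/\delta)/n}$ factor; similarly, the $D_{\ast}/n$ and $d/n$ contributions are of strictly lower order. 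Simplifying $\log \log_2(\sqrt{2}R_1 n + 1) = O(\log\log n)$ yields the claimed $\mathcal{O}\big(\sqrt{(d + \log(\log n/\delta))/n}\big)$ rate.

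The main obstacle is the first step: justifying $\nabla F_S(\hat{\mathbf{w}}^{\ast}) = 0$ when $\mathcal{W}$ is a constrained parameter set. Standard remedies are to assume the empirical minimizer lies in the interior (a conventional assumption for nonconvex ERM analyses) or to refine the argument using the projected/proximal first-order condition, which forces the component of $\nabla F_S(\hat{\mathbf{w}}^{\ast})$ tangent to $\mathcal{W}$ to vanish and then bounds only the normal component via Theorem \ref{theorem2}. Either way, the heavy lifting is done by Theorem \ref{theorem2}; Theorem \ref{theorem4} is essentially its specialization to the ERM iterate, with the constant diameter $R_1$ absorbing the $\max\{\|\hat{\mathbf{w}}^{\ast}-\mathbf{w}^{\ast}\|, 1/n\}$ factor.
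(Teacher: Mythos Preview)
Your proposal is correct and follows essentially the same approach as the paper: invoke Theorem~\ref{theorem2} at $\hat{\mathbf{w}}^{\ast}$, use $\nabla F_S(\hat{\mathbf{w}}^{\ast})=0$, and bound $\max\{\|\hat{\mathbf{w}}^{\ast}-\mathbf{w}^{\ast}\|,1/n\}$ by the constant $R_1+1/n$. The paper simply asserts $\nabla F_S(\hat{\mathbf{w}}^{\ast})=0$ without discussing the interior/boundary issue you raise, so your treatment is in fact slightly more careful.
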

\begin{remark}\rm{}
When Assumption \ref{assum1} and \ref{assum2} hold, Theorem \ref{theorem4} shows that the learning rate of $\|  \nabla F(\hat{\mathbf{w}}^{\ast}) \|$ is of order $\mathcal{O} ( \sqrt{\frac{d+\log \frac{1}{\delta}}{n}}   )$ ($\log n$ is small and can be ignored typically). Note that this bound does not require the uniform smoothness condition (Assumption \ref{assum4}). Although it is hard to find $\hat{\mathbf{w}}^{\ast}$ in nonconvex learning, this learning rate is meaningful by assuming the ERM has been found. Moreover, this learning rate may be comparable to the classical one $\mathcal{O}(\sqrt{\frac{d \log n \log(d/\delta) }{n}})$ in the stochastic convex optimization \cite{shalev-shwartz2009stochastic}, without requiring the convexity condition.
\end{remark}
\begin{theorem}\label{theorem5}
Suppose Assumptions \ref{assum1} and \ref{assum2} hold, and the population risk $F(\mathbf{w})$ statisfies Assumption \ref{assum3} with parameter $\mu$. For any $\delta \in (0,1)$, with probability at least $1-\delta$, when $n \geq \frac{c\gamma^2(d+ \log(\frac{8 \log(\sqrt{2}n R_1 +1)}{\delta}))}{\mu^2}$, we have
\begin{align*}
F(\hat{\mathbf{w}}^{\ast}) -F(\mathbf{w}^{\ast})  = \mathcal{O} \Big( \frac{\log^2\frac{1}{\delta}}{ n^2}   + \frac{ \mathbb{E} [ \| \nabla f(\mathbf{w}^{\ast};z,z') \|^2] \log\frac{1}{\delta}}{ n} \Big).
\end{align*}
If further assume $\mathbb{E} [ \| \nabla f(\mathbf{w}^{\ast};z,z') \|^2 ]= \mathcal{O} \left(\frac{1}{n} \right)$, we have 
\begin{align*}
F(\hat{\mathbf{w}}^{\ast}) -F(\mathbf{w}^{\ast})  = \mathcal{O} \Big( \frac{\log^2(1/\delta)}{ n^2}  \Big).
\end{align*}
\end{theorem}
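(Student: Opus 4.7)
The plan is to combine the PL curvature condition with the sharpened gradient convergence bound in Theorem~\ref{theorem3} applied directly at the empirical minimizer. The starting identity is the PL inequality applied to the population risk $F$ at $\hat{\mathbf{w}}^{\ast}$:
\begin{align*}
F(\hat{\mathbf{w}}^{\ast}) - F(\mathbf{w}^{\ast}) \leq \frac{1}{2\mu} \| \nabla F(\hat{\mathbf{w}}^{\ast}) \|^2,
\end{align*}
so the whole task reduces to bounding $\| \nabla F(\hat{\mathbf{w}}^{\ast}) \|^2$ with high probability.

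First I would invoke Theorem~\ref{theorem3} at the point $\mathbf{w} = \hat{\mathbf{w}}^{\ast}$. The sample-size condition $n \gtrsim \gamma^2(d+\log(\log(\sqrt{2}R_1 n+1)/\delta))/\mu^2$ required there is exactly the one posited in the hypothesis of Theorem~\ref{theorem5}, so Theorem~\ref{theorem3} applies and yields, with probability at least $1-\delta$,
\begin{align*}
\|\nabla F(\hat{\mathbf{w}}^{\ast}) - \nabla F_S(\hat{\mathbf{w}}^{\ast}) \| &\leq \|\nabla F_S(\hat{\mathbf{w}}^{\ast})\| + \frac{\mu}{n} \\
& + \frac{8 D_{\ast}\log(4/\delta)}{n} + 4\sqrt{\frac{2\mathbb{E}[\|\nabla f(\mathbf{w}^{\ast};z,z')\|^2]\log(4/\delta)}{n}}.
\end{align*}
By the first-order optimality condition for the empirical minimizer, $\nabla F_S(\hat{\mathbf{w}}^{\ast}) = 0$, so the triangle inequality gives
\begin{align*}
\|\nabla F(\hat{\mathbf{w}}^{\ast})\| \leq \frac{\mu}{n} + \frac{8 D_{\ast}\log(4/\delta)}{n} + 4\sqrt{\frac{2\mathbb{E}[\|\nabla f(\mathbf{w}^{\ast};z,z')\|^2]\log(4/\delta)}{n}}.
\end{align*}

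Next I would square this inequality, use the elementary bound $(a+b+c)^2 \leq 3(a^2+b^2+c^2)$ to separate the three contributions, and divide by $2\mu$ to exploit the PL inequality. This produces a bound of the form
\begin{align*}
F(\hat{\mathbf{w}}^{\ast}) - F(\mathbf{w}^{\ast}) \leq \frac{3\mu}{2 n^2} + \frac{96 D_{\ast}^2 \log^2(4/\delta)}{\mu n^2} + \frac{48 \mathbb{E}[\|\nabla f(\mathbf{w}^{\ast};z,z')\|^2]\log(4/\delta)}{\mu n},
\end{align*}
which is exactly $\mathcal{O}(\log^2(1/\delta)/n^2 + \mathbb{E}[\|\nabla f(\mathbf{w}^{\ast};z,z')\|^2]\log(1/\delta)/n)$, establishing the first claim. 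The second claim is immediate: substituting $\mathbb{E}[\|\nabla f(\mathbf{w}^{\ast};z,z')\|^2] = \mathcal{O}(1/n)$ reduces the middle term to $\mathcal{O}(\log(1/\delta)/n^2)$, which is absorbed into the leading $\mathcal{O}(\log^2(1/\delta)/n^2)$ term.

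The main obstacle, in my view, is not algebraic but structural: one must be confident that Theorem~\ref{theorem3}'s bound holds in this form and that $\nabla F_S(\hat{\mathbf{w}}^{\ast})=0$ is legitimate (i.e., that the ERM lies in the interior of $\mathcal{W}$, or else that one interprets $\hat{\mathbf{w}}^{\ast}$ via a first-order stationary condition). Once this is granted, the remarkable feature is that the factor $\|\mathbf{w}-\mathbf{w}^{\ast}\|$ appearing in the gradient convergence bounds of Theorems~\ref{theorem1}--\ref{theorem2} is completely absent here, which is why no dimension-dependent term survives and why the $1/n^2$ rate can emerge. All the heavy lifting is done inside Theorem~\ref{theorem3}; the proof of Theorem~\ref{theorem5} itself is a short composition.
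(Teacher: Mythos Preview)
Your proposal is correct and follows essentially the same route as the paper: apply the PL inequality to reduce to bounding $\|\nabla F(\hat{\mathbf{w}}^{\ast})\|$, invoke Theorem~\ref{theorem3} (the paper cites its immediate consequence~(\ref{iqe23})) at $\hat{\mathbf{w}}^{\ast}$, use $\nabla F_S(\hat{\mathbf{w}}^{\ast})=0$, then square and divide by $2\mu$. The paper likewise simply asserts $\nabla F_S(\hat{\mathbf{w}}^{\ast})=0$ without addressing the interior/boundary issue you flag, so your caveat is well placed but does not constitute a gap relative to the paper's own argument.
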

\begin{remark}\rm{}
Theorem \ref{theorem5} shows that when population risk $F(\mathbf{w})$ satisfies the PL condition, we can provide much faster learning rate than Theorem \ref{theorem4}. The learning rate can even up to $\mathcal{O}(\frac{1}{n^2})$. Note that the term $\mathbb{E} [ \| \nabla f(\mathbf{w}^{\ast};z,z') \|^2$ is tiny since it depends on the optima $\mathbf{w}^{\ast}$ and involves the expectation operator \cite{zhang2017empirical,lei2021generalization,liu2018fast,lei2020fine}, as discussed in Remark \ref{ekurh948}. Moreover, from (\ref{nonioggehe}), one can see that if $f$ is nonnegative and $\beta$-smooth, $\mathbb{E} [ \| \nabla f(\mathbf{w}^{\ast},z) \|^2 ] \leq  4 \beta  F(\mathbf{w}^{\ast})$. It is notable that the assumption $F(\mathbf{w}^{\ast}) \leq \mathcal{O} \left(\frac{1}{n} \right)$ is also common and can be found in \cite{zhang2017empirical,zhang2019stochastic,lei2020sharper,srebro2010optimistic,lei2021generalization,liu2018fast,lei2020fine}, which is natural since $F(\mathbf{w}^{\ast})$ is the minimal population risk. We now compare our result with the most related work \cite{lei2021sharper,lei2018generalization}. \cite{lei2018generalization} studies the learning rate of generalization performance gap of regularized empirical risk minimizers (RRM) via uniform convergence technique. Under the Lipschitz continuity condition and the strong convexity condition, Theorem 1 and Theorem 2 in \cite{lei2018generalization} provide $\mathcal{O}(\frac{\log(1/\delta)}{n})$ order rates. \cite{lei2021sharper} studies the generalization performance gap of RRM via algorithmic stability. Under the Lipschitz continuity and strong convexity conditions, Theorem 3 in \cite{lei2021sharper} provides $\mathcal{O}(\frac{\log n \log(1/\delta)}{\sqrt{n}})$ order rates. By the comparison, we have established much faster learning rates, significantly, under a nonconvex learning setting.  
\end{remark}
\subsection{Gradient Descent}
\begin{algorithm}[tb]
   \caption{GD for Pairwise Learning}
   \label{alg:example0}
   \textbf{Input:} initial point $\mathbf{w}_1 = 0$, step sizes $\{ \eta_t \}_t$, and dataset $S = \{z_1,...,z_n \}$
\begin{algorithmic}[1]
    \FOR{$t=1,...,T$}
   \STATE update $\mathbf{w}_{t+1} = \mathbf{w}_t - \eta_t \nabla F_S(\mathbf{w}_t)$
   \ENDFOR
\end{algorithmic}
\end{algorithm}
We now analyze the generalization performance of gradient descent of pairwise learning, where the algorithm is shown in Algorithm \ref{alg:example0}. Denote $A\asymp B$ if there exists universal constants $C_1, C_2 >0$ such that $C_1A \leq B \leq C_2 A$.
Similarly, we first introduce a necessary assumption. 
\begin{assumption}[Smoothness]\label{assum4}
Let $\beta > 0$.
For any sample $z,z' \in \mathcal{Z}$ and $\mathbf{w}_1, \mathbf{w}_2 \in \mathcal{W}$, there holds that
\begin{align*}
\|\nabla f(\mathbf{w}_1;z,z') - \nabla f(\mathbf{w}_2;z,z') \| \leq \beta \| \mathbf{w}_1 - \mathbf{w}_2 \|.
\end{align*}
\end{assumption}
\begin{remark}\rm{}
The uniform smoothness condition is commonly used in nonconvex learning \cite{mei2018the,foster2018uniform,lei2021generalization,lei2021learning,davis2021graphical,hardt2016train}.
As discussed in Section \ref{section4.1}, Assumption \ref{assum4} implies Assumption \ref{assum1}. Thus, the established uniform convergences of gradients is also correct under Assumption \ref{assum4}. In the following, we require this assumption to derive the optimization error bound, i.e., $\|  \nabla F_S(\mathbf{w}(S)) \|$.
\end{remark}
\begin{theorem}\label{theorem6}
Suppose Assumptions \ref{assum2} and \ref{assum4} hold and
the objective function $f$ is nonnegative.
Let $\{ \mathbf{w}_t \}_t$ be the sequence produced by Algorithm \ref{alg:example0} with $\eta_t = \eta_1 t^{-\theta}$, $\theta \in (0,1)$ and $\eta_1 \leq 1/\beta$. For any $\delta \in (0,1)$, with probability at least $1-\delta$,
when $T \asymp (nd^{-1})^{\frac{1}{2(1-\theta)}}$, we have
\begin{align*}
&\frac{1}{\sum_{t = 1}^T \eta_t}\sum_{t = 1}^T \eta_t\| \nabla F(\mathbf{w}_t) \|^2 
\leq \mathcal{O} \Big(\frac{d + \log \frac{ \log n }{\delta}}{\sqrt{nd}}\Big).
\end{align*}
\end{theorem}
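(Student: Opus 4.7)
The plan is to split $\|\nabla F(\mathbf{w}_t)\|^2 \leq 2\|\nabla F_S(\mathbf{w}_t)\|^2 + 2\|\nabla F(\mathbf{w}_t) - \nabla F_S(\mathbf{w}_t)\|^2$, control the first (optimization) term by the standard smoothness-based descent analysis of GD on the empirical objective, control the second (generalization) term by Theorem \ref{theorem2}, and then choose $T$ to balance the two contributions.

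First I would establish the descent lemma for $F_S$. Assumption \ref{assum4} transfers to $F_S$ (which is a uniform average of $\beta$-smooth pair losses), so the update $\mathbf{w}_{t+1} = \mathbf{w}_t - \eta_t \nabla F_S(\mathbf{w}_t)$ combined with $\eta_t \leq 1/\beta$ gives $F_S(\mathbf{w}_{t+1}) \leq F_S(\mathbf{w}_t) - \tfrac{\eta_t}{2}\|\nabla F_S(\mathbf{w}_t)\|^2$. Telescoping from $t=1$ to $T$ and using the nonnegativity of $f$ (so that $F_S(\mathbf{w}_{T+1}) \geq 0$) yields
\begin{equation*}
\sum_{t=1}^T \eta_t \|\nabla F_S(\mathbf{w}_t)\|^2 \leq 2 F_S(\mathbf{w}_1).
\end{equation*}
With $\eta_t = \eta_1 t^{-\theta}$ and $\theta \in (0,1)$, I would use the integral comparison to get $\sum_{t=1}^T \eta_t \asymp T^{1-\theta}$, so the weighted average of $\|\nabla F_S(\mathbf{w}_t)\|^2$ is of order $T^{-(1-\theta)}$.

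Next I would invoke Theorem \ref{theorem2}: Assumption \ref{assum4} implies Assumption \ref{assum1} by Remark \ref{remark1}, and Assumption \ref{assum2} is given. Since every iterate lies in $\mathcal{W} \subseteq B(\mathbf{w}^{\ast}, R_1)$, the factor $\max\{\|\mathbf{w}_t - \mathbf{w}^{\ast}\|, 1/n\}$ is $\mathcal{O}(1)$, and squaring the resulting bound shows that with probability at least $1-\delta$,
\begin{equation*}
\sup_{\mathbf{w}\in\mathcal{W}} \|\nabla F(\mathbf{w}) - \nabla F_S(\mathbf{w})\|^2 = \mathcal{O}\!\left(\frac{d + \log(\log n /\delta)}{n}\right),
\end{equation*}
with the $D_{\ast} \log(1/\delta)/n$ and $\sqrt{\mathbb{E}\|\nabla f(\mathbf{w}^{\ast};z,z')\|^2 \log(1/\delta)/n}$ pieces entering at strictly lower order. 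Averaging the decomposition $\|\nabla F(\mathbf{w}_t)\|^2 \leq 2\|\nabla F_S(\mathbf{w}_t)\|^2 + 2\|\nabla F(\mathbf{w}_t) - \nabla F_S(\mathbf{w}_t)\|^2$ with weights $\eta_t$ gives
\begin{equation*}
\frac{1}{\sum_{t=1}^T \eta_t}\sum_{t=1}^T \eta_t \|\nabla F(\mathbf{w}_t)\|^2 \leq \mathcal{O}\!\left(\frac{1}{T^{1-\theta}} + \frac{d + \log(\log n /\delta)}{n}\right).
\end{equation*}
Balancing the two terms via $T \asymp (n/d)^{1/(2(1-\theta))}$ makes $T^{-(1-\theta)} \asymp \sqrt{d/n} = d/\sqrt{nd}$, while the $(d + \log(\log n/\delta))/n$ term is absorbed into $(d + \log(\log n/\delta))/\sqrt{nd}$ since $\sqrt{nd} \leq n$ whenever $d \leq n$, recovering the claimed rate.

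The main obstacle is bookkeeping rather than a conceptual leap: one has to verify that each of the three sub-terms in Theorem \ref{theorem2} is dominated by the final $\mathcal{O}((d + \log(\log n /\delta))/\sqrt{nd})$ order, and confirm that no union bound over $t \in \{1,\dots,T\}$ is needed because Theorem \ref{theorem2} is already uniform over $\mathbf{w} \in \mathcal{W}$. A small but essential subtlety is that the descent step relies on $F_S(\mathbf{w}_{T+1}) \geq 0$, which is why the nonnegativity of $f$ appears explicitly in the hypotheses; without it, one would need a separate lower bound on the empirical risk to close the argument.
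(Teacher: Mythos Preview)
Your overall strategy---decompose $\|\nabla F(\mathbf{w}_t)\|^2$, control the optimization piece via the descent lemma, and control the statistical piece via Theorem~\ref{theorem2}---matches the paper's. The divergence is in how you handle the factor $\max\{\|\mathbf{w}_t-\mathbf{w}^{\ast}\|,1/n\}$ coming from Theorem~\ref{theorem2}. You assert that ``every iterate lies in $\mathcal{W}\subseteq B(\mathbf{w}^{\ast},R_1)$'' and conclude the factor is $\mathcal{O}(1)$. But Algorithm~\ref{alg:example0} is unprojected gradient descent, so nothing in the hypotheses keeps $\mathbf{w}_t$ inside $\mathcal{W}$; that assertion is unjustified, and with it your statistical term collapses to a $T$-independent $\mathcal{O}((d+\log(\log n/\delta))/n)$. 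Note that under your bound there is no optimization--generalization trade-off at all: any $T$ larger than $(n/d)^{1/(2(1-\theta))}$ would only improve the estimate, and the stated choice of $T$ looks arbitrary rather than optimal.

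The paper does not assume $\|\mathbf{w}_t-\mathbf{w}^{\ast}\|$ is bounded a priori. Instead it reuses the telescoped descent inequality $\sum_{k=1}^t\eta_k\|\nabla F_S(\mathbf{w}_k)\|^2=\mathcal{O}(1)$ together with $\mathbf{w}_{t+1}=-\sum_{k=1}^t\eta_k\nabla F_S(\mathbf{w}_k)$ and Cauchy--Schwarz to obtain
\[
\|\mathbf{w}_{t+1}\|^2\le\Big(\sum_{k=1}^t\eta_k\Big)\Big(\sum_{k=1}^t\eta_k\|\nabla F_S(\mathbf{w}_k)\|^2\Big)=\mathcal{O}(T^{1-\theta}),
\]
so $\|\mathbf{w}_t-\mathbf{w}^{\ast}\|=\mathcal{O}(T^{(1-\theta)/2})$ uniformly in $t\le T$. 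Feeding this into Theorem~\ref{theorem2} makes the statistical term $\mathcal{O}\big((d+\log(\log n/\delta))\,T^{1-\theta}/n\big)$, which \emph{grows} with $T$. The balancing $T\asymp(n/d)^{1/(2(1-\theta))}$ is then forced, and it is precisely this growth that underpins the early-stopping message in the remark following the theorem. Your bookkeeping about lower-order terms, the uniformity of Theorem~\ref{theorem2} in $\mathbf{w}$, and the role of nonnegativity is all correct; the missing piece is this explicit control of $\|\mathbf{w}_t-\mathbf{w}^{\ast}\|$ from the dynamics rather than from an assumed containment in $\mathcal{W}$.
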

\begin{remark}\rm{}
To our best knowledge, this is the first work that investigates the learning rates of GD for nonconvex pairwise learning. Theorem \ref{theorem6} shows that for pairwise GD, one should select an appropriate iterative number for early-stopping to achieve a good learning rate. In the proof, (\ref{jizbnu}) reveals that we should balance the optimization error (optimization) and the statistical error (generalization), which demonstrates the reason for early-stopping. According to Theorem \ref{theorem6}, the optimal iterative number should be chosen as $T \asymp (nd^{-1})^{\frac{1}{2(1-\theta)}}$ for polynomially decaying step sizes. 
\end{remark}

\begin{theorem}\label{theorem7}
Suppose Assumptions \ref{assum2} and \ref{assum4} hold and
the objective function $f$ is nonnegative.
Assume the empirical risk $F_S$ satisfies Assumption \ref{assum3} with parameter $\mu$. Let $\{ \mathbf{w}_t \}_t$ be the sequence produced by Algorithm \ref{alg:example0} with $\eta_t = 1/\beta$. For any $\delta \in (0,1)$, with probability at least $1-\delta$, we have
\begin{multline*}
F(\mathbf{w}_{T+1}) - F(\mathbf{w}^{\ast}) \leq \mathcal{O} \Big((1-\frac{\mu}{\beta})^{T} \Big)  \\+  \mathcal{O} \Big(\frac{\log^2(1/\delta)}{n^2} + \frac{ F(\mathbf{w}^{\ast}) \log(1/\delta)}{n}\Big) .
\end{multline*}
If further assume $F(\mathbf{w}^{\ast}) = \mathcal{O} \Big(\frac{1}{n} \Big)$ and choose $T \asymp \log n$, we have
\begin{align*}
F(\mathbf{w}_{T+1}) - F(\mathbf{w}^{\ast})   = \mathcal{O} \Big( \frac{\log^2(1/\delta)}{n^2} \Big).
\end{align*}
\end{theorem}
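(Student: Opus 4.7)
The plan is to combine three ingredients: (i) the linear-convergence behavior of gradient descent on a smooth, PL empirical risk, (ii) the uniform convergence of gradients from Theorem \ref{theorem3}, and (iii) the self-bounding property of nonnegative $\beta$-smooth losses to convert $\mathbb{E}[\|\nabla f(\mathbf{w}^{\ast};z,z')\|^2]$ into the optimistic term $F(\mathbf{w}^{\ast})/n$.

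First, I would derive the optimization error on $F_S$. By Assumption \ref{assum4}, $F_S$ is $\beta$-smooth, so with the constant step size $\eta_t = 1/\beta$ the standard descent lemma gives $F_S(\mathbf{w}_{t+1}) \le F_S(\mathbf{w}_t) - \frac{1}{2\beta}\|\nabla F_S(\mathbf{w}_t)\|^2$. Plugging in the PL inequality $\|\nabla F_S(\mathbf{w}_t)\|^2 \ge 2\mu[F_S(\mathbf{w}_t) - F_S(\hat{\mathbf{w}}^{\ast})]$ from Assumption \ref{assum3} yields the one-step contraction $F_S(\mathbf{w}_{t+1}) - F_S(\hat{\mathbf{w}}^{\ast}) \le (1-\mu/\beta)[F_S(\mathbf{w}_t) - F_S(\hat{\mathbf{w}}^{\ast})]$, and iterating gives $F_S(\mathbf{w}_{T+1}) - F_S(\hat{\mathbf{w}}^{\ast}) \le (1-\mu/\beta)^T [F_S(\mathbf{w}_1) - F_S(\hat{\mathbf{w}}^{\ast})]$. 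Applying $\beta$-smoothness once more at the minimizer $\hat{\mathbf{w}}^{\ast}$ of $F_S$ produces
\begin{equation*}
\|\nabla F_S(\mathbf{w}_{T+1})\|^2 \le 2\beta[F_S(\mathbf{w}_{T+1}) - F_S(\hat{\mathbf{w}}^{\ast})] \le 2\beta(1-\mu/\beta)^T C_0,
\end{equation*}
where $C_0 := F_S(\mathbf{w}_1) - F_S(\hat{\mathbf{w}}^{\ast})$ can be bounded by an $n$-independent constant using smoothness and the bounded diameter of $\mathcal{W}$.

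Second, I would transfer this bound from $F_S$ to $F$. Applying Theorem \ref{theorem3} at the specific iterate $\mathbf{w}_{T+1}$ and squaring the resulting inequality (via $(a+b+c)^2 \le 3(a^2+b^2+c^2)$), I obtain
\begin{equation*}
\|\nabla F(\mathbf{w}_{T+1})\|^2 \lesssim \|\nabla F_S(\mathbf{w}_{T+1})\|^2 + \frac{\log^2(1/\delta)}{n^2} + \frac{\mathbb{E}[\|\nabla f(\mathbf{w}^{\ast};z,z')\|^2]\log(1/\delta)}{n}.
\end{equation*}
The key step to obtain the stated form is the self-bounding identity for nonnegative $\beta$-smooth functions mentioned after Theorem \ref{theorem5}: $\|\nabla f(\mathbf{w}^{\ast};z,z')\|^2 \le 4\beta f(\mathbf{w}^{\ast};z,z')$, which after taking expectation gives $\mathbb{E}[\|\nabla f(\mathbf{w}^{\ast};z,z')\|^2] \le 4\beta F(\mathbf{w}^{\ast})$. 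Finally, invoking PL for $F$ (which underlies Theorem \ref{theorem3}) to pass from gradient norm to excess risk via $F(\mathbf{w}_{T+1}) - F(\mathbf{w}^{\ast}) \le \frac{1}{2\mu}\|\nabla F(\mathbf{w}_{T+1})\|^2$ and substituting the bound on $\|\nabla F_S(\mathbf{w}_{T+1})\|^2$ gives the first claim. The second claim then follows immediately: with $F(\mathbf{w}^{\ast}) = \mathcal{O}(1/n)$ the sub-optimal statistical term becomes $\mathcal{O}(\log(1/\delta)/n^2)$, and choosing $T \asymp \log n$ with a constant proportional to $\beta/\mu$ ensures $(1-\mu/\beta)^T \le n^{-2}$, so the optimization error is also absorbed.

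The main obstacle is reconciling the two places where a PL-type curvature condition is invoked: GD linear convergence needs PL on $F_S$, whereas Theorem \ref{theorem3} and the final gradient-to-value conversion need PL on $F$. One must either read the assumption in Theorem \ref{theorem7} as covering both (natural, since $F_S$ and $F$ are close for large $n$) or explicitly transfer PL between them using the uniform convergence of gradients; care also has to be taken to take a union bound over the $\mathcal{O}(1)$ high-probability events invoked (Theorem \ref{theorem3} once plus any concentration used for $C_0$) so that the final probability is $1-\delta$.
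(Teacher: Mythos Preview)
Your proposal is correct and follows essentially the same route as the paper: linear convergence of GD on the PL, $\beta$-smooth empirical risk to control $\|\nabla F_S(\mathbf{w}_{T+1})\|^2$; then Theorem~\ref{theorem3} plus the decomposition $\|\nabla F(\mathbf{w}_{T+1})\|^2 \le 2\|\nabla F-\nabla F_S\|^2 + 2\|\nabla F_S\|^2$; then the self-bounding inequality $\mathbb{E}\|\nabla f(\mathbf{w}^{\ast};z,z')\|^2 \le 4\beta F(\mathbf{w}^{\ast})$; and finally PL on $F$ to pass from $\|\nabla F(\mathbf{w}_{T+1})\|^2$ to the excess risk.

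Two minor remarks. First, for bounding $C_0 = F_S(\mathbf{w}_1)-F_S(\hat{\mathbf{w}}^{\ast})$ the paper simply uses nonnegativity of $f$ (hence $F_S(\hat{\mathbf{w}}^{\ast})\ge 0$) rather than smoothness plus a diameter bound; either works. Second, the ``obstacle'' you flag---that PL is needed both on $F_S$ (for the GD contraction) and on $F$ (for Theorem~\ref{theorem3} and the final gradient-to-value step)---is real, and the paper does not resolve it any more carefully than you do: its proof writes ``Since $F_S$ satisfies the PL assumption'' and then immediately applies PL to $F$, and it invokes Theorem~\ref{theorem3}, whose hypothesis is PL on $F$. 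So your caveat is well placed; a fully rigorous version would either assume PL on both or argue the transfer via the uniform gradient bound, exactly as you suggest.
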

\begin{remark}\rm{}\label{reemklwjfw9}
For brevity, we show Theorem \ref{theorem7} with a step size $\eta_t = 1/\beta$. Indeed, Theorem \ref{theorem7} is correct for any $0< \eta_t \leq 1/\beta$.
Theorem \ref{theorem7} reveals that when $F_S$ satisfies the PL condition, the
generalization performance gap of GD is of a faster rate $\mathcal{O}(\frac{ F(\mathbf{w}^{\ast}) \log(1/\delta)}{n})$ than Theorem \ref{theorem6}. If we suppose the optimal population risk is small as assumed in \cite{zhang2017empirical,zhang2019stochastic,lei2020sharper,srebro2010optimistic,lei2021generalization,liu2018fast,lei2020fine}, we further obtain faster learning rate of order $\mathcal{O}(\frac{\log^2(1/\delta)}{n^2})$. 
\end{remark}



\subsection{Stochastic Gradient Descent}\label{section3.5}
\begin{algorithm}[tb]
   \caption{SGD for Pairwise Learning}
   \label{alg:example}
   \textbf{Input:} initial point $\mathbf{w}_1 = 0$, step sizes $\{ \eta_t \}_t$, and dataset $S = \{z_1,...,z_n \}$
\begin{algorithmic}[1]
    \FOR{$t=1,...,T$}
    \STATE  draw  $(i_t,j_t)$ from the uniform distribution over the set $\{ (i,j): i,j\in [n] , i\neq j\}$ \\
   \STATE update $\mathbf{w}_{t+1} = \mathbf{w}_t - \eta_t \nabla f(\mathbf{w}_t;z_{i_t},z_{j_t})$
   \ENDFOR
\end{algorithmic}
\end{algorithm}
Stochastic gradient descent optimization algorithm has found wide application in machine learning due to its simplicity in implementation, low memory requirement and low computational complexity per iteration, as well as good practical behavior \cite{zhang2004solving,bach2013non,bottou2018optimization,harvey2019tight}. The description of SGD of pairwise learning is shown in Algorithm \ref{alg:example}.
We also first introduce a necessary assumption.
\begin{assumption}\label{assum5}
Assume the existence of $G>0$ and $\sigma > 0$ satisfying
\begin{align}\label{assum145}
\sqrt{\eta_t} \|\nabla f(\mathbf{w}_{t};z,z') \| \leq G,  \forall t \in \mathbb{N}, z,z'\in \mathcal{Z},\\\label{assum1456}
\mathbb{E}_{i_t,j_t} \left[ \| \nabla f(\mathbf{w}_{t};z_{i_t},z_{j_t}) - \nabla F_S (\mathbf{w}_{t}) \|^2 \right]\leq \sigma^2, \forall t \in \mathbb{N},
\end{align}
where $\mathbb{E}_{i_t,j_t}$ denotes the expectation w.r.t. $i_t$ and $j_t$.
\end{assumption}
\begin{remark}\rm{}\label{r08gur8ghiothjb}
In Assumption \ref{assum5}, (\ref{assum145}) is much milder than the bounded gradient assumption (see Appendix \ref{appendix2}) since $\eta_t$ is typically small \cite{lei2021generalization}, such as the setting of this paper. (\ref{assum1456}) is a common assumption in the generalization performance  analysis of SGD \cite{zhou2018generalization,lei2021generalization,li2021improved}.
\end{remark}
\begin{theorem} \label{theorem8}
Suppose Assumptions \ref{assum2}, \ref{assum4}  and \ref{assum5} hold. Let $\{ \mathbf{w}_t\}_t$ be the sequence produced by Algorithm \ref{alg:example} with $\eta_t = \eta_1 t^{- \theta} , \theta \in (0,1)$ and $\eta_1 \leq \frac{1}{2\beta}$. Then, for any $\delta >0$, with probability $1 - \delta$, when $T \asymp  (nd^{-1})^{\frac{1}{2-2\theta}}$, we have
\begin{align*}
&\Big(\sum_{t = 1}^T \eta_t \Big)^{-1} \sum_{t = 1}^T \eta_t \| \nabla F(\mathbf{w}_t) \|^2  \\
=
&\begin{cases}
             \mathcal{O} \Big(\Big(\sqrt{\frac{d}{n}}\Big)^{\frac{\theta}{1-\theta }} \log^3(1/\delta) \Big), &\quad \text{if   } \theta < 1/2,   \\ 
             \mathcal{O} \Big( \sqrt{\frac{d}{n}} \log(T/\delta)  \log^3(1/\delta) \Big), &\quad \text{if   }\theta = 1/2,\\ 
             \mathcal{O} \Big(\sqrt{\frac{d}{n}} \log^3(1/\delta) \Big),  &\quad \text{if   }\theta > 1/2. 
\end{cases}
\end{align*}
\end{theorem}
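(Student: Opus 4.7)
The plan is to first bound the weighted average of empirical gradient norms $\frac{1}{\sum_t \eta_t}\sum_t \eta_t \|\nabla F_S(\mathbf{w}_t)\|^2$ via a standard SGD descent analysis combined with high-probability martingale concentration, and then transfer this bound to $\|\nabla F(\mathbf{w}_t)\|^2$ via Theorem \ref{theorem2}. The three-case dichotomy in $\theta$ will fall out of the asymptotics of $\sum_t \eta_t$ and $\sum_t \eta_t^2$ once the optimal $T$ is plugged in.

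The first step is the descent inequality. Writing $g_t = \nabla f(\mathbf{w}_t; z_{i_t}, z_{j_t})$ and using $\beta$-smoothness of $F_S$ (Assumption \ref{assum4}) together with $\eta_t \leq 1/(2\beta)$, we get
\begin{equation*}
F_S(\mathbf{w}_{t+1}) \leq F_S(\mathbf{w}_t) - \eta_t \langle \nabla F_S(\mathbf{w}_t), g_t\rangle + \tfrac{\beta \eta_t^2}{2}\|g_t\|^2.
\end{equation*}
Decomposing $g_t = \nabla F_S(\mathbf{w}_t) + \xi_t$ with a martingale difference $\xi_t$ whose conditional variance is at most $\sigma^2$ (Assumption \ref{assum5}(b)), telescoping over $t=1,\dots,T$, and using nonnegativity of $F_S$ yields, after absorbing half of the $\|\nabla F_S(\mathbf{w}_t)\|^2$ from the quadratic term into the linear one,
\begin{equation*}
\tfrac{1}{2}\sum_{t=1}^T \eta_t \|\nabla F_S(\mathbf{w}_t)\|^2 \leq F_S(\mathbf{w}_1) - \sum_t \eta_t \langle \nabla F_S(\mathbf{w}_t), \xi_t\rangle + \beta \sum_t \eta_t^2 \|\xi_t\|^2.
\end{equation*}

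The second step handles the stochastic terms with high probability. For the martingale $\sum_t \eta_t \langle \nabla F_S(\mathbf{w}_t), \xi_t\rangle$, Assumption \ref{assum5}(a) gives $\sqrt{\eta_t}\|g_t\| \leq G$, so each increment is bounded (in particular $\eta_t^2\|\xi_t\|^2 \leq 4G^2 \eta_t$), which lets us apply a Freedman/Bernstein-type inequality controlling it by $O(\sqrt{\sigma^2 \sum_t \eta_t^2 \cdot \log(1/\delta)} + G^2 \log(1/\delta))$. For the quadratic-variation term $\sum_t \eta_t^2\|\xi_t\|^2$, a second Bernstein-type concentration bounds it by $O(\sigma^2 \sum_t \eta_t^2 + G^2 \log(1/\delta))$. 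Putting everything together and dividing by $\sum_t \eta_t$, with $\sum_t \eta_t \asymp T^{1-\theta}$ and the standard evaluation
\begin{equation*}
\sum_{t=1}^T \eta_t^2 \asymp \begin{cases} T^{1-2\theta}, & \theta < 1/2,\\ \log T, & \theta = 1/2,\\ 1, & \theta > 1/2,\end{cases}
\end{equation*}
the ratio $\sum_t \eta_t^2 / \sum_t \eta_t$ evaluates to $T^{-\theta}$, $\log T/\sqrt{T}$, or $T^{-(1-\theta)}$ in the three regimes. Choosing $T \asymp (n/d)^{1/(2-2\theta)}$ makes $T^{-(1-\theta)} \asymp \sqrt{d/n}$, which dominates the initial-condition term $(F_S(\mathbf{w}_1)-F_S^*)/\sum_t \eta_t$ and produces exactly the three rates stated in the theorem for $\frac{1}{\sum_t \eta_t}\sum_t \eta_t\|\nabla F_S(\mathbf{w}_t)\|^2$.

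The final step transfers the bound from $F_S$ to $F$. By $\|\nabla F(\mathbf{w}_t)\|^2 \leq 2\|\nabla F_S(\mathbf{w}_t)\|^2 + 2\|\nabla F(\mathbf{w}_t) - \nabla F_S(\mathbf{w}_t)\|^2$ and Theorem \ref{theorem2} (applied uniformly in $\mathbf{w} \in \mathcal{W} \subseteq B(\mathbf{w}^*, R_1)$ and then integrated over the weights $\eta_t/\sum_s \eta_s$), the second term is at most $O((R_1^2+1)\cdot d/n) + O(\mathbb{E}[\|\nabla f(\mathbf{w}^*;z,z')\|^2]/n)$ with high probability, which is of smaller order than $\sqrt{d/n}$ in all three regimes. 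The main obstacle is the high-probability concentration: the noise $\|\xi_t\|$ is not uniformly bounded, and only after using $\sqrt{\eta_t}\|g_t\| \leq G$ to convert the lack of boundedness into a bounded-per-step contribution can Freedman/Bernstein be applied cleanly. Combining the $\log(1/\delta)$ factors from the martingale concentration, from the quadratic-variation concentration, and from the uniform convergence in Theorem \ref{theorem2} explains the $\log^3(1/\delta)$ factor appearing in the final statement.
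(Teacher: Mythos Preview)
Your overall architecture matches the paper: split $\|\nabla F(\mathbf{w}_t)\|^2 \leq 2\|\nabla F_S(\mathbf{w}_t)\|^2 + 2\|\nabla F(\mathbf{w}_t) - \nabla F_S(\mathbf{w}_t)\|^2$, control the first piece by a descent/martingale argument (this is exactly the content of the paper's Lemma~\ref{lemma10}), and control the second piece via Theorem~\ref{theorem2}. The one substantive difference is how you handle the factor $\max\{\|\mathbf{w}_t-\mathbf{w}^*\|,1/n\}$ that Theorem~\ref{theorem2} produces. You simply invoke $\mathbf{w}_t\in\mathcal{W}\subseteq B(\mathbf{w}^*,R_1)$ and treat $R_1$ as an $O(1)$ constant, so the statistical error is a flat $O(d/n)$ independent of $T$. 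The paper instead invokes Lemma~\ref{lemma11}, which shows with high probability that $\|\mathbf{w}_t-\mathbf{w}^*\|$ grows like $T^{(2-3\theta)/2}\log(1/\delta)$, $T^{1/4}\log^{1/2}T\,\log(1/\delta)$, or $T^{(1-\theta)/2}\log(1/\delta)$ in the three regimes. In the paper's argument the statistical term therefore scales like $(d/n)\,\|\mathbf{w}_t-\mathbf{w}^*\|^2$ and \emph{grows} with $T$; the stated choice $T\asymp(n/d)^{1/(2-2\theta)}$ is exactly the balance point between this growing statistical term and the shrinking optimization term.

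Your route still proves the theorem for the given $T$ (the optimization term alone already delivers the three rates once $T^{-(1-\theta)}\asymp\sqrt{d/n}$), but it hides why this $T$ is singled out, and your bookkeeping of the $\log^3(1/\delta)$ is off: with $R_1=O(1)$ you would only collect one $\log(1/\delta)$ from the martingale step and one from Theorem~\ref{theorem2}, not three. The paper's $\log^3(1/\delta)$ comes from squaring the $\log(1/\delta)$ in Lemma~\ref{lemma11} and multiplying by the $\log(1/\delta)$ inside Theorem~\ref{theorem2}'s rate. A minor point: Theorem~\ref{theorem8} does not assume $f$ is nonnegative (unlike Theorems~\ref{theorem6}, \ref{theorem7}, \ref{theorem9}, \ref{theorem10}), so your telescoping should be phrased through $F_S(\mathbf{w}_1)-\inf F_S$ rather than $F_S(\mathbf{w}_{T+1})\geq 0$; the paper sidesteps this by citing Lemma~\ref{lemma10} directly.
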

\begin{remark}\rm{}
Similar to Theorem \ref{theorem6}, Theorem \ref{theorem8} also implies a trade-off between the optimization error (optimization) and the statistical error (generalization) for SGD, as revealed in (\ref{ru4yt9785gui}-\ref{ineq987}). Theorem \ref{theorem8} suggests that we achieve similar fast learning rates for polynomially decaying step size with $\theta \in [1/2,1)$. While for the varying $T \asymp  (nd^{-1})^{\frac{1}{2-2\theta}}$, the optimal iterative number should be chosen with $\theta = 1/2$ or closing to $1/2$. We compare Theorem \ref{theorem8} with the most related work \cite{lei2021generalization}.  \cite{lei2021generalization} also studies SGD of nonconvex pairwise learning, and provide $\mathcal{O}(n^{-\frac{1}{2}\log^2(1/\delta)}(d+\log(1/\delta))^{\frac{1}{2}})$ order rates, which has the same order $\mathcal{O}(\sqrt{\frac{d}{n}})$ as ours. However, the proof technique between Theorem \ref{theorem8} and \cite{lei2021generalization} is different. Another difference is that \cite{lei2021generalization} studies the case $\eta_t = \eta/\sqrt{T}$ with $\eta \leq \sqrt{T}/(2T)$, while Theorem \ref{theorem8} studies with different step sizes. Theorem \ref{theorem8} is thus served as an important complementary result for nonconvex pairwise learning. 
\end{remark}
\begin{theorem}\label{theorem9}
Suppose Assumptions \ref{assum2}, \ref{assum4}  and \ref{assum5} hold, and
the objective function $f$ is nonnegative, and suppose empirical risk $F_S$ satisfies Assumption \ref{assum3} with parameter $\mu$.
Let $\{ \mathbf{w}_t\}_t$ be the sequence produced by Algorithm \ref{alg:example} with $\eta_t = \eta_1 t^{- \theta} , \theta \in (0,1)$ and $\eta_1 \leq \frac{1}{2\beta}$. Then,
for any $\delta >0$, with probability at least $1 - \delta$ over the sample $S$, choosing $T \asymp n^{\frac{2}{\theta}}$ if $\theta \in (0,1/2)$, $T \asymp  n^4$ if $\theta = 1/2$ and $T \asymp n^{\frac{2}{1-\theta}}  $ if $\theta \in (1/2,1)$, when $n \geq \frac{c\beta^2(d+ \log(\frac{16 \log(\sqrt{2}n R_1 +1)}{\delta}))}{\mu^2}$, we have
\begin{align*}
F(\mathbf{w}_{T+1}) - F(\mathbf{w}^{\ast}) 
= \mathcal{O} \Big(\frac{\log^2(1/\delta)}{n^2}  + \frac{ F(\mathbf{w}^{\ast}) \log(1/\delta)}{ n} \Big).
\end{align*}
If further assume $F(\mathbf{w}^{\ast}) = \mathcal{O} \Big(\frac{1}{n} \Big)$, we have 
\begin{align*}
F(\mathbf{w}_{T+1}) - F(\mathbf{w}^{\ast})   = \mathcal{O} \Big( \frac{\log^2(1/\delta)}{n^2}  \Big).
\end{align*}
\end{theorem}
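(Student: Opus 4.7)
My plan is to decompose the excess population risk as
\[
F(\mathbf{w}_{T+1}) - F(\mathbf{w}^{\ast}) = \bigl[F_S(\mathbf{w}_{T+1}) - F_S^{\ast}\bigr] + \bigl[F(\mathbf{w}_{T+1}) - F_S(\mathbf{w}_{T+1})\bigr] + \bigl[F_S^{\ast} - F(\mathbf{w}^{\ast})\bigr],
\]
with $F_S^{\ast} := \min_{\mathbf{w} \in \mathcal{W}} F_S(\mathbf{w})$, and to bound the three pieces via a high-probability SGD--PL analysis on the empirical risk, the uniform convergence of gradients (Theorem~\ref{theorem3}), and a U-statistic Bernstein inequality from Appendix~\ref{appendix1}, respectively. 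The sample-size condition $n \gtrsim \beta^2(d + \log(\cdot/\delta))/\mu^2$ is precisely what is required for Theorem~\ref{theorem3} to be applicable, with $\gamma$ replaced by a constant multiple of $\beta$ via Assumption~\ref{assum4} (see Remark~\ref{remark1}).

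For the optimization error I would apply the descent lemma implied by Assumption~\ref{assum4}, take the conditional expectation with respect to the sampled pair $(i_t, j_t)$, use the noise bound~\eqref{assum1456}, and invoke the PL condition on $F_S$ to obtain the one-step recursion
\[
\mathbb{E}_{i_t, j_t}[F_S(\mathbf{w}_{t+1}) - F_S^{\ast}] \le (1 - \mu \eta_t)(F_S(\mathbf{w}_t) - F_S^{\ast}) + \tfrac{1}{2} \beta \sigma^2 \eta_t^2.
\]
Iterating this and promoting it to a high-probability statement via a Bernstein-type martingale concentration on the stochastic deviation $\eta_t \langle \nabla F_S(\mathbf{w}_t), \nabla F_S(\mathbf{w}_t) - \nabla f(\mathbf{w}_t; z_{i_t}, z_{j_t})\rangle$ (which is sub-exponential thanks to~\eqref{assum145}) gives a dominant noise-floor term of order $\eta_T \sigma^2 / \mu$ up to logarithmic factors. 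The three regimes $T \asymp n^{2/\theta}$, $T \asymp n^4$, $T \asymp n^{2/(1-\theta)}$ are each tuned so that $\eta_T \asymp 1/n^2$, yielding $F_S(\mathbf{w}_{T+1}) - F_S^{\ast} = \mathcal{O}(\log^2(1/\delta)/n^2)$ uniformly across $\theta \in (0,1)$.

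For the generalization gap, Theorem~\ref{theorem3} gives
\[
\|\nabla F(\mathbf{w}_{T+1})\| \le \|\nabla F_S(\mathbf{w}_{T+1})\| + \mathcal{O}\!\left(\tfrac{\log(1/\delta)}{n} + \sqrt{\tfrac{\mathbb{E}[\|\nabla f(\mathbf{w}^{\ast}; z, z')\|^2] \log(1/\delta)}{n}}\right).
\]
Using nonnegativity and smoothness of $f$, the self-bounding property supplies $\mathbb{E}[\|\nabla f(\mathbf{w}^{\ast}; z, z')\|^2] \le 4\beta F(\mathbf{w}^{\ast})$, and smoothness of $F_S$ combined with the optimization bound makes $\|\nabla F_S(\mathbf{w}_{T+1})\|^2 \le 2\beta(F_S(\mathbf{w}_{T+1}) - F_S^{\ast})$ negligible. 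To convert the resulting bound on $\|\nabla F(\mathbf{w}_{T+1})\|$ into one on $F(\mathbf{w}_{T+1}) - F(\mathbf{w}^{\ast})$, I would use the uniform gradient convergence once more to propagate the PL condition (up to a constant factor) from $F_S$ to $F$ for $n$ above the stated threshold, giving $F(\mathbf{w}_{T+1}) - F(\mathbf{w}^{\ast}) \le (2/\mu) \|\nabla F(\mathbf{w}_{T+1})\|^2$. Finally, $F_S^{\ast} - F(\mathbf{w}^{\ast}) \le F_S(\mathbf{w}^{\ast}) - F(\mathbf{w}^{\ast})$ is a scalar U-statistic deviation whose variance is controlled by self-bounding as $\mathrm{Var}(f(\mathbf{w}^{\ast}; z, z')) = \mathcal{O}(F(\mathbf{w}^{\ast}))$. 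Applying the Bernstein inequality from Appendix~\ref{appendix1} and the elementary $2\sqrt{ab} \le a + b$ then assembles the three pieces into the advertised rate $\mathcal{O}(\log^2(1/\delta)/n^2 + F(\mathbf{w}^{\ast}) \log(1/\delta)/n)$; the corollary $\mathcal{O}(\log^2(1/\delta)/n^2)$ follows by substituting $F(\mathbf{w}^{\ast}) = \mathcal{O}(1/n)$.

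The main technical obstacle will be the high-probability SGD--PL recursion under polynomially decaying step sizes. Standard SGD--PL analyses deliver the recursion in expectation, but here the recursion must hold with high probability and uniformly across the three regimes of $\theta$, which forces a delicate Bernstein-type martingale argument that balances the contracting bias term $\prod_t(1 - \mu \eta_t)$ against the cumulative stochastic-noise term $\sum_t \eta_t^2 \prod_{s > t}(1 - \mu \eta_s)$ tightly enough to produce an $\mathcal{O}(1/n^2)$ noise floor at the prescribed $T$. A secondary subtlety is that Theorem~\ref{theorem3} is stated under a PL condition on $F$, whereas we only assume PL on $F_S$; the workaround is to use the uniform convergence of gradients to transfer PL (with parameter $\Theta(\mu)$) between $F_S$ and $F$ under the stated sample-size threshold, so that Theorem~\ref{theorem3} may be applied as a black box.
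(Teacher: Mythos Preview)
Your route is genuinely different from the paper's. The paper does \emph{not} use a three-term decomposition and does not attempt a last-iterate SGD--PL recursion. Instead it applies \eqref{iqe23} (from Theorem~\ref{theorem3}) pointwise to each $\mathbf{w}_t$, getting $\|\nabla F(\mathbf{w}_t)\|^2 \le 16\|\nabla F_S(\mathbf{w}_t)\|^2 + \mathcal{O}\bigl(n^{-2}\log^2(1/\delta)+n^{-1}\mathbb{E}\|\nabla f(\mathbf{w}^\ast;z,z')\|^2\log(1/\delta)\bigr)$, then takes the $\eta_t$-weighted average and invokes Lemma~\ref{lemma10} (a black-box high-probability bound on $\sum_t\eta_t\|\nabla F_S(\mathbf{w}_t)\|^2$ from \cite{li2021improved}, valid without any PL condition) together with \eqref{ineq987}. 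The three choices of $T$ are exactly what drives $(\sum_t\eta_t)^{-1}\sum_t\eta_t\|\nabla F_S(\mathbf{w}_t)\|^2$ down to $\mathcal{O}(n^{-2})$ via \eqref{theo1112}; they are \emph{not} calibrated to make $\eta_T\asymp n^{-2}$ as you claim (for $\theta>1/2$ one has $\eta_T\asymp n^{-2\theta/(1-\theta)}\ll n^{-2}$, though this only helps your bound). The paper's shortcut buys simplicity---no last-iterate martingale argument and no separate U-statistic concentration for $F_S^\ast-F(\mathbf{w}^\ast)$---but it pays in two places you actually spotted: the written proof ends with a bound on the weighted average $(\sum_t\eta_t)^{-1}\sum_t\eta_t\|\nabla F(\mathbf{w}_t)\|^2$ rather than on $F(\mathbf{w}_{T+1})-F(\mathbf{w}^\ast)$, and it invokes Theorem~\ref{theorem3} (stated for PL on $F$) under a PL hypothesis on $F_S$. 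Your proposal, while heavier, directly targets the stated quantity and explicitly supplies the PL-transfer step, so it would close both loose ends; it is closer in spirit to the paper's proof of Theorem~\ref{theorem10} (which goes through $\|\nabla F_S(\mathbf{w}_{T+1})\|^2$ via Lemma~\ref{lemma12}) than to the proof of Theorem~\ref{theorem9} itself.
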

\begin{remark}\rm{}
Theorem \ref{theorem9} reveals that under the PL condition, the learning rate of SGD can be significantly improved compared to Theorem \ref{theorem8}.  
In the related work,
if $f$ is nonnegative, Lipschitz continuous and smooth, $F_S$ satisfies the PL condition, and Assumption \ref{assum5} hold, the learning rate derived for $\mathbb{E}[F(\mathbf{w}_{T+1}) - F(\mathbf{w}^{\ast})] $ in \cite{lei2021generalization} is at most of order $\mathcal{O}(n^{-\frac{2}{3}})$. By a comparison, one can see that our learning rates are derived with high probability and are significantly faster than the results in \cite{lei2021generalization}. The generalization performance gap is also studied for pairwise SGD in \cite{lei2020sharper} via algorithmic stability. However, their learning rate is limited to convex learning. Specifically, if $f$ is convex and smooth, $F(\mathbf{w}_{T+1}) - F(\mathbf{w}^{\ast})$ is of order $\mathcal{O}(\log n \sqrt{T}/n + n^{-\frac{1}{2}}) + \mathcal{O}(T^{-\frac{1}{2}}\log T)$. By taking the optimal $T \asymp n$, the learning rate becomes $\mathcal{O}(n^{-\frac{1}{2} }\log n)$, which is much slower than results of Theorem \ref{theorem9}. To our best knowledge, the $\mathcal{O}(\frac{1}{n})$ rate is the first for SGD in nonconvex pairwise learning, and the $\mathcal{O}(\frac{1}{n^2})$ rate is also the first whether in convex or nonconvex pairwise learning. 
\end{remark}

\begin{theorem}\label{theorem10}
Suppose Assumptions \ref{assum2}, \ref{assum4}  and \ref{assum5} hold, and
the objective function $f$ is nonnegative, and suppose empirical risk $F_S$ satisfies Assumption \ref{assum3} with parameter $2\mu$.
Let $\{ \mathbf{w}_t\}_t$ be the sequence produced by Algorithm \ref{alg:example} with $\eta_t = \frac{2}{\mu (t+t_0)}$ such that $t_0 \geq \max \left\{\frac{4\beta}{\mu},1 \right\}$ for all $t \in \mathbb{N}$. Then,
for any $\delta >0$, with probability at least $1 - \delta$ over the sample $S$, when $n \geq \frac{c\beta^2(d+ \log(\frac{16 \log(\sqrt{2}n R_1 +1)}{\delta}))}{\mu^2}$ and $T \asymp n^2$, we have
\begin{align*}
F(\mathbf{w}_{T+1}) - F(\mathbf{w}^{\ast}) 
= \mathcal{O} \Big(\frac{\log n \log^3(\frac{1}{\delta})}{n^2} + \frac{ F(\mathbf{w}^{\ast}) \log\frac{1}{\delta}}{ n} \Big).
\end{align*}
If further assume $F(\mathbf{w}^{\ast}) = \mathcal{O} \left(\frac{1}{n} \right)$, we have 
\begin{align*}
F(\mathbf{w}_{T+1}) - F(\mathbf{w}^{\ast})   = \mathcal{O} \Big( \frac{\log n \log^3(1/\delta)}{n^2}  \Big).
\end{align*}
\end{theorem}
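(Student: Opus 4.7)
The plan is to parallel the strategy of Theorem~\ref{theorem9}, replacing the polynomially decaying step size by the $\mathcal{O}(1/t)$ schedule $\eta_t = 2/(\mu(t+t_0))$ that is natural for SGD under PL-type curvature. I would decompose $F(\mathbf{w}_{T+1})-F(\mathbf{w}^{\ast})$ as the sum of an empirical optimization error $F_S(\mathbf{w}_{T+1})-F_S(\hat{\mathbf{w}}^{\ast})$ and two generalization terms of the form $F(\mathbf{w}_{T+1})-F_S(\mathbf{w}_{T+1})$ and $F_S(\hat{\mathbf{w}}^{\ast})-F(\mathbf{w}^{\ast})$. The optimization piece is controlled by an SGD analysis tailored to the $1/t$ schedule and the PL condition on $F_S$; the generalization pieces are then handled by the uniform convergence of gradients (Theorem~\ref{theorem3}), using PL and smoothness to pass between function values and gradient norms.

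For the optimization piece, the smoothness of $F_S$ (inherited from Assumption~\ref{assum4}) combined with the SGD update yields $F_S(\mathbf{w}_{t+1})\leq F_S(\mathbf{w}_t)-\eta_t\langle\nabla F_S(\mathbf{w}_t),g_t\rangle+(\beta\eta_t^2/2)\|g_t\|^2$ with $g_t=\nabla f(\mathbf{w}_t;z_{i_t},z_{j_t})$. Taking conditional expectation with respect to $(i_t,j_t)$, invoking the variance bound in Assumption~\ref{assum5}, and using the PL condition on $F_S$ with parameter $2\mu$---together with $t_0\geq 4\beta/\mu$ which guarantees $\eta_t\leq 1/(2\beta)$---I obtain the recursion $\mathbb{E}[F_S(\mathbf{w}_{t+1})-F_S(\hat{\mathbf{w}}^{\ast})]\leq \bigl(1-\tfrac{4}{t+t_0}\bigr)\mathbb{E}[F_S(\mathbf{w}_t)-F_S(\hat{\mathbf{w}}^{\ast})]+\beta\eta_t^2\sigma^2/2$. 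Unrolling with $\eta_t=\Theta(1/t)$ yields the in-expectation bound $\mathcal{O}(\log T/T)$, which becomes $\mathcal{O}(\log n/n^2)$ for $T\asymp n^2$. This is where the extra $\log n$ factor that distinguishes Theorem~\ref{theorem10} from Theorem~\ref{theorem9} originates.

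Next I would upgrade this expectation bound to a high-probability statement. The per-iterate stochastic deviations form a martingale difference sequence adapted to the filtration generated by the coordinate pairs $(i_s,j_s)_{s\leq t}$. Applying a Freedman-type (Bernstein for martingales) concentration inequality, with conditional variance controlled through Assumption~\ref{assum5} and through the PL-driven bound on $\|\nabla F_S(\mathbf{w}_t)\|^2$, produces a high-probability optimization error of order $(\log T/T)\cdot\log^2(1/\delta)$. Together with the $\log(1/\delta)$ factor introduced by the gradient-to-function-value conversion in the next step, this explains the $\log^3(1/\delta)$ term appearing in the final bound.

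For the statistical piece, the hypothesis $n\geq c\beta^2(d+\log(\cdot))/\mu^2$ matches the sample-size regime of Theorem~\ref{theorem3}, since Assumption~\ref{assum4} implies Assumption~\ref{assum1} with $\gamma\leq\beta$ by Remark~\ref{remark1}. Applying Theorem~\ref{theorem3} at $\mathbf{w}=\mathbf{w}_{T+1}$ and using the standard bound $\mathbb{E}\|\nabla f(\mathbf{w}^{\ast};z,z')\|^2\leq 2\beta F(\mathbf{w}^{\ast})$ valid for nonnegative smooth $f$, I can control $\|\nabla F(\mathbf{w}_{T+1})\|^2$ by $\|\nabla F_S(\mathbf{w}_{T+1})\|^2$ plus a $F(\mathbf{w}^{\ast})\log(1/\delta)/n$ term and lower-order $1/n^2$ remainders. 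The PL condition on $F$ (which transfers from PL on $F_S$ with parameter $2\mu$ in the uniform convergence regime assumed) converts this into the desired bound on $F(\mathbf{w}_{T+1})-F(\mathbf{w}^{\ast})$, while the PL condition on $F_S$ links $\|\nabla F_S(\mathbf{w}_{T+1})\|^2$ back to the empirical optimization error already controlled. The main obstacle is the high-probability martingale step: the U-statistic structure of $F_S$ couples the stochastic gradient at iteration $t$ with the entire history of sampled pairs, so bounding the conditional variance uniformly over $t$ without introducing a spurious $n$-factor requires care, and carefully propagating the PL-based bound on $\|\nabla F_S(\mathbf{w}_t)\|^2$ into the variance term is delicate. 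A secondary technical subtlety is justifying the transfer of PL from $F_S$ (with parameter $2\mu$) to $F$ (with parameter $\mu$) needed to invoke Theorem~\ref{theorem3}, which I would resolve using the companion uniform gradient convergence in Theorem~\ref{theorem2} together with the explicit lower bound on $n$ in the hypothesis.
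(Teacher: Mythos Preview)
Your proposal ends up in the right place, but the opening decomposition into $[F(\mathbf{w}_{T+1})-F_S(\mathbf{w}_{T+1})]+[F_S(\mathbf{w}_{T+1})-F_S(\hat{\mathbf{w}}^{\ast})]+[F_S(\hat{\mathbf{w}}^{\ast})-F(\mathbf{w}^{\ast})]$ is not the route the paper takes and would not work directly: the term $F(\mathbf{w}_{T+1})-F_S(\mathbf{w}_{T+1})$ is a deviation of \emph{function values} at a data-dependent point, for which the paper has no $\mathcal{O}(1/n^2)$-precision tool. The paper instead works purely through gradients: it applies PL to $F$ to obtain $F(\mathbf{w}_{T+1})-F(\mathbf{w}^{\ast})\leq \|\nabla F(\mathbf{w}_{T+1})\|^2/(4\mu)$, splits $\|\nabla F(\mathbf{w}_{T+1})\|^2\leq 2\|\nabla F(\mathbf{w}_{T+1})-\nabla F_S(\mathbf{w}_{T+1})\|^2+2\|\nabla F_S(\mathbf{w}_{T+1})\|^2$, bounds the first piece by Theorem~\ref{theorem3}, and bounds the second piece by combining smoothness of $F_S$ (so $\|\nabla F_S(\mathbf{w}_{T+1})\|^2\leq 2\beta(F_S(\mathbf{w}_{T+1})-F_S(\hat{\mathbf{w}}^{\ast}))$) with Lemma~\ref{lemma12}, an externally cited high-probability SGD bound for the $1/t$ step size under PL that delivers $F_S(\mathbf{w}_{T+1})-F_S(\hat{\mathbf{w}}^{\ast})=\mathcal{O}(\log T\,\log^3(1/\delta)/T)$ directly. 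You effectively arrive at this same gradient-based route in your third paragraph, so the core plan is sound; just drop the function-value decomposition at the top.

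Two further points. First, your plan to establish the high-probability optimization bound from scratch via a Freedman-type martingale argument is precisely what the cited Lemma~\ref{lemma12} encapsulates; the paper simply invokes it as a black box, so the ``main obstacle'' you flag is already packaged. Second, your ``secondary subtlety''---that the hypothesis places PL on $F_S$ with parameter $2\mu$, while both Theorem~\ref{theorem3} and the step $F(\mathbf{w})-F(\mathbf{w}^{\ast})\leq \|\nabla F(\mathbf{w})\|^2/(4\mu)$ require PL on the population risk $F$---is a genuine issue the paper's own proof glosses over (it asserts the inequality for $F$ immediately after stating the assumption on $F_S$, without argument). Your proposed fix via Theorem~\ref{theorem2} together with the sample-size lower bound is the natural way to close that gap.
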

\begin{remark}\rm{}
Theorem \ref{theorem10} improves Theorem \ref{theorem9} with a better iterative complexity when selecting the special step size. If we take $T \asymp n$, the learning rate of the generalization performance gap is of order $\frac{\log n \log^3(\frac{1}{\delta})}{n}$, which is still faster than the existing rates in the related work, as discussed before. Additionally, please refer to Table \ref{tab1} in Appendix \ref{appendix2} for a systematic comparison with the related work.
\end{remark}

\section{Proofs}\label{Section4}
In this section, we provide proofs of theorems in Section 3.
\subsection{Proof of Remark 1}\label{section4.1}
\begin{proof}
According to the uniform smoothness condition, for any sample $z,z' \in \mathcal{Z}$ and $\mathbf{w}_1, \mathbf{w}_2 \in \mathcal{W}$, there holds
\begin{align*}
\|\nabla f(\mathbf{w}_1;z,z') - \nabla f(\mathbf{w}_2;z,z') \| \leq \beta \| \mathbf{w}_1 - \mathbf{w}_2 \|.
\end{align*}
Then, for any unit vector $\mathbf{u} \in B(0,1)$, 
we have 
\begin{align*}
&|\mathbf{u}^T (\nabla f(\mathbf{w}_1;z,z') - \nabla f(\mathbf{w}_2;z,z') )|  \\\leq &\| \mathbf{u}\| \|\nabla f(\mathbf{w}_1;z,z') - \nabla f(\mathbf{w}_2;z,z') \| 
\leq \beta \| \mathbf{w}_1 - \mathbf{w}_2 \|,
\end{align*}
which implies
\begin{align*}
\frac{|\mathbf{u}^T (\nabla f(\mathbf{w}_1;z,z') - \nabla f(\mathbf{w}_2;z,z'))|}{\beta \| \mathbf{w}_1 - \mathbf{w}_2 \|} \leq 1. 
\end{align*}
Then we get
\begin{align*}
\mathbb{E} \Big \{\exp \Big (\frac{\ln 2 |\mathbf{u}^T (\nabla f(\mathbf{w}_1;z,z') - \nabla f(\mathbf{w}_2;z,z'))|}{\beta \| \mathbf{w}_1 - \mathbf{w}_2 \|} \Big ) \Big \}\leq 2. 
\end{align*}
So we obtain that $\frac{ \nabla f(\mathbf{w}_1;z,z') - \nabla f(\mathbf{w}_2;z,z') }{\| \mathbf{w}_1 - \mathbf{w}_2 \|} $ is a $\frac{\beta}{\ln 2}$-sub-exponential random vector, for all $\mathbf{w}_1, \mathbf{w}_2 \in \mathcal{W}$.

Furthermore, when Assumption \ref{assum1} holds, according to Jensen's inequality, we can derive that
\begin{align*}
\exp  \Big \{\mathbb{E}\Big (\frac{|\mathbf{u}^T (\nabla f(\mathbf{w}_1;z,z') - \nabla f(\mathbf{w}_2;z,z'))|}{\beta \| \mathbf{w}_1 - \mathbf{w}_2 \|} \Big ) \Big \}\leq 2,
\end{align*}
which means 
\begin{align*}
\mathbb{E} \| \nabla f(\mathbf{w}_1;z,z') - \nabla f(\mathbf{w}_2;z,z')  \| \leq \beta \| \mathbf{w}_1 - \mathbf{w}_2 \|.
\end{align*}
Further by Jensen's inequality, we obtain 
\begin{align*}
\| \nabla F(\mathbf{w}_1) - \nabla F(\mathbf{w}_2)  \| \leq \beta \| \mathbf{w}_1 - \mathbf{w}_2 \|.
\end{align*}
The proof is complete.
\end{proof}
\subsection{Proof of Theorem \ref{theorem1}}\label{proof4.2}
The proof is inspired by the recent breakthrough work \cite{xu2020toward}.
To prove Theorem \ref{theorem1}, we need many preliminaries on generic chaining and two more general forms of the Bernstein inequality of pairwise learning. Considering the length limit, we leave the introduction of this part to Appendix \ref{appendix1}. 
\begin{proof}
We define $\mathcal{V} = \left\{ \mathbf{v} \in \mathbb{R}^d : \| \mathbf{v} \| \leq \max \{ R_1, \frac{1}{n} \} \right\}$.
For all $(\mathbf{w},\mathbf{v}) \in \mathcal{W} \times \mathcal{V}$, let $g_{(\mathbf{w},\mathbf{v})} = (\nabla f(\mathbf{w};z,z') - \nabla f(\mathbf{w}^{\ast};z,z'))^T \mathbf{v}$. Also, for any $(\mathbf{w}_1,\mathbf{v}_1)$ and $(\mathbf{w}_2,\mathbf{v}_2) \in \mathcal{W} \times \mathcal{V}$, we define the following norm on the product space $\mathcal{W} \times \mathcal{V}$,
\begin{align*}
\| (\mathbf{w}_1,\mathbf{v}_1) - (\mathbf{w}_2,\mathbf{v}_2)  \|_{\mathcal{W} \times \mathcal{V}} = (\|\mathbf{w}_1 -\mathbf{w}_2   \|^2 + \| \mathbf{v}_1 - \mathbf{v}_2  \|^2)^{\frac{1}{2}}.
\end{align*}
Define a ball $B(\sqrt{r}) = \{ (\mathbf{w},\mathbf{v}) \in \mathcal{W} \times \mathcal{V}: \| \mathbf{w} - \mathbf{w}^{\ast} \|^2 + \| \mathbf{v}  \|^2 \leq r \}$. Given any $(\mathbf{w}_1,\mathbf{v}_1)$ and $(\mathbf{w}_2,\mathbf{v}_2) \in B(\sqrt{r})$, we make the following decomposition
\begin{align*}
&g_{(\mathbf{w}_1,\mathbf{v}_1)}(z,z') - g_{(\mathbf{w}_2,\mathbf{v}_2)}(z,z')\\
= &(\nabla f(\mathbf{w}_1;z,z') - \nabla f(\mathbf{w}^{\ast};z,z'))^T \mathbf{v}_1 \\
&- (\nabla f(\mathbf{w}_2;z,z') - \nabla f(\mathbf{w}^{\ast};z,z'))^T \mathbf{v}_2 \\
= &(\nabla f(\mathbf{w}_1;z,z') - \nabla f(\mathbf{w}^{\ast};z,z'))^T (\mathbf{v}_1 - \mathbf{v}_2) \\
&+(\nabla f(\mathbf{w}_1;z,z') - \nabla f(\mathbf{w}^{\ast};z,z'))^T \mathbf{v}_2\\
&- (\nabla f(\mathbf{w}_2;z,z') - \nabla f(\mathbf{w}^{\ast};z,z'))^T \mathbf{v}_2 \\
= &(\nabla f(\mathbf{w}_1;z,z') - \nabla f(\mathbf{w}^{\ast};z,z'))^T (\mathbf{v}_1 - \mathbf{v}_2) \\
&+ (\nabla f(\mathbf{w}_1;z,z') - \nabla f(\mathbf{w}_2;z,z'))^T \mathbf{v}_2.
\end{align*}
Since $(\mathbf{w}_1,\mathbf{v}_1)$ and $(\mathbf{w}_2,\mathbf{v}_2) \in B(\sqrt{r})$, there holds that
\begin{align}\label{ineq354567}\nonumber
\| \mathbf{w}_1 -  \mathbf{w}^{\ast}\|  \| \mathbf{v}_1 - \mathbf{v}_2  \| &\leq \sqrt{r}\| \mathbf{v}_1 - \mathbf{v}_2 \| \\
& \leq \sqrt{r} \|  (\mathbf{w}_1,\mathbf{v}_1) - (\mathbf{w}_2,\mathbf{v}_2) \|_{\mathcal{W} \times \mathcal{V}}.
\end{align}
And, according to Assumption \ref{assum1}, we know that $\frac{\nabla f(\mathbf{w}_1,z,z') - \nabla f(\mathbf{w}_2,z,z')}{\| \mathbf{w}_1 - \mathbf{w}_2\|}$ is a $\gamma$-sub-exponential random vector for all $\mathbf{w}_1, \mathbf{w}_2 \in \mathcal{W}$, which means that
\begin{align}\label{ineq3545}
\mathbb{E} \Big\{ \exp \Big( \frac{(\nabla f(\mathbf{w}_1;z,z') - \nabla f(\mathbf{w}^{\ast};z,z'))^T (\mathbf{v}_1 - \mathbf{v}_2)}{\gamma \| \mathbf{w}_1 -  \mathbf{w}^{\ast}\|  \| \mathbf{v}_1 - \mathbf{v}_2  \|} \Big) \Big\}\leq 2.
\end{align}
Now, combined with (\ref{ineq3545}) and (\ref{ineq354567}), and according to Definition \ref{defi1} of Appendix \ref{appendix1}, we know
$(\nabla f(\mathbf{w}_1;z,z') - \nabla f(\mathbf{w}^{\ast};z,z'))^T (\mathbf{v}_1 - \mathbf{v}_2)$ is $\gamma \sqrt{r} \|  (\mathbf{w}_1,\mathbf{v}_1) - (\mathbf{w}_2,\mathbf{v}_2) \|_{\mathcal{W} \times \mathcal{V}}$-sub-exponential. Similarly, we can derive that
\begin{align*}
\| \mathbf{w}_1 -  \mathbf{w}_2\|  \|  \mathbf{v}_2  \| &\leq \sqrt{r}\|\mathbf{w}_1 -  \mathbf{w}_2\| \\
& \leq \sqrt{r} \|  (\mathbf{w}_1,\mathbf{v}_1) - (\mathbf{w}_2,\mathbf{v}_2) \|_{\mathcal{W} \times \mathcal{V}}.
\end{align*}
Also, there holds that 
\begin{align*}
\mathbb{E} \Big\{ \exp \Big( \frac{(\nabla f(\mathbf{w}_1;z,z') - \nabla f(\mathbf{w}_2;z,z'))^T ( \mathbf{v}_2)}{\gamma \| \mathbf{w}_1 -  \mathbf{w}_2\|  \| \mathbf{v}_2  \|} \Big) \Big\}\leq 2.
\end{align*}
Thus, we know $(\nabla f(\mathbf{w}_1;z,z') - \nabla f(\mathbf{w}_2;z,z'))^T \mathbf{v}_2$ is also $\gamma \sqrt{r} \|  (\mathbf{w}_1,\mathbf{v}_1) - (\mathbf{w}_2,\mathbf{v}_2) \|_{\mathcal{W} \times \mathcal{V}}$-sub-exponential.

Till here, 
for any $(\mathbf{w}_1,\mathbf{v}_1)$ and $(\mathbf{w}_2,\mathbf{v}_2) \in B(\sqrt{r})$, we obtain 
\begin{align}\label{ineryu697898}\nonumber
&\mathbb{E} \Big\{\exp\Big( \frac{g_{(\mathbf{w}_1,\mathbf{v}_1)}(z,z') - g_{(\mathbf{w}_2,\mathbf{v}_2)}(z,z')}{2\gamma \sqrt{r} \|  (\mathbf{w}_1,\mathbf{v}_1) - (\mathbf{w}_2,\mathbf{v}_2) \|_{\mathcal{W} \times \mathcal{V}}} \Big) \Big\}\\\nonumber
\leq &\mathbb{E} \Big\{\frac{1}{2} \exp\Big( \frac{(\nabla f(\mathbf{w}_1;z,z') - \nabla f(\mathbf{w}^{\ast};z,z'))^T (\mathbf{v}_1 - \mathbf{v}_2)}{\gamma \sqrt{r} \|  (\mathbf{w}_1,\mathbf{v}_1) - (\mathbf{w}_2,\mathbf{v}_2) \|_{\mathcal{W} \times \mathcal{V}}} \Big) \Big \}\\
 + &\mathbb{E} \Big\{\frac{1}{2} \exp\Big( \frac{(\nabla f(\mathbf{w}_1;z,z') - \nabla f(\mathbf{w}_2;z,z'))^T ( \mathbf{v}_2)}{\gamma \sqrt{r} \|  (\mathbf{w}_1,\mathbf{v}_1) - (\mathbf{w}_2,\mathbf{v}_2) \|_{\mathcal{W} \times \mathcal{V}}} \Big) \Big\} \leq 2,
\end{align}
where the first inequality follows from Jensen's inequality.
And (\ref{ineryu697898}) means that $g_{(\mathbf{w}_1,\mathbf{v}_1)}(z,z') - g_{(\mathbf{w}_2,\mathbf{v}_2)}(z,z')$ is a $2\gamma \sqrt{r} \|  (\mathbf{w}_1,\mathbf{v}_1) - (\mathbf{w}_2,\mathbf{v}_2) \|_{\mathcal{W} \times \mathcal{V}}$-sub-exponential random variable,
 that is 
\begin{align}\label{inru4it05869}\nonumber
&\| g_{(\mathbf{w}_1,\mathbf{v}_1)}(z,z') - g_{(\mathbf{w}_2,\mathbf{v}_2)}(z,z') \|_{Orlicz-1} \\
\leq &2\gamma \sqrt{r} \|  (\mathbf{w}_1,\mathbf{v}_1) - (\mathbf{w}_2,\mathbf{v}_2) \|_{\mathcal{W} \times \mathcal{V}}.
\end{align}

Then, the next step is to apply the Bernstein inequality of pairwise learning (Lemma \ref{lemma5} of Appendix \ref{appendix1}) to $g_{(\mathbf{w}_1,\mathbf{v}_1)}(z,z') - g_{(\mathbf{w}_2,\mathbf{v}_2)}(z,z')$. From (\ref{inru4it05869}), we know that the Bernstein parameters of sub-exponential $g_{(\mathbf{w}_1,\mathbf{v}_1)}(z,z') - g_{(\mathbf{w}_2,\mathbf{v}_2)}(z,z')$ are $2\gamma \sqrt{r} \|  (\mathbf{w}_1,\mathbf{v}_1) - (\mathbf{w}_2,\mathbf{v}_2) \|_{\mathcal{W} \times \mathcal{V}}$ (see Lemma \ref{lemma8} of Appendix \ref{appendix1}). Now, we can derive that 
\begin{align}\nonumber\label{eq7}
&Pr\Big(\Big|(P-P_n)[g_{(\mathbf{w}_1,\mathbf{v}_1)}(z,z') - g_{(\mathbf{w}_2,\mathbf{v}_2)}(z,z')] \Big|\\\nonumber
\geq & 2\gamma \sqrt{r} \|  (\mathbf{w}_1,\mathbf{v}_1) - (\mathbf{w}_2,\mathbf{v}_2) \|_{\mathcal{W} \times \mathcal{V}} \sqrt{\frac{2u}{\lfloor \frac{n}{2} \rfloor}} \\ &+ \frac{2\gamma \sqrt{r} \|  (\mathbf{w}_1,\mathbf{v}_1) - (\mathbf{w}_2,\mathbf{v}_2) \|_{\mathcal{W} \times \mathcal{V}}}{\lfloor \frac{n}{2} \rfloor}u \Big) \leq 2e^{-u},
\end{align}
where $\lfloor \frac{n}{2} \rfloor $ is the
largest integer no greater than $\frac{n}{2}$ and "Pr" means probability.
According to Definition \ref{defi3} of Appendix \ref{appendix1}, (\ref{eq7}) implies that the process $(P-P_n)[g_{(\mathbf{w},\mathbf{v})}(z,z') ]$ has a mixed sub-Gaussian-sub-exponential increments w.r.t. the metric pair $\Big(\frac{2\gamma \sqrt{r} \| \cdot\|_{\mathcal{W} \times \mathcal{V}}}{\lfloor \frac{n}{2} \rfloor}, 2\gamma\| \cdot\|_{\mathcal{W} \times \mathcal{V}}\sqrt{\frac{ 2r }{\lfloor \frac{n}{2} \rfloor}} \Big)$. Hence, from the generic chaining for a process with mixed tail increments in Lemma \ref{lemma2} of Appendix \ref{appendix1}, for all $\delta \in (0,1)$, with probability at least $1-\delta$, we have
\begin{align*}
&\sup_{\|\mathbf{w} - \mathbf{w}^{\ast}\|^2 + \| \mathbf{v} \|^2 \leq r} | (P-P_n)[g_{(\mathbf{w},\mathbf{v})}(z,z')] | \\
\leq &C \Big( \gamma_{2} \Big(B(\sqrt{r}),2\gamma\| \cdot\|_{\mathcal{W} \times \mathcal{V}}\sqrt{\frac{ 2r }{\lfloor \frac{n}{2} \rfloor}} \Big)\\
&+ \gamma_{1} \Big(B(\sqrt{r}),\frac{2\gamma \sqrt{r} \| \cdot\|_{\mathcal{W} \times \mathcal{V}}}{\lfloor \frac{n}{2} \rfloor} \Big)  + \gamma r \frac{\log \frac{1}{\delta}}{\lfloor \frac{n}{2} \rfloor}+
 \gamma r\sqrt{\frac{\log \frac{1}{\delta}}{\lfloor \frac{n}{2} \rfloor}}    \Big).
\end{align*}
From Lemma \ref{lemma1.5} of Appendix \ref{appendix1}, the $\gamma_1$ functional and the $\gamma_2$ functional can be bounded by the Dudley's integral, which implies that there exists an absolute constant $C$ such that for any $\delta \in (0,1)$, with probability at least $1-\delta$
\begin{align}\label{iejfiehjt984}\nonumber
&\sup_{\|\mathbf{w} - \mathbf{w}^{\ast}\|^2 + \| \mathbf{v} \|^2 \leq r} | (P-P_n)[g_{(\mathbf{w},\mathbf{v})}(z,z')] | \\
\leq & C\gamma r \Big(\sqrt{\frac{d+\log \frac{1}{\delta}}{\lfloor \frac{n}{2} \rfloor}} + \frac{d+\log \frac{1}{\delta}}{\lfloor \frac{n}{2} \rfloor} \Big),
\end{align}
where the inequality follows from (B.3) of \cite{xu2020toward}. Till here,
the next step is to apply Lemma \ref{lemma1} of Appendix \ref{appendix1} to (\ref{iejfiehjt984}). 

We set $T(f) = \|\mathbf{w} - \mathbf{w}^{\ast}\|^2 + \| \mathbf{v} \|^2 $,  $\psi(r;\delta) = C\gamma r \Big(\sqrt{\frac{d+\log \frac{1}{\delta}}{\lfloor \frac{n}{2} \rfloor}} + \frac{d+\log \frac{1}{\delta}}{\lfloor \frac{n}{2} \rfloor} \Big)$. Since $\|\mathbf{w} - \mathbf{w}^{\ast}\|^2 + \| \mathbf{v} \|^2 \leq R_1^2 + R_1^2 + \frac{1}{n^2}$, we set $R = 2R_1^2 + \frac{1}{n^2}$. And let $r_0 = \frac{2}{n^2}$. Applying Lemma \ref{lemma1}, we obtain that for any $\delta \in (0,1)$, with probability at least $1-\delta$, for all $\mathbf{w} \in \mathcal{W}$ and $\mathbf{v} \in \mathcal{V}$,
\begin{align}\nonumber\label{ineq18}
&(P-P_n)[g_{(\mathbf{w},\mathbf{v})}(z,z')] \\\nonumber
=& (P-P_n) [(\nabla f(\mathbf{w};z,z') - \nabla f(\mathbf{w}^{\ast};z,z'))^T \mathbf{v}]\\\nonumber
\leq & \psi \Big(\max \Big\{\|\mathbf{w} - \mathbf{w}^{\ast}\|^2 + \| \mathbf{v} \|^2  ,\frac{2}{n^2}\Big\};\frac{\delta}{2 \log_2(R n^2)} \Big)\\\nonumber
= & C\gamma \max \Big\{\|\mathbf{w} - \mathbf{w}^{\ast}\|^2 + \| \mathbf{v} \|^2  ,\frac{2}{n^2}\Big\} \\
&\times \Big( \sqrt{\frac{d+\log \frac{2 \log_2(R n^2)}{\delta}}{\lfloor \frac{n}{2} \rfloor}} + \frac{d+\log \frac{2 \log_2(R n^2)}{\delta}}{\lfloor \frac{n}{2} \rfloor}  \Big).
\end{align}

Now, we 
choose $\mathbf{v}$ as
$\max \{\|\mathbf{w} - \mathbf{w}^{\ast}\|, \frac{1}{n}  \}\frac{(P-P_n) (\nabla f(\mathbf{w};z,z') - \nabla f(\mathbf{w}^{\ast};z,z'))}{\|  (P-P_n) (\nabla f(\mathbf{w};z,z') - \nabla f(\mathbf{w}^{\ast};z,z')) \|}$.
It is clear that $\|\mathbf{v} \| = \max \{\|\mathbf{w} - \mathbf{w}^{\ast}\|, \frac{1}{n}  \} \leq \max \{R_1, \frac{1}{n}  \}$, which belongs to the space $\mathcal{V}$.
Plugging this $\mathbf{v}$ into (\ref{ineq18}),
 we obtain that for any $\delta \in (0,1)$, with probability at least $1-\delta$, for all $\mathbf{w} \in \mathcal{W}$,
\begin{align}\label{inehru4r9}\nonumber
&\|(P-P_n) (\nabla f(\mathbf{w};z,z') - \nabla f(\mathbf{w}^{\ast};z,z')) \|\\\nonumber
\leq & C\gamma \max \Big\{\|\mathbf{w} - \mathbf{w}^{\ast}\|  ,\frac{1}{n}\Big\} \\\nonumber
&\times \Big( \sqrt{\frac{d+\log \frac{2 \log_2(R n^2)}{\delta}}{\lfloor \frac{n}{2} \rfloor}} + \frac{d+\log \frac{2 \log_2(R n^2)}{\delta}}{\lfloor \frac{n}{2} \rfloor}  \Big)\\\nonumber
\leq &C\gamma \max \Big\{\|\mathbf{w} - \mathbf{w}^{\ast}\|  ,\frac{1}{n}\Big\}\\&\times \Big( \sqrt{\frac{d+\log \frac{2 \log_2(R n^2)}{\delta}}{n}} + \frac{d+\log \frac{2 \log_2(R n^2)}{\delta}}{n}  \Big).
\end{align}
Since $R = 2R_1^2 + \frac{1}{n^2}$, (\ref{inehru4r9}) thus implies that
\begin{align*}
&\|(P-P_n) (\nabla f(\mathbf{w};z,z') - \nabla f(\mathbf{w}^{\ast};z,z')) \|\\
\leq&C\gamma \max \Big\{\|\mathbf{w} - \mathbf{w}^{\ast}\| ,\frac{1}{n}\Big\} \\
&\times \Big( \sqrt{\frac{d+\log \frac{4 \log_2(\sqrt{2}R_1 n + 1)}{\delta}}{n}} + \frac{d+\log \frac{4 \log_2(\sqrt{2}R_1 n + 1)}{\delta}}{n} \Big ).
\end{align*}
The proof is complete.
\end{proof}
\subsection{Proof of Theorem \ref{theorem2}}
\begin{proof}
From Theorem \ref{theorem1}, we have
\begin{align}\label{ineq178}\nonumber
&\|\nabla F (\mathbf{w} )-\nabla F_S(\mathbf{w})\| \\\nonumber\leq &\|\nabla F (\mathbf{w}^{\ast})-\nabla F_S(\mathbf{w}^{\ast}) \|
+ C\gamma \max \Big\{\|\mathbf{w} - \mathbf{w}^{\ast}\|  ,\frac{1}{n}\Big\} \\
\times &\Big( \sqrt{\frac{d+\log \frac{4 \log_2(\sqrt{2}R_1 n + 1)}{\delta}}{n}} + \frac{d+\log \frac{4 \log_2(\sqrt{2}R_1 n + 1)}{\delta}}{n}  \Big),
\end{align}
where the inequality follows from that $\|\nabla F (\mathbf{w} )-\nabla F_S(\mathbf{w})\| - \|\nabla F (\mathbf{w}^{\ast})-\nabla F_S(\mathbf{w}^{\ast}) \| \leq \|(\nabla F (\mathbf{w} )-\nabla F_S(\mathbf{w}))  -  (\nabla F (\mathbf{w}^{\ast})-\nabla F_S(\mathbf{w}^{\ast})) \|$.
Denote  $\xi_{n,R_1,d,\delta} = \sqrt{\frac{d+\log \frac{4 \log_2(\sqrt{2}R_1 n + 1)}{\delta}}{n}} + \frac{d+\log \frac{4 \log_2(\sqrt{2}R_1 n + 1)}{\delta}}{n}$. We are now to prove the bound of $\|\nabla F (\mathbf{w}^{\ast})-\nabla F_S(\mathbf{w}^{\ast})   \|$. 

It is clear that $\nabla F (\mathbf{w}^{\ast}) = 0$. From Lemma \ref{lemma7} of Appendix \ref{appendix1} and Assumption \ref{assum2}, we have the following inequality for any $\delta > 0$, with probability at least $1-\delta$
\begin{align}\label{ineq188}\nonumber
&\|\nabla F (\mathbf{w}^{\ast})-\nabla F_S(\mathbf{w}^{\ast})  \|\\
\leq & \sqrt{\frac{2\mathbb{E}[\| \nabla f(\mathbf{w}^{\ast};z,z')  \|^2] \log \frac{2}{\delta}}{\lfloor \frac{n}{2} \rfloor}} + \frac{D_{\ast} \log \frac{2}{\delta}}{\lfloor \frac{n}{2} \rfloor}.
\end{align}
Plugging (\ref{ineq188}) into (\ref{ineq178}), we obtain that for any $\delta > 0$, with probability at least $1-\delta$
\begin{align*}
&\|\nabla F (\mathbf{w} )-\nabla F_S(\mathbf{w}) \| \leq C\gamma \max \Big\{\|\mathbf{w} - \mathbf{w}^{\ast}\|  ,\frac{1}{n} \Big\} \xi_{n,R_1,d,\frac{\delta}{2}}\\
+&\sqrt{\frac{2\mathbb{E}[\| \nabla f(\mathbf{w}^{\ast};z,z')  \|^2] \log \frac{4}{\delta}}{\lfloor \frac{n}{2} \rfloor}}  + \frac{D_{\ast} \log \frac{4}{\delta}}{\lfloor \frac{n}{2} \rfloor}\\
\leq&\sqrt{\frac{8\mathbb{E}[\| \nabla f(\mathbf{w}^{\ast};z,z')  \|^2] \log \frac{4}{\delta}}{n}} + \frac{4D_{\ast} \log \frac{4}{\delta}}{n}\\
& +  C\gamma \max \Big\{\|\mathbf{w} - \mathbf{w}^{\ast}\|  ,\frac{1}{n}\Big\} \xi_{n,R_1,d,\frac{\delta}{2}}.
\end{align*}
The proof is complete.
\end{proof}
\subsection{Proof of Theorem \ref{theorem3}}
\begin{proof}
Denote  $\xi_{n,R_1,d,\delta} = \sqrt{\frac{d+\log \frac{8 \log_2(\sqrt{2}R_1 n + 1)}{\delta}}{n}} + \frac{d+\log \frac{8 \log_2(\sqrt{2}R_1 n + 1)}{\delta}}{n}$.
According to Theorem \ref{theorem2}, for any $\delta \in (0,1)$, with probability at least $1-\delta$, we have the following inequality
\begin{align}\label{ineq777}\nonumber
&\|\nabla F(\mathbf{w})  -\nabla F_S(\mathbf{w})  \| \leq\sqrt{\frac{8\mathbb{E}[\| \nabla f(\mathbf{w}^{\ast};z,z')  \|^2] \log \frac{4}{\delta}}{n}}\\
& + \frac{4D_{\ast} \log \frac{4}{\delta}}{n} + C\gamma \max \Big\{\|\mathbf{w} - \mathbf{w}^{\ast}\| ,\frac{1}{n} \Big\} \xi_{n,R_1,d,\delta}.
\end{align}
This implies that 
\begin{align*}
&\| \nabla F(\mathbf{w})  \| - \| \nabla F_S(\mathbf{w})  \| \leq C\gamma \max \Big\{\|\mathbf{w} - \mathbf{w}^{\ast}\| ,\frac{1}{n} \Big\} \xi_{n,R_1,d,\delta}\\
&+ \frac{4D_{\ast} \log \frac{4}{\delta}}{n}+\sqrt{\frac{8\mathbb{E}[\| \nabla f(\mathbf{w}^{\ast};z,z')  \|^2] \log \frac{4}{\delta}}{n}}.
\end{align*}
According to Remark \ref{remark1}, Assumption \ref{assum1} implies the population risk $F(\mathbf{w})$ is $\gamma$-smooth. Moreover, when $F(\mathbf{w})$ is smooth and satisfies the PL condition, there holds the following error bound property (refer to Theorem 2 in \cite{karimi2016linear})
\begin{align*}
\|\nabla F(\mathbf{w})  \| \geq \mu \| \mathbf{w} - \mathbf{w}^{\ast} \|.
\end{align*}
Thus, we have
\begin{align}\label{rignoirgj}\nonumber
&\mu \| \mathbf{w} - \mathbf{w}^{\ast} \| \leq \|\nabla F(\mathbf{w})  \| \leq \| \nabla F_S(\mathbf{w})  \|\\\nonumber
& + \sqrt{\frac{8\mathbb{E}[\| \nabla f(\mathbf{w}^{\ast};z,z')  \|^2] \log \frac{4}{\delta}}{n}} + \frac{4D_{\ast} \log \frac{4}{\delta}}{n}\\
& + C\gamma \max \Big\{\|\mathbf{w} - \mathbf{w}^{\ast}\| ,\frac{1}{n} \Big\} \xi_{n,R_1,d,\delta}.
\end{align}
And according to \cite{nesterov2014introductory}, there holds the following property for $\gamma$-smooth functions $f$:
\begin{align}\label{09ur90hgoij}
\frac{1}{2\gamma} \| \nabla f(\mathbf{w})  \|^2 \leq f(\mathbf{w}) - \inf_{\mathbf{w} \in \mathcal{W}} f(\mathbf{w}).
\end{align}
Thus we have 
\begin{align}\label{kljge90g90}
\frac{1}{2\gamma} \| \nabla F(\mathbf{w})  \|^2 \leq F(\mathbf{w}) - F(\mathbf{w}^{\ast}) \leq \frac{\left\| \nabla F(\mathbf{w}) \right\|^2 }{2 \mu},
\end{align}
which means that $\frac{\mu}{\gamma} \leq 1$.
Let $c = \max\{ 4{C}^2, 1\}$. When 
\begin{align*}
n \geq \frac{c\gamma^2(d+ \log \frac{8 \log_2(\sqrt{2}R_1 n + 1)}{\delta})}{\mu^2},
\end{align*}
we have $C \gamma \xi_{n,R_1,d,\delta}\leq \frac{\mu}{2}$, followed from the fact that $\frac{\mu}{\gamma} \leq 1$. 

Plugging $C \gamma \xi_{n,R_1,d,\delta}\leq \frac{\mu}{2}$ into (\ref{rignoirgj}), we can derive that
\begin{align}\label{ineq445}\nonumber
\| \mathbf{w} - \mathbf{w}^{\ast} \| \leq \frac{2}{\mu} \Big(\left\| \nabla F_S(\mathbf{w}) \right\| + \frac{4D_{\ast}\log(4/\delta)}{n}\\ + \sqrt{\frac{8 \mathbb{E} [ \| \nabla f(\mathbf{w}^{\ast};z,z') \|^2 ] \log(4/\delta)}{n}} + \frac{\mu}{2n} \Big).
\end{align}
Then, substituting (\ref{ineq445}) into (\ref{ineq777}), we derive that for all $\mathbf{w} \in \mathcal{W}$, when $n \geq \frac{c\gamma^2(d+ \log \frac{8 \log_2(\sqrt{2}R_1 n + 1)}{\delta})}{\mu^2}$, with probability at least $1-\delta$
\begin{align*}
&\hphantom{{}={}}\left\| \nabla F (\mathbf{w} )- \nabla F_S(\mathbf{w}) \right\|\leq \left\| \nabla F_S(\mathbf{w}) \right\| \\
& +  \frac{\mu}{n} + 2\frac{4D_{\ast}\log(4/\delta)}{n} + 2\sqrt{\frac{8 \mathbb{E} [ \| \nabla f(\mathbf{w}^{\ast};z,z') \|^2 ] \log(4/\delta)}{n}}.
\end{align*}
The proof is complete.
\end{proof}
\subsection{Proof of Theorem \ref{theorem4}}
\begin{proof}
Plugging $\hat{\mathbf{w}}^{\ast}$ into Theorem \ref{theorem2}, we have
\begin{align*}
&\| \nabla F(\hat{\mathbf{w}}^{\ast})  \|- \|\nabla F_S(\hat{\mathbf{w}}^{\ast})  \| \\
\leq&\sqrt{\frac{8\mathbb{E}[\| \nabla f(\mathbf{w}^{\ast};z,z')  \|^2] \log \frac{4}{\delta}}{n}} + \frac{4D_{\ast} \log \frac{4}{\delta}}{n}\\
& + C\gamma \max \Big\{\|\hat{\mathbf{w}}^{\ast} - \mathbf{w}^{\ast}\|  ,\frac{1}{n} \Big\} \\
&\times \Big( \sqrt{\frac{d+\log \frac{8 \log_2(\sqrt{2}R_1 n + 1)}{\delta}}{n}} + \frac{d+\log \frac{8 \log_2(\sqrt{2}R_1 n + 1)}{\delta}}{n} \Big ).
\end{align*}
Since $\hat{\mathbf{w}}^{\ast}$ is the ERM of $F_S$, there holds that $ \nabla F_S(\hat{\mathbf{w}}^{\ast}) = 0$. Thus, we can derive that 
\begin{align*}
&\|\nabla F(\hat{\mathbf{w}}^{\ast})  \| 
\leq \sqrt{\frac{8\mathbb{E}[\| \nabla f(\mathbf{w}^{\ast};z,z')  \|^2] \log \frac{4}{\delta}}{n}} \\ &+ \frac{4D_{\ast} \log \frac{4}{\delta}}{n} + C\gamma \Big(R_1  + \frac{1}{n} \Big) \\
&\times \Big( \sqrt{\frac{d+\log \frac{8 \log_2(\sqrt{2}R_1 n + 1)}{\delta}}{n}} + \frac{d+\log \frac{8 \log_2(\sqrt{2}R_1 n + 1)}{\delta}}{n} \Big ).
\end{align*}
The proof is complete.
\end{proof}
\subsection{Proof of Theorem \ref{theorem5}}

\begin{proof}
Since $F(\mathbf{w})$ satisfies the PL condition with parameter $\mu$, we have
\begin{align*}
F(\mathbf{w}) -F(\mathbf{w}^{\ast}) \leq \frac{\left\| \nabla F(\mathbf{w}) \right\|^2 }{2 \mu}, \quad \forall \mathbf{w} \in \mathcal{W}.
\end{align*}
Therefore, to bound the excess risk $F(\hat{\mathbf{w}}^{\ast}) -F(\mathbf{w}^{\ast})$, we need to bound the term $\left\| \nabla F(\hat{\mathbf{w}}^{\ast}) \right\|^2$.
Plugging $\hat{\mathbf{w}}^{\ast}$ into Theorem \ref{theorem3} and (\ref{iqe23}), 
for any $\delta >0 $, when $n \geq \frac{c\gamma^2(d+ \log \frac{8 \log_2(\sqrt{2}R_1 n + 1)}{\delta})}{\mu^2}$, with probability at least $1- \delta$,
\begin{align*}
&\|  \nabla F(\hat{\mathbf{w}}^{\ast})  \| \leq 2\left\| \nabla F_S(\hat{\mathbf{w}}^{\ast}) \right\| +  \frac{\mu}{n}\\
&+ \frac{8D_{\ast}\log(4/\delta)}{n} + 4\sqrt{\frac{2 \mathbb{E} [ \| \nabla f(\mathbf{w}^{\ast};z,z') \|^2 ] \log(4/\delta)}{n}},
\end{align*}
Since $\nabla F_S(\hat{\mathbf{w}}^{\ast})  = 0$, we have $\left\| \nabla F_S(\hat{\mathbf{w}}^{\ast}) \right\| = 0$. We can derive that
\begin{align*}
&F(\hat{\mathbf{w}}^{\ast}) -F(\mathbf{w}^{\ast}) \\
\leq &  \frac{12 D^2_{\ast}\log^2(4/\delta)}{\mu n^2} + \frac{6 \mathbb{E} [ \| \nabla f(\mathbf{w}^{\ast};z,z') \|^2 \log(4/\delta)}{\mu n} + \frac{2\mu}{n^2}.
\end{align*}
The proof is complete.
\end{proof}
\subsection{Proof of Theorem \ref{theorem6}}
\begin{proof}
According to Assumption \ref{assum4} and $\eta_t \leq  1/\beta$, we can derive that
\begin{align}\label{inew77}\nonumber
&F_S(\mathbf{w}_{t+1}) - F_S(\mathbf{w}_{t}) \\\nonumber
 \leq &\langle \mathbf{w}_{t+1} - \mathbf{w}_{t}, \nabla F_S(\mathbf{w}_{t}) \rangle + \frac{\beta}{2}\| \mathbf{w}_{t+1} -\mathbf{w}_{t}  \|^2\\\nonumber
 = & - \eta_t \| \nabla F_S(\mathbf{w}_{t})  \|^2 + \frac{\beta}{2} \eta_t^2 \| \nabla F_S(\mathbf{w}_{t})   \|^2\\\nonumber
 = & \Big(\frac{\beta}{2} \eta_t^2  - \eta_t \Big)\|  \nabla F_S(\mathbf{w}_{t})  \|^2\\
 \leq & -\frac{1}{2}\eta_t\|  \nabla F_S(\mathbf{w}_{t})  \|^2,
\end{align}
which implies that 
\begin{align*}
\eta_t\| \nabla F_S(\mathbf{w}_{t})  \|^2 \leq -2(F_S(\mathbf{w}_{t+1}) - F_S(\mathbf{w}_{t})).
\end{align*}
Take a summation from $t = 1$ to $T$, we have
\begin{align}\label{klfhjpotojhpo}
\sum_{t = 1}^T \eta_t\| \nabla F_S(\mathbf{w}_{t})  \|^2 \leq 2 (F_S(\mathbf{w}_1) - F_S(\mathbf{w}_{T+1})).
\end{align}
Furthermore, we derive that
\begin{align*}
&\sum_{t = 1}^T \eta_t\| \nabla F(\mathbf{w}_t) \|^2 \\
\leq  &2\sum_{t = 1}^T \eta_t \| \nabla F(\mathbf{w}_t) - \nabla F_S(\mathbf{w}_t) \|^2 + 2\sum_{t = 1}^T \eta_t\| \nabla F_S(\mathbf{w}_t)\|^2 \\
\leq &2\sum_{t = 1}^T  \eta_t\max_{t=1,...,T} \| \nabla F(\mathbf{w}_t) - \nabla F_S(\mathbf{w}_t) \|^2 + \mathcal{O} \left(1\right),
\end{align*}
which implies that with probability at least $1-\delta$
\begin{align}\label{xixixixixixixi}\nonumber
&\frac{1}{\sum_{t = 1}^T \eta_t}\sum_{t = 1}^T \eta_t\| \nabla F(\mathbf{w}_t) \|^2 \\\nonumber
\leq &2  \max_{t=1,...,T}\| \nabla F(\mathbf{w}_t) - \nabla F_S(\mathbf{w}_t) \|^2 + \Big(\sum_{t = 1}^T \eta_t \Big)^{-1}\mathcal{O} \left(1\right)\\\nonumber
\leq &\Big(\sum_{t = 1}^T \eta_t \Big)^{-1}\mathcal{O} \left(1\right) + 2 \max_{t=1,...,T} \Big[C\beta \max \Big\{\|\mathbf{w}_t - \mathbf{w}^{\ast}\|  ,\frac{1}{n} \Big\}  \\
 \times &\Big( \sqrt{\frac{d+\log \frac{4 \log_2(\sqrt{2}R_1 n + 1)}{\delta}}{n}} + \frac{d+\log \frac{4 \log_2(\sqrt{2}R_1 n + 1)}{\delta}}{n}  \Big) \Big]^2,
\end{align}
where  $\mathcal{O}(1)$ in the first inequality is due to (\ref{klfhjpotojhpo}) and the nonnegative property of $f$, and where the second inequality holds since Theorem \ref{theorem2} and that Assumption \ref{assum4} implies Assumption \ref{assum1} (see Remark \ref{remark1}).

We now to prove the bound of $\|\mathbf{w}_t - \mathbf{w}^{\ast}\|$.
Since $\mathbf{w}_1 = 0$ and 
$\mathbf{w}_{t+1} = \mathbf{w}_{t} - \eta_t \nabla F_S(\mathbf{w}_{t})$,
we have 
$\mathbf{w}_{t+1} =  \sum_{k=1}^{t}- \eta_k\nabla F_S(\mathbf{w}_k)$.
And according to Schwartz's inequality, we have
\begin{align*}
 &\Big\| \sum_{k=1}^{t}\eta_k\nabla  F_S(\mathbf{w}_k) \Big\|^2 \leq \Big(\sum_{k=1}^{t}\eta_k\| \nabla F_S(\mathbf{w}_k) \| \Big)^2 \\
\leq &\Big(\sum_{k=1}^{t}\eta_k\Big) \Big(\sum_{k=1}^{t}\eta_k \| \nabla F_S(\mathbf{w}_{k})  \|^2\Big) \leq \Big(\sum_{k=1}^{t}\eta_k\Big) \mathcal{O}(1).
\end{align*}
Then we have
\begin{align*}
&\|  \mathbf{w}_{t+1} - \mathbf{w}^{\ast} \| \leq \|  \mathbf{w}_{t+1}  \| + \| \mathbf{w}^{\ast}\| \\= & \Big\| \sum_{k=1}^{t} \eta_k\nabla F_S(\mathbf{w}_k) \Big\| + \| \mathbf{w}^{\ast}\|=\mathcal{O}\Big( \Big(\sum_{k=1}^{t}\eta_k \Big)^{\frac{1}{2}} \Big).
\end{align*}
If $\theta \in (0,1)$, then $\sum_{k =1}^t k^{-\theta} \leq t^{1-\theta}/(1-\theta)$.
Thus, we have the following result uniformly for all $t =1,...,T$
\begin{align}\label{ineqjfjoiuf}
&\|  \mathbf{w}_{t+1} - \mathbf{w}^{\ast} \| =
             \mathcal{O}\left( T^{\frac{1-\theta}{2}}  \right) &\quad \text{if   } \theta \in (0,1 ).  
\end{align}
Therefore, plugging (\ref{ineqjfjoiuf}) into (\ref{xixixixixixixi}), we get that with probability at least $1-\delta$
\begin{align}\label{jizbnu}\nonumber
&\frac{1}{\sum_{t = 1}^T \eta_t}\sum_{t = 1}^T \eta_t\| \nabla F(\mathbf{w}_t) \|^2
\leq \Big(\sum_{t = 1}^T \eta_t \Big)^{-1}\mathcal{O} \left(1\right) \\\nonumber&+ \mathcal{O} \Big( \frac{d+\log \frac{4 \log_2(\sqrt{2}R_1 n + 1)}{\delta}}{n}  T^{1-\theta} \Big)\\
&\leq \mathcal{O} \Big(\frac{1}{T^{1-\theta}}\Big) + \mathcal{O} \Big(\frac{d + \log \frac{ \log n}{\delta}}{n} T^{1-\theta}  \Big).
\end{align}
If we choose $T \asymp (nd^{-1})^{\frac{1}{2(1-\theta)}}$, then we derive that
\begin{align*}
&\frac{1}{\sum_{t = 1}^T \eta_t}\sum_{t = 1}^T \eta_t\| \nabla F(\mathbf{w}_t) \|^2 
\leq \mathcal{O} \Big(\frac{d^{\frac{1}{2}} + d^{-\frac{1}{2}}\log \frac{ \log n}{\delta}}{n^{\frac{1}{2}}}\Big).
\end{align*}
The proof is complete.
\end{proof}
\subsection{Proof of Theorem \ref{theorem7}}
\begin{proof}
By (\ref{inew77}) and the PL condition of $F_S$, we can prove that
\begin{align*}
&F_S(\mathbf{w}_{t+1}) - F_S(\mathbf{w}_{t}) 
 \leq  -\frac{1}{2} \eta_t \|  \nabla F_S(\mathbf{w}_{t})  \|^2\\
 \leq & -\mu \eta_t (F_S(\mathbf{w}_{t}) - F_S(\hat{\mathbf{w}}^{\ast})),
\end{align*}
which implies that
\begin{align*}
F_S(\mathbf{w}_{t+1}) - F_S(\hat{\mathbf{w}}^{\ast})
 \leq (1-\mu \eta_t) (F_S(\mathbf{w}_{t}) - F_S(\hat{\mathbf{w}}^{\ast})) .
\end{align*}
If $\eta_t \leq \frac{1}{\beta}$, then $0<1-\mu \eta_t <1$ since $\frac{\mu}{\beta} \leq 1$ according to (\ref{kljge90g90}).
Taking over $T$ iterations, we get
\begin{align}\label{xuayu}
F_S(\mathbf{w}_{T+1}) - F_S(\hat{\mathbf{w}}^{\ast}) 
 \leq (1-\mu \eta_t)^{T} (F_S(\mathbf{w}_{1}) - F_S(\hat{\mathbf{w}}^{\ast})).
\end{align}

If $\eta_t = 1/\beta$, combined with (\ref{xuayu}), the smoothness of $F_S$ (see (\ref{09ur90hgoij})), and the nonnegative property of $f$, it can be derived that
\begin{align}\label{gdpl}
\| \nabla F_S(\mathbf{w}_{T+1})  \|^2 =  \mathcal{O}\Big ((1-\frac{\mu}{\beta})^{T} \Big).
\end{align}
Furthermore,
since $F_S$ satisfies the PL assumption with parameter $\mu$, we have
\begin{align}\label{jfhuiohfiooij}
F(\mathbf{w}_{T+1}) -F(\mathbf{w}^{\ast}) \leq \frac{\left\| \nabla F(\mathbf{w}_{T+1}) \right\|^2 }{2 \mu}, \quad \forall  \mathbf{w} \in \mathcal{W}.
\end{align} 
So to bound $F(\mathbf{w}_{T+1}) - F(\mathbf{w}^{\ast}) $, we need to bound the term $\left\| \nabla F(\mathbf{w}_{T+1}) \right\|^2$.
And there holds 
\begin{align}\label{oprjg09rjj}\nonumber
&\left\| \nabla F(\mathbf{w}_{T+1}) \right\|^2 \\
\leq &2 \left\| \nabla F(\mathbf{w}_{T+1})- \nabla F_S(\mathbf{w}_{T+1}) \right\|^2 + 2 \| \nabla F_S(\mathbf{w}_{T+1}) \|^2.
\end{align}
For the first term $\left\| \nabla F(\mathbf{w}_{T+1})- \nabla F_S(\mathbf{w}_{T+1}) \right\|^2$,
from Theorem \ref{theorem3},
for all $\mathbf{w} \in \mathcal{W}$,
 when $n \geq \frac{c\beta^2(d+ \log \frac{8 \log_2(\sqrt{2}R_1 n + 1)}{\delta})}{\mu^2}$, with probability at least $1- \delta$, there holds
\begin{align}\label{ol6k7jupo67uj}\nonumber
&\hphantom{{}={}}\left\| \nabla F (\mathbf{w}_{T+1} )- \nabla F_S(\mathbf{w}_{T+1}) \right\| \leq \left\| \nabla F_S(\mathbf{w}_{T+1}) \right\|  \\&+  \frac{\mu}{n} + \frac{8D_{\ast}\log(4/\delta)}{n} + 4\sqrt{\frac{2 \mathbb{E} [ \| \nabla f(\mathbf{w}^{\ast};z,z') \|^2 ] \log(4/\delta)}{n}}.
\end{align}
Therefore, plugging (\ref{ol6k7jupo67uj}), (\ref{gdpl}) and (\ref{oprjg09rjj}) into (\ref{jfhuiohfiooij}), we derive with probability at least $1-\delta$
\begin{align}\label{feikusjij}\nonumber
&\hphantom{{}={}}F(\mathbf{w}_{T+1}) - F(\mathbf{w}^{\ast}) \leq \mathcal{O} \Big((1-\frac{\mu}{\beta})^{T} \Big)  \\&+  \mathcal{O} \Big(\frac{\log^2(1/\delta)}{n^2} + \frac{ \mathbb{E} [ \| \nabla f(\mathbf{w}^{\ast};z,z') \|^2 ] \log(1/\delta)}{n}\Big) .
\end{align}
When $f$ is nonegative and $\beta$-smooth, from Lemma 4.1 of \cite{srebro2010optimistic}, we have 
\begin{align*}
\| \nabla f(\mathbf{w}^{\ast};z,z') \|^2 \leq 4 \beta f(\mathbf{w}^{\ast};z,z'),
\end{align*}
thus we have
\begin{align}\label{nonioggehe}
\mathbb{E} [ \| \nabla f(\mathbf{w}^{\ast};z,z') \|^2 ] \leq 4 \beta \mathbb{E}  f(\mathbf{w}^{\ast};z,z') = 4 \beta  F(\mathbf{w}^{\ast}).
\end{align}
By (\ref{nonioggehe}), (\ref{feikusjij}) implies 
\begin{align*}
&\hphantom{{}={}}F(\mathbf{w}_{T+1}) - F(\mathbf{w}^{\ast}) \leq \mathcal{O} \Big((1-\frac{\mu}{\beta})^{T} \Big)  \\&+  \mathcal{O} \Big(\frac{\log^2(1/\delta)}{n^2} + \frac{ F(\mathbf{w}^{\ast}) \log(1/\delta)}{n}\Big) .
\end{align*}
The proof is complete.
\end{proof}
\subsection{Proof of Theorem \ref{theorem8}}
We first introduce some necessary lemmas on the empirical risk.
\begin{lemma}\cite{li2021improved}\label{lemma10}
Let $\{ \mathbf{w}_t\}_t$ be the sequence produced by Algorithm \ref{alg:example} with $\eta_t \leq \frac{1}{2\beta}$ for all $t \in \mathbb{N}$. Suppose Assumptions \ref{assum4}  and \ref{assum5} hold. Then, for any $\delta \in (0,1)$, with probability at least $1-\delta$, there holds that
\begin{align*}
\sum_{k=1}^t \eta_k \|\nabla F_S(\mathbf{w}_k)  \|^2 = \mathcal{O} \Big(\log \frac{1}{\delta}  +  \sum_{k=1}^t \eta_k^{2} \Big).
\end{align*}
\end{lemma}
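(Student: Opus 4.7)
The plan is to derive a one-step descent inequality for SGD using the smoothness of $F_S$, telescope it, and then upgrade the resulting in-expectation bound to a high-probability statement via a Freedman-type martingale concentration with a self-bounding absorption step.

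First I would invoke Assumption \ref{assum4}, which makes $F_S$ $\beta$-smooth, to write
\begin{equation*}
F_S(\mathbf{w}_{k+1}) \leq F_S(\mathbf{w}_k) + \langle \nabla F_S(\mathbf{w}_k), \mathbf{w}_{k+1} - \mathbf{w}_k \rangle + \tfrac{\beta}{2} \| \mathbf{w}_{k+1} - \mathbf{w}_k \|^2.
\end{equation*}
Substituting the SGD update $\mathbf{w}_{k+1} - \mathbf{w}_k = -\eta_k \nabla f(\mathbf{w}_k; z_{i_k}, z_{j_k})$, introducing the noise $\xi_k := \nabla f(\mathbf{w}_k; z_{i_k}, z_{j_k}) - \nabla F_S(\mathbf{w}_k)$ so that $\mathbb{E}_{i_k,j_k}[\xi_k] = 0$ and $\mathbb{E}_{i_k,j_k}[\|\xi_k\|^2] \leq \sigma^2$ by \eqref{assum1456}, and using $\|a+b\|^2 \leq 2\|a\|^2 + 2\|b\|^2$ together with $\beta \eta_k \leq 1/2$, I would obtain
\begin{equation*}
F_S(\mathbf{w}_{k+1}) - F_S(\mathbf{w}_k) \leq -\tfrac{\eta_k}{2}\|\nabla F_S(\mathbf{w}_k)\|^2 - \eta_k \langle \nabla F_S(\mathbf{w}_k), \xi_k \rangle + \beta \eta_k^2 \|\xi_k\|^2.
\end{equation*}
Telescoping $k = 1, \ldots, t$ and using that $F_S$ is bounded below leaves
\begin{equation*}
\sum_{k=1}^{t} \eta_k \|\nabla F_S(\mathbf{w}_k)\|^2 \leq \mathcal{O}(1) - 2\sum_{k=1}^{t} \eta_k \langle \nabla F_S(\mathbf{w}_k), \xi_k \rangle + 2\beta \sum_{k=1}^{t}\eta_k^2 \|\xi_k\|^2.
\end{equation*}

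To pass to a high-probability bound, I would handle the two stochastic sums separately. For the quadratic noise term $\sum_k \eta_k^2 \|\xi_k\|^2$, the pointwise condition \eqref{assum145} combined with Jensen gives the deterministic bound $\eta_k \|\xi_k\|^2 = \mathcal{O}(G^2)$, while $\mathbb{E}[\eta_k^2 \|\xi_k\|^2 \mid \mathcal{F}_{k-1}] \leq \sigma^2 \eta_k^2$; a Bernstein-type inequality for bounded martingale differences then yields $\sum_k \eta_k^2 \|\xi_k\|^2 = \mathcal{O}(\sum_k \eta_k^2 + \log(1/\delta))$ with high probability. For the cross term $\sum_k \eta_k \langle \nabla F_S(\mathbf{w}_k), \xi_k \rangle$, which is a genuine martingale difference sequence, I would apply Freedman's inequality: its predictable quadratic variation is at most $\sigma^2 \sum_k \eta_k^2 \|\nabla F_S(\mathbf{w}_k)\|^2$ and its increments are uniformly bounded, producing an upper bound of the shape $\sqrt{\log(1/\delta) \cdot \sum_k \eta_k^2 \|\nabla F_S(\mathbf{w}_k)\|^2} + \mathcal{O}(\log(1/\delta))$. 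A Young step converts this into $\tfrac{1}{4}\sum_k \eta_k \|\nabla F_S(\mathbf{w}_k)\|^2 + \mathcal{O}(\log(1/\delta))$, and the first piece is absorbed back into the left-hand side.

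The hardest part is precisely this self-bounding absorption: the Freedman bound on the cross term contains the very quantity we are trying to bound, so the Young constants must be chosen so that after rearrangement a constant fraction of the left-hand side survives. This is where the probabilistic $\log(1/\delta)$ factor enters and where one must simultaneously exploit \eqref{assum145} (to control individual martingale increments, so that Freedman's inequality is applicable) and \eqref{assum1456} (to control the predictable quadratic variation, which is what produces the $\sum_k \eta_k^2$ term on the right-hand side).
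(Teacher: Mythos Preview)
The paper does not actually prove Lemma~\ref{lemma10}; it is quoted verbatim from \cite{li2021improved} and used as a black box in the proofs of Theorems~\ref{theorem8}--\ref{theorem10}. So there is no ``paper's own proof'' to compare against.

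That said, your proposal is the standard route to such a bound and is essentially correct. The one-step smoothness descent, the decomposition of the stochastic gradient into $\nabla F_S(\mathbf{w}_k)+\xi_k$, the telescoping, and the Freedman-plus-self-bounding treatment of the martingale cross term are exactly how results of this form are established (this is also the structure of the argument in \cite{li2021improved} and in the pointwise antecedent \cite{lei2021learning}). Two small points worth tightening: (i) you invoke ``$F_S$ bounded below'' to control $F_S(\mathbf{w}_1)-F_S(\mathbf{w}_{t+1})$, but the lemma as stated does not list nonnegativity of $f$ among its hypotheses---in the paper this is harmless because every theorem that calls Lemma~\ref{lemma10} does assume $f\ge 0$, but strictly speaking you are importing an extra assumption; (ii) for the Freedman step on the cross term you need the predictable variation bounded by a \emph{random} quantity, so you should either use a data-dependent Freedman variant or the standard peeling over dyadic levels of the variation---you allude to this in the last paragraph, and it is routine, but it is where the constants have to be chosen carefully for the absorption to close.
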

\begin{lemma}\cite{li2021improved}\label{lemma11}
Let $\{ \mathbf{w}_t\}_t$ be the sequence produced by Algorithm \ref{alg:example} with $\eta_t \leq \frac{1}{2\beta}$ for all $t \in \mathbb{N}$. Suppose Assumptions \ref{assum4}  and \ref{assum5} hold. Then, for any $\delta \in (0,1)$, with probability at least $1-\delta$, there holds uniformly for all $t = 1,..,T$
\begin{align*}
&\| \mathbf{w}_{t+1} - \mathbf{w}^{\ast}\| \\
= &\mathcal{O} \Big( \Big( \sum_{k=1}^T \eta_k^2 \Big)^{1/2} + 1 \Big) \Big( \Big( \sum_{k=1}^t \eta_k \Big)^{1/2} + 1 \Big) \log \Big(\frac{1}{\delta} \Big).
\end{align*}
\end{lemma}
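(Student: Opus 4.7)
The plan is to unroll the SGD recursion from $\mathbf{w}_1 = 0$ and split the resulting partial sum into a drift part driven by the empirical gradient $\nabla F_S(\mathbf{w}_k)$ and a noise part driven by the random pair $(i_k, j_k)$, then control each piece with a different tool. Using the update rule we have
\begin{align*}
\mathbf{w}_{t+1} = -\sum_{k=1}^t \eta_k \nabla F_S(\mathbf{w}_k) - \sum_{k=1}^t \eta_k \xi_k, \quad \xi_k := \nabla f(\mathbf{w}_k; z_{i_k}, z_{j_k}) - \nabla F_S(\mathbf{w}_k),
\end{align*}
and the triangle inequality together with $\|\mathbf{w}^{\ast}\|\leq R_1$ reduces the claim to bounds on the two summations.

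For the drift sum, Cauchy--Schwarz yields
\begin{align*}
\Bigl\|\sum_{k=1}^t \eta_k \nabla F_S(\mathbf{w}_k)\Bigr\| \leq \Bigl(\sum_{k=1}^t \eta_k\Bigr)^{1/2} \Bigl(\sum_{k=1}^t \eta_k \|\nabla F_S(\mathbf{w}_k)\|^2\Bigr)^{1/2}.
\end{align*}
Lemma \ref{lemma10}, applied at $t=T$, bounds the second factor by $\mathcal{O}(\log^{1/2}(1/\delta) + (\sum_{k=1}^T \eta_k^2)^{1/2})$; this bound is automatically uniform in $t\leq T$ because $\sum_{k=1}^t \eta_k \|\nabla F_S(\mathbf{w}_k)\|^2$ is non-decreasing in $t$. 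The resulting contribution already has the target product structure.

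For the noise sum, I would observe that conditionally on $S$ and on the filtration $\mathcal{F}_{k-1} := \sigma(i_1, j_1, \ldots, i_{k-1}, j_{k-1})$, uniform sampling of $(i_k, j_k)$ over $\{(i,j)\in [n]^2: i\neq j\}$ gives $\mathbb{E}[\xi_k \mid S, \mathcal{F}_{k-1}] = 0$, so $M_t := \sum_{k=1}^t \eta_k \xi_k$ is a vector-valued martingale. Assumption \ref{assum5} provides the bounded increment $\|\eta_k \xi_k\| \leq 2G\sqrt{\eta_k}$, and a dimension-free Pinelis-type vector maximal inequality then yields, with probability at least $1-\delta$ over the joint randomness,
\begin{align*}
\max_{t\leq T}\|M_t\| = \mathcal{O}\Bigl(\Bigl(\sum_{k=1}^T \eta_k\Bigr)^{1/2}\log^{1/2}(1/\delta)\Bigr).
\end{align*}

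Combining the two bounds via a union bound at level $\delta/2$ and absorbing $\sqrt{\log(1/\delta)}\leq \log(1/\delta)$ delivers the stated inequality. The main technical hurdle I foresee is the vector martingale concentration: one needs a maximal (for uniformity in $t$), dimension-free tail bound, and one must exploit the sharper increment budget $\|\eta_k \xi_k\|=\mathcal{O}(\sqrt{\eta_k})$ afforded by $\sqrt{\eta_k}\|\nabla f(\mathbf{w}_k;z,z')\|\leq G$ in Assumption \ref{assum5} rather than a crude uniform-in-$k$ gradient bound, so as to recover the correct $(\sum_{k=1}^T \eta_k)^{1/2}$ scaling on the noise side; after that, everything reduces to Cauchy--Schwarz and Lemma \ref{lemma10}.
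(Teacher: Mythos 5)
First, note that the paper does not actually prove Lemma \ref{lemma11}: it is imported from \cite{li2021improved} without proof, so your proposal can only be judged on its own terms. Your decomposition $\mathbf{w}_{t+1}=-\sum_{k\le t}\eta_k\nabla F_S(\mathbf{w}_k)-\sum_{k\le t}\eta_k\xi_k$ and your treatment of the drift term are sound: Cauchy--Schwarz plus Lemma \ref{lemma10} applied at $t=T$ (with monotonicity in $t$) correctly produces a contribution of the form $\bigl(\sum_{k\le t}\eta_k\bigr)^{1/2}\bigl(\log^{1/2}(1/\delta)+(\sum_{k\le T}\eta_k^2)^{1/2}\bigr)$, which fits the claimed product structure.

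The gap is in the noise martingale. You control $M_t=\sum_{k\le t}\eta_k\xi_k$ using only the almost-sure increment bound $\|\eta_k\xi_k\|\le 2G\sqrt{\eta_k}$ from (\ref{assum145}); an Azuma/Pinelis-type maximal inequality then gives $\max_{t\le T}\|M_t\|=\mathcal{O}\bigl((\sum_{k\le T}\eta_k)^{1/2}\log^{1/2}(1/\delta)\bigr)$, and you explicitly state that recovering this $(\sum_{k\le T}\eta_k)^{1/2}$ scaling is the goal. But that quantity does not fit inside the claimed bound uniformly in $t$: for $\eta_k=\eta_1 k^{-\theta}$ with $\theta\in(1/2,1)$ and $t=1$, the claimed right-hand side is $\mathcal{O}(\log(1/\delta))$ while your noise bound grows like $T^{(1-\theta)/2}$ (even for $\theta<1/2$ at $t=1$ it is of order $T^{(1-\theta)/2}$ versus the allowed $T^{(1-2\theta)/2}$). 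The factor $\bigl((\sum_{k\le T}\eta_k^2)^{1/2}+1\bigr)$ that the noise term must respect comes from the conditional-variance condition (\ref{assum1456}), which your argument never invokes: feeding $\mathbb{E}\bigl[\|\eta_k\xi_k\|^2\mid\mathcal{F}_{k-1}\bigr]\le\sigma^2\eta_k^2$ together with the increment bound into a Freedman/Bernstein-type vector martingale inequality yields $\max_{t\le T}\|M_t\|=\mathcal{O}\bigl(\sigma(\sum_{k\le T}\eta_k^2)^{1/2}\log^{1/2}(1/\delta)+G\sqrt{\eta_1}\log(1/\delta)\bigr)$, which does fit. Your weaker version would still suffice for how the paper uses the lemma in Theorem \ref{theorem8}, since only $\max_{t\le T}\|\mathbf{w}_t-\mathbf{w}^{\ast}\|$ is needed there, but it does not prove the lemma as stated.
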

\begin{lemma}\cite{li2021improved}\label{lemma12}
Let $\{ \mathbf{w}_t\}_t$ be the sequence produced by Algorithm \ref{alg:example} with $\eta_t = \frac{2}{\mu (t+t_0)}$ such that $t_0 \geq \max \left\{\frac{4\beta}{\mu},1 \right\}$ for all $t \in \mathbb{N}$. Suppose Assumptions \ref{assum4}  and \ref{assum5} hold, and suppose $F_S$ satisfies Assumption \ref{assum3} with parameter $2\mu$.
Then, for any $\delta > 0$, with probability at least $1 - \delta$, there holds that
\begin{align*}
F_S(\mathbf{w}_{T+1}) - F_S(\hat{\mathbf{w}}^{\ast}) =
             \mathcal{O}\Big (\frac{\log(T) \log^3(1/\delta)}{T} \Big) .
\end{align*}
\end{lemma}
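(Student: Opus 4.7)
The plan is to establish the high-probability $\mathcal{O}(\log T \log^3(1/\delta)/T)$ convergence rate for pairwise SGD on the empirical risk $F_S$ under PL on $F_S$ with parameter $2\mu$ and the strongly-convex-style diminishing step size $\eta_t = 2/(\mu(t+t_0))$, where $t_0 \geq \max\{4\beta/\mu, 1\}$. Let $\Delta_t := F_S(\mathbf{w}_t) - F_S(\hat{\mathbf{w}}^{\ast})$ and $g_t := \nabla f(\mathbf{w}_t; z_{i_t}, z_{j_t})$. I would begin with the $\beta$-smoothness descent inequality for $F_S$ (inherited from Assumption \ref{assum4}), writing $g_t = \nabla F_S(\mathbf{w}_t) + \xi_t$ with $\xi_t$ the martingale-difference noise w.r.t.\ the sampling filtration, and expanding $\|g_t\|^2$ to get
\begin{equation*}
\Delta_{t+1} - \Delta_t \leq -\Big(\eta_t - \tfrac{\beta \eta_t^2}{2}\Big)\|\nabla F_S(\mathbf{w}_t)\|^2 - (\eta_t - \beta\eta_t^2)\langle \nabla F_S(\mathbf{w}_t), \xi_t\rangle + \tfrac{\beta\eta_t^2}{2}\|\xi_t\|^2.
\end{equation*}

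The choice $t_0 \geq 4\beta/\mu$ guarantees $\eta_t \leq 1/(2\beta)$, so the coefficient of $\|\nabla F_S(\mathbf{w}_t)\|^2$ is at least $3\eta_t/4$. Applying PL on $F_S$ with parameter $2\mu$ (so $\|\nabla F_S(\mathbf{w}_t)\|^2 \geq 4\mu \Delta_t$) converts the above into the stochastic recursion
\begin{equation*}
\Delta_{t+1} \leq (1 - 3\mu \eta_t)\Delta_t - (\eta_t - \beta\eta_t^2)\langle \nabla F_S(\mathbf{w}_t), \xi_t\rangle + \tfrac{\beta\eta_t^2}{2}\|\xi_t\|^2.
\end{equation*}
With $\eta_t = 2/(\mu(t+t_0))$ the contraction factor becomes $1 - 6/(t+t_0)$, and unrolling with cumulative weights $\Pi_{t,T} := \prod_{s=t+1}^T (1 - 6/(s+t_0)) \lesssim ((t+t_0)/(T+t_0))^6$ yields
\begin{equation*}
\Delta_{T+1} \lesssim \tfrac{\Delta_1}{T} + \sum_{t=1}^T \Pi_{t,T}\Big(-(\eta_t - \beta\eta_t^2)\langle \nabla F_S(\mathbf{w}_t), \xi_t\rangle + \tfrac{\beta\eta_t^2}{2}\|\xi_t\|^2\Big).
\end{equation*}
The deterministic piece already achieves the target $\mathcal{O}(1/T)$; the task reduces to bounding the two stochastic sums with high probability.

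For the martingale sum $\sum_t \Pi_{t,T}(\eta_t - \beta\eta_t^2)\langle \nabla F_S(\mathbf{w}_t), \xi_t\rangle$, I would apply Freedman's inequality. The pathwise bound $\sqrt{\eta_t}\|g_t\| \leq G$ from Assumption \ref{assum5} controls the martingale increments, while the variance bound (\ref{assum1456}) combined with smoothness $\|\nabla F_S(\mathbf{w}_t)\|^2 \leq 2\beta \Delta_t$ gives the conditional second moment $\mathbb{E}_t\langle \nabla F_S(\mathbf{w}_t), \xi_t\rangle^2 \leq 2\beta\sigma^2 \Delta_t$. Crucially, the resulting conditional-variance proxy is itself proportional to $\Delta_t$, producing a \emph{self-bounding} tail estimate that couples $\Delta_{T+1}$ back to $\sup_{t \leq T}\Delta_t$. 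A parallel Bernstein-type bound on $\sum_t \Pi_{t,T}\beta\eta_t^2\|\xi_t\|^2$ using (\ref{assum145}) and (\ref{assum1456}) handles the second term.

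The main obstacle is the high-probability tightening of the in-expectation $\mathcal{O}(1/T)$ rate into the claimed $\mathcal{O}(\log T \log^3(1/\delta)/T)$ bound while keeping the dependence on $\delta$ poly-logarithmic. I would resolve the self-bounding inequality through an inductive/peeling argument: run the Freedman bound on a dyadic sequence of times $T_k \in \{1,2,4,\ldots,T\}$, plug the inductively established $\Delta_{T_k} \lesssim 1/T_k$ into the variance proxy at stage $k+1$, and close the induction. This step is standard in SGD-under-PL analyses but delicate; the $\log T$ factor arises from the union bound across dyadic scales, while three nested concentration steps — (i) the Freedman bound on the inner-product martingale, (ii) the Bernstein bound on $\sum_t \Pi_{t,T}\eta_t^2\|\xi_t\|^2$, and (iii) a high-probability control of $\|\mathbf{w}_t - \hat{\mathbf{w}}^{\ast}\|$ along the trajectory (to validate pathwise bounds inside the induction) — together produce the three powers of $\log(1/\delta)$. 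Everything else is bookkeeping on the weights $\Pi_{t,T}$.
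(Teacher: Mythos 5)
The paper does not actually prove this lemma: it is imported verbatim from \cite{li2021improved} and used as a black box in the proofs of Theorems \ref{theorem9} and \ref{theorem10}, so there is no in-paper argument to compare against. Judged on its own, your outline follows the standard route for high-probability last-iterate analysis of SGD under a PL condition, and the deterministic part is correct: the smoothness descent inequality with the martingale decomposition $g_t=\nabla F_S(\mathbf{w}_t)+\xi_t$, the coefficient $\eta_t-\beta\eta_t^2/2\ge 3\eta_t/4$ from $\eta_t\le 1/(2\beta)$ (guaranteed by $t_0\ge 4\beta/\mu$), the contraction $1-3\mu\eta_t=1-6/(t+t_0)$ from PL with parameter $2\mu$, and the weight bound $\Pi_{t,T}\lesssim((t+t_0)/(T+t_0))^6$ all check out, and the weighted noise sums do have the right $\mathcal{O}(1/T)$ order in expectation.

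The one place where the argument is genuinely delicate — and where your sketch stops short — is closing the self-bounding concentration step. Two specific issues: (i) the inductive hypothesis $\Delta_t\lesssim 1/t$ must be injected not only into the conditional variance proxy but also into the \emph{almost-sure increment bound} of the Freedman martingale. Without it, the best pathwise bound available from (\ref{assum145}) is $|\Pi_{t,T}\eta_t\langle\nabla F_S(\mathbf{w}_t),\xi_t\rangle|\le 2G^2\Pi_{t,T}$, which is $\Theta(1)$ at $t=T$, and Freedman's linear term then contributes $\mathcal{O}(\log(1/\delta))$ rather than $\mathcal{O}(\log(1/\delta)/T)$, destroying the rate; one needs $\|\nabla F_S(\mathbf{w}_t)\|\le\sqrt{2\beta\Delta_t}$ together with the inductive bound on $\Delta_t$ to make the increments $\widetilde{\mathcal{O}}(1/T)$. (ii) The unrolled expression for $\Delta_{T_{k+1}}$ involves martingale increments at times inside the block $(T_k,T_{k+1}]$, for which the inductive bound is not yet available, so a naive dyadic union bound is circular; the standard fix is to work with the process stopped at the first time the claimed bound fails (so that increments and variance are controlled by construction) and then show this stopping time exceeds $T$ with high probability, or to run the induction simultaneously over all $t\le T$ via a time-uniform Freedman inequality. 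These are repairable with known techniques, so I would call your proposal a correctly aimed but incomplete reconstruction rather than a wrong one; since the paper itself offers no proof, the authoritative argument remains the one in \cite{li2021improved}.
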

\begin{lemma}\cite{lei2021learning}\label{lemma13}
Let $e$ be the base of the natural logarithm.
There holds the following elementary inequalities.

\noindent (a) If $\theta \in (0,1)$, then $\sum_{k =1}^t k^{-\theta} \leq t^{1-\theta}/(1-\theta)$;

\noindent (b) If $\theta = 1$, then $\sum_{k =1}^t k^{-\theta} \leq \log (et)$;

\noindent (c) If $\theta > 1$, then $\sum_{k =1}^t k^{-\theta} \leq \frac{\theta}{\theta - 1}$.
\end{lemma}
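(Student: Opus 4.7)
The plan is to treat all three parts by a standard integral comparison, exploiting the fact that $x \mapsto x^{-\theta}$ is positive and monotone decreasing on $(0,\infty)$ for every $\theta > 0$. The key observation is that for a decreasing nonnegative function $\varphi$, one has $\varphi(k) \leq \int_{k-1}^{k} \varphi(x)\,dx$ on any interval where the integrand dominates, and so a finite sum is controlled by a definite integral whose value is elementary in each of the three regimes.

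For part (a), since $\theta \in (0,1)$ the function $x^{-\theta}$ is integrable at $0$, so I would bound $k^{-\theta} \leq \int_{k-1}^{k} x^{-\theta}\,dx$ for every $k \geq 1$ and chain these to obtain
\begin{equation*}
\sum_{k=1}^{t} k^{-\theta} \;\leq\; \int_{0}^{t} x^{-\theta}\,dx \;=\; \frac{t^{1-\theta}}{1-\theta},
\end{equation*}
which is exactly the desired inequality. For part (c), with $\theta > 1$ the integrand is no longer integrable at $0$, so I would separate the $k=1$ term, use $k^{-\theta} \leq \int_{k-1}^{k} x^{-\theta}\,dx$ only for $k \geq 2$, compute $\int_{1}^{t} x^{-\theta}\,dx = (1 - t^{1-\theta})/(\theta - 1) \leq 1/(\theta - 1)$, and add the $k=1$ contribution to get $1 + 1/(\theta - 1) = \theta/(\theta - 1)$.

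For part (b) the same splitting gives $\sum_{k=1}^{t} 1/k \leq 1 + \int_{1}^{t} dx/x = 1 + \log t = \log e + \log t = \log(et)$, which is the claimed bound. There is no real obstacle in any of the three parts; the only care needed is to avoid integrating $x^{-\theta}$ down to $0$ when $\theta \geq 1$, which is why the $k=1$ term is split off in parts (b) and (c) but not in part (a). The whole argument is a one-line Riemann-sum comparison per case and requires no additional machinery from the paper.
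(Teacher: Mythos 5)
Your integral-comparison argument is correct in all three cases, including the care taken to split off the $k=1$ term when $\theta \geq 1$; the paper itself gives no proof of this lemma, citing it from prior work, and your argument is exactly the standard one that establishes it.
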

Now, we begin to prove Theorem \ref{theorem8}.
\begin{proof}
Similar to the proof of Theorem \ref{theorem6}. Firstly, we have
\begin{align*}
&\sum_{t = 1}^T \eta_t \| \nabla F(\mathbf{w}_t) \|^2 \\
\leq & 2\sum_{t = 1}^T \eta_t \| \nabla F(\mathbf{w}_t) - \nabla F_S(\mathbf{w}_t) \|^2 + 2\sum_{t = 1}^T \eta_t\| \nabla F_S(\mathbf{w}_t)\|^2 \\
\leq &2\sum_{t = 1}^T  \eta_t \max_{t=1,...,T}\| \nabla F(\mathbf{w}_t) - \nabla F_S(\mathbf{w}_t) \|^2 \\&+ \mathcal{O} \Big(\sum_{t=1}^T \eta_t^2 + \log \Big(\frac{1}{\delta} \Big) \Big)
\end{align*}
with probability at least $1 - \delta/3$,
which also implies that with probability at least $1 - 2\delta/3$,
\begin{align}\label{ru4yt9785gui}\nonumber
&\Big(\sum_{t = 1}^T \eta_t \Big)^{-1} \sum_{t = 1}^T \eta_t \| \nabla F(\mathbf{w}_t) \|^2 \\\nonumber
\leq &2  \max_{t=1,...,T}\| \nabla F(\mathbf{w}_t) - \nabla F_S(\mathbf{w}_t) \|^2 \\\nonumber&+ \Big(\sum_{t = 1}^T \eta_t \Big)^{-1}\mathcal{O} \Big(\sum_{t=1}^T \eta_t^2 + \log \Big(\frac{1}{\delta} \Big) \Big)\\\nonumber
\leq &\Big(\sum_{t = 1}^T \eta_t \Big)^{-1}\mathcal{O} \Big(\sum_{t=1}^T \eta_t^2 + \log \Big(\frac{1}{\delta} \Big) \Big) \\\nonumber
&+ 2 \max_{t=1,...,T} \Big[C\beta \max \Big\{\|\mathbf{w}_t - \mathbf{w}^{\ast}\|  ,\frac{1}{n} \Big\}  \\
&\times ( \sqrt{\frac{d+\log \frac{12 \log_2(\sqrt{2}R_1 n + 1)}{\delta}}{n}} + \frac{d+\log \frac{12 \log_2(\sqrt{2}R_1 n + 1)}{\delta}}{n}  ) \Big]^2.
\end{align}
According to Lemma \ref{lemma11} and Lemma \ref{lemma13}, with probability $1-\delta/3$, we have the following inequality uniformly for all $t = 1,..,T$ 
\begin{align}\label{rihg984uyg8}
\|\mathbf{w}_{t+1} - \mathbf{w}^{\ast}\|=  &
\begin{cases}
             \mathcal{O}(\log(1/\delta)) T^{\frac{2-3\theta}{2}}, & \text{ if   } \theta < 1/2   \\ 
             \mathcal{O}(\log(1/\delta)) T^{\frac{1}{4}} \log^{1/2}T, &\text{ if   }\theta = 1/2\\ 
             \mathcal{O}(\log(1/\delta)) T^{\frac{1 -\theta}{2}},  & \text{ if   }\theta > 1/2.
\end{cases}
\end{align}
Moreover, according to Lemma \ref{lemma13}, we have
\begin{align}\label{ineq987}\nonumber
&\Big(\sum_{t = 1}^T \eta_t \Big)^{-1}\mathcal{O} \Big(\sum_{t=1}^T \eta_t^2 + \log \Big(\frac{1}{\delta} \Big) \Big) \\=  &
\begin{cases}
                \mathcal{O}(\log(1/\delta) T^{-\theta}), &\quad \text{if   } \theta < 1/2   \\  \mathcal{O}(\log(T/\delta)T^{-\frac{1}{2}}) , &\quad \text{if   }\theta = 1/2\\ 
\mathcal{O}(\log(1/\delta) T^{\theta - 1}),  &\quad \text{if   }\theta > 1/2.
\end{cases}
\end{align}
Denote $\xi_{n,d,\delta} = \frac{d + \log \frac{ \log n }{\delta}}{n}\log^2(1/\delta) $.
Plugging (\ref{rihg984uyg8}) and (\ref{ineq987}) into (\ref{ru4yt9785gui}), we finally get that with probability $1-\delta$
\begin{align*}
&\hphantom{{}={}}\Big(\sum_{t = 1}^T \eta_t \Big)^{-1} \sum_{t = 1}^T \eta_t \| \nabla F(\mathbf{w}_t) \|^2 \\ \nonumber
&=
\begin{cases}
             \mathcal{O}(\xi_{n,d,\delta}) T^{2-3\theta} + \mathcal{O}(\log(1/\delta) T^{-\theta}), &\quad \text{if   } \theta < 1/2   \\ 
             \mathcal{O}(\xi_{n,d,\delta}) T^{\frac{1}{2}} \log T + \mathcal{O}(\log(T/\delta)T^{-\frac{1}{2}}) , &\quad \text{if   }\theta = 1/2\\ 
             \mathcal{O}(\xi_{n,d,\delta}) T^{1-\theta} + \mathcal{O}(\log(1/\delta) T^{\theta - 1}),  &\quad \text{if   }\theta > 1/2,
\end{cases}
\end{align*}
If $\theta < 1/2 $, we choose $T \asymp (nd^{-1})^{\frac{1}{2(1-\theta)}}$. If  $\theta = 1/2 $, we set $T \asymp nd^{-1}$. While if $\theta > 1/2 $, we set $T \asymp (nd^{-1})^{\frac{1}{2(1-\theta)}}$.
Then we can prove the learning rates of Theorem \ref{theorem8}.
The proof is complete.
\end{proof}
\subsection{Proof of Theorem \ref{theorem9}}
\begin{proof}
If Assumptions \ref{assum2} and \ref{assum4} hold and $F_S$ satisfies Assumption \ref{assum3} with parameter $\mu$, from (\ref{iqe23}), we know that
for all $\mathbf{w} \in \mathcal{W}$ and any $\delta >0 $, 
with probability at least $1- \delta/2$, when $n \geq \frac{c\beta^2(d+ \log \frac{16 \log_2(\sqrt{2}R_1 n + 1)}{\delta})}{\mu^2}$, we have
\begin{align}\label{theo1111}\nonumber
&\hphantom{{}={}}\Big(\sum_{t = 1}^T \eta_t \Big)^{-1} \sum_{t = 1}^T \eta_t \| \nabla F(\mathbf{w}_t) \|^2 \\\nonumber
&\leq 16\Big(\sum_{t = 1}^T \eta_t \Big)^{-1} \sum_{t = 1}^T \eta_t\left\| \nabla F_S(\mathbf{w}_t) \right\|^2 +  \frac{4\mu}{n^2} \\&+ \frac{32D_{\ast}^2\log^2(8/\delta)}{n^2} + \frac{16 \mathbb{E} [ \| \nabla f(\mathbf{w}^{\ast};z,z') \|^2 ] \log(8/\delta)}{n}.
\end{align}
When $\eta_t = \eta_1 t^{- \theta} , \theta \in (0,1)$ with $\eta_1 \leq \frac{1}{2\beta}$ and Assumptions \ref{assum4} and \ref{assum5} hold, according to Lemma \ref{lemma10} and (\ref{ineq987}), we obtain the following inequality with probability at least $1-\delta/2$,
\begin{align}\label{theo1112}\nonumber
&\Big(\sum_{t = 1}^T \eta_t \Big)^{-1} \sum_{t = 1}^T \eta_t\left\| \nabla F_S(\mathbf{w}_t) \right\|^2 \\=  &
\begin{cases}
                \mathcal{O}(\log(1/\delta) T^{-\theta}), &\quad \text{if   } \theta < 1/2   \\  \mathcal{O}(\log(T/\delta)T^{-\frac{1}{2}}) , &\quad \text{if   }\theta = 1/2\\ 
\mathcal{O}(\log(1/\delta) T^{\theta - 1}),  &\quad \text{if   }\theta > 1/2.
\end{cases}
\end{align}
Plugging  (\ref{theo1112}) into (\ref{theo1111}), with probability at least $1-\delta$, we derive that
\begin{align*}
&\hphantom{{}={}}\Big(\sum_{t = 1}^T \eta_t \Big)^{-1} \sum_{t = 1}^T \eta_t \| \nabla F(\mathbf{w}_t) \|^2 \\ \nonumber
&=
\begin{cases}
             \mathcal{O}(\xi_{n,\mathbf{w}^{\ast},\delta})   + \mathcal{O}(\log(1/\delta) T^{-\theta}), &\quad \text{if   } \theta < 1/2,   \\ 
             \mathcal{O}(\xi_{n,\mathbf{w}^{\ast},\delta})  + \mathcal{O}(\log(T/\delta) T^{-\frac{1}{2}}), &\quad \text{if   }\theta = 1/2,\\ 
             \mathcal{O}(\xi_{n,\mathbf{w}^{\ast},\delta}) + \mathcal{O}(\log(1/\delta) T^{\theta - 1}),  &\quad \text{if   }\theta > 1/2,
\end{cases}
\end{align*}
where $\mathcal{O}(\xi_{n,\mathbf{w}^{\ast},\delta}) = \mathcal{O} \Big( \frac{\log^2(1/\delta)}{n^2} + \frac{ F(\mathbf{w}^{\ast}) \log(1/\delta)}{n} \Big)$, and where $F(\mathbf{w}^{\ast})$ exists due to (\ref{nonioggehe}).

When $\theta < 1/2 $, we set $T \asymp n^{\frac{2}{\theta}}$, then we obtain the following result with probability at least $1-\delta$
\begin{align*}
\Big(\sum_{t = 1}^T \eta_t \Big)^{-1} \sum_{t = 1}^T \eta_t \| \nabla F(\mathbf{w}_t) \|^2  = 
\mathcal{O} \Big(\frac{\log^2(\frac{1}{\delta})}{n^2} +\frac{ F(\mathbf{w}^{\ast}) \log(\frac{1}{\delta})}{n}\Big).
\end{align*}
When $\theta = 1/2 $, we set $T \asymp n^4$, then we obtain the following result with probability at least $1-\delta$
\begin{align*}
\Big(\sum_{t = 1}^T \eta_t \Big)^{-1} \sum_{t = 1}^T \eta_t \| \nabla F(\mathbf{w}_t) \|^2  = 
\mathcal{O} \Big(\frac{\log^2(\frac{1}{\delta})}{n^2} +\frac{ F(\mathbf{w}^{\ast}) \log(\frac{1}{\delta})}{n}\Big).
\end{align*}
When $\theta > 1/2 $, we set $T \asymp n^{\frac{2}{1-\theta}}$, then we obtain the following result with probability at least $1-\delta$
\begin{align*}
\Big(\sum_{t = 1}^T \eta_t \Big)^{-1} \sum_{t = 1}^T \eta_t \| \nabla F(\mathbf{w}_t) \|^2  = 
\mathcal{O} \Big(\frac{\log^2(\frac{1}{\delta})}{n^2}+\frac{ F(\mathbf{w}^{\ast}) \log(\frac{1}{\delta})}{n} \Big).
\end{align*}
The proof is complete.
\end{proof}
\subsection{Proof of Theorem \ref{theorem10}}
\begin{proof}
Since $F_S$ satisfies the PL assumption with parameter $2\mu$, we have
\begin{align}\label{ytj6u65}
F(\mathbf{w}) -F(\mathbf{w}^{\ast}) \leq \frac{\left\| \nabla F(\mathbf{w}) \right\|^2 }{4 \mu}, \quad \forall  \mathbf{w} \in \mathcal{W}.
\end{align} 
So to bound $F(\mathbf{w}_{T+1}) - F(\mathbf{w}^{\ast}) $, we need to bound the term $\left\| \nabla F(\mathbf{w}_{T+1}) \right\|^2$.
And there holds that
\begin{align}\label{ciejf0i4oj}\nonumber
\left\| \nabla F(\mathbf{w}_{T+1}) \right\|^2 
&\leq 2 \left\| \nabla F(\mathbf{w}_{T+1})- \nabla F_S(\mathbf{w}_{T+1}) \right\|^2 \\
&+ 2 \| \nabla F_S(\mathbf{w}_{T+1}) \|^2.
\end{align}

From Theorem \ref{theorem3},
if Assumptions \ref{assum2} and \ref{assum4} hold and $F_S$ satisfies Assumption \ref{assum3},
for all $\mathbf{w} \in \mathcal{W}$ and any $\delta >0 $,
  with probability at least $1- \delta/2$, when $n \geq \frac{c\beta^2(d+ \log \frac{16 \log_2(\sqrt{2}R_1 n + 1)}{\delta})}{\mu^2}$, there holds 
\begin{align}\label{jreg09ji}\nonumber
&\hphantom{{}={}}\left\| \nabla F (\mathbf{w}_{T+1} )- \nabla F_S(\mathbf{w}_{T+1}) \right\|\leq \left\| \nabla F_S(\mathbf{w}_{T+1}) \right\| +  \frac{2\mu}{n} \\&+ \frac{8D_{\ast}\log(8/\delta)}{n} + 4\sqrt{\frac{8 \beta F(\mathbf{w}^{\ast}) \log(8/\delta)}{n}}, 
\end{align}
where $F(\mathbf{w}^{\ast})$ follows from (\ref{nonioggehe}).
For the second term $\| \nabla F_S(\mathbf{w}_{T+1}) \|^2$, according to the smoothness property of $F_S$ (see (\ref{09ur90hgoij})) and
Lemma \ref{lemma12}, it can be derived that with probability at least $1- \delta/2$
\begin{align}\label{6u6u6}
\| \nabla F_S(\mathbf{w}_{T+1})  \|^2 =  \mathcal{O}\Big (\frac{\log(T) \log^3(1/\delta)}{T} \Big).
\end{align}
Plugging (\ref{6u6u6}) into (\ref{jreg09ji}), we can derive that 
\begin{align}\label{5yh45h}\nonumber
&\left\| \nabla F(\mathbf{w}_{T+1})- \nabla F_S(\mathbf{w}_{T+1}) \right\|^2 \\
= &\mathcal{O} \Big(\frac{\log T \log^3(1/\delta)}{T} \Big) + \mathcal{O} \Big(\frac{\log^2(1/\delta)}{n^2} + \frac{ F(\mathbf{w}^{\ast}) \log(1/\delta)}{n}\Big).
\end{align}
Therefore, substituting (\ref{5yh45h}) and (\ref{6u6u6}) into (\ref{ciejf0i4oj}), we derive that
\begin{align}\label{lsj9rhbtrht5zj}\nonumber
&\left\| \nabla F(\mathbf{w}_{T+1}) \right\|^2 \\
= &\mathcal{O} \Big(\frac{\log T \log^3(1/\delta)}{T} \Big) + \mathcal{O} \Big(\frac{\log^2(1/\delta)}{n^2} + \frac{ F(\mathbf{w}^{\ast}) \log(1/\delta)}{n}\Big).
\end{align}
Further substituting (\ref{lsj9rhbtrht5zj}) into (\ref{ytj6u65})
and choosing $T \asymp n^2$,
we finally obtain with probability at least $1- \delta$
\begin{align*}
F(\mathbf{w}_{T+1}) -F(\mathbf{w}^{\ast}) = \mathcal{O} \Big(\frac{\log n \log^3(\frac{1}{\delta})}{n^2} + \frac{ F(\mathbf{w}^{\ast}) \log(\frac{1}{\delta})}{n}\Big).
\end{align*}
The proof is complete.
\end{proof}

\section{Conclusion}\label{Section5}
We studied the generalization performance of nonconvex pairwise learning given that it was rarely studied. 
We established several uniform convergences of gradients, based on which we provided a series of learning rates for ERM, GD, and SGD. We first investigated the general nonconvex setting and then the nonconvex learning with a gradient dominance curvature condition. Former demonstrated how the optimal iterative numbers should be selected to balance the generalization and optimization, shed insights on the role of early-stopping,
and the latter highlight the established learning rates which are significantly faster than the state-of-the-art, even up to $\mathcal{O}(1/n^2)$. Overall, we provide a relatively systematic study of nonconvex pairwise learning. 


%

\appendices
\section{Comparison with the related work}\label{appendix2}
In Table \ref{tab1}, we compare our learning rates with the most related work \cite{lei2018generalization,lei2020sharper,lei2021generalization}. They also study the pairwise learning framework \cite{lei2018generalization,lei2020sharper,lei2021generalization}, as discussed in the main paper. For brevity, we just give the main assumptions on the objectives. Other assumptions of the related work and this paper, such as the boundedness of $f$, the choice of step sizes, and the Bernstein condition, etc, are omitted. Since \cite{lei2018generalization,lei2020sharper,lei2021generalization} all assume that $f$ is nonnegative, we thus also omit it. Here, ``Lip'' means Lipschitz continuous. A function $f$ is $G$-Lipschitz continuous if $|f(\mathbf{w}) - f(\mathbf{w}')| \leq G \| \mathbf{w} - \mathbf{w}'  \|$ for all $\mathbf{w}, \mathbf{w}' \in \mathcal{W}$. If $f$ is continuously differentiable, then $f$ is $G$-Lipschitz if and only if $\|\nabla f(\mathbf{w})\| \leq G$ for all $\mathbf{w} \in \mathbb{R}^{d}$. $\|\nabla f(\mathbf{w})\| \leq G$ is the boundedness assumption of gradients mentioned in Remark \ref{r08gur8ghiothjb}. 
A function $f$ is $\xi$-strongly-convex w.r.t. $\|\cdot \|$ if $f(\mathbf{w}) - f(\mathbf{w}')-\langle \mathbf{w} - \mathbf{w}', \nabla f (\mathbf{w}') \rangle \geq \frac{\xi}{2} \| \mathbf{w} - \mathbf{w}' \|^2$ for all $\mathbf{w}, \mathbf{w}' \in \mathcal{W}$. When $\xi=0$, we say $f$ is convex. ``Low noise'' means that $\mathbb{E} [ \| \nabla f(\mathbf{w}^{\ast};z,z') \|^2 ]= \mathcal{O} \left(\frac{1}{n} \right)$ or $F(\mathbf{w}^{\ast}) = \mathcal{O} \Big(\frac{1}{n} \Big)$. ``Assum 5'' means Assumption \ref{assum5}.  ``PL'' means Assumption \ref{assum3}. ``Sub-exponential'' means Assumption \ref{assum1}. ``Generalization Performance Gap'' means $F(\mathbf{w}(S)) - F(\mathbf{w}^{\ast})$, also referred to as excess risk. ``Norm of Gradient'' means $\|  \nabla F(\mathbf{w}(S)) \|$. ``In expectation'' means studying the learning rate of $\mathbb{E}[F(\mathbf{w}(S))] - F(\mathbf{w}^{\ast})$. ``In probability'' learning rates are beneficial to understand the robustness of optimization algorithms and is much more challenging to be derived than ``In expectation'' ones \cite{bousquet2002stability,bousquet2020sharper,klochkov2021stability}.

From Table \ref{tab1}, one can see that we have provided a relatively systematic study of nonconvex pairwise learning. In the general nonconvex setting, when considering the low-dimensional case, our $\mathcal{O}\Big(\sqrt{\frac{d}{n}}\Big)$ learning rate of SGD is comparable to the corresponding rate $\mathcal{O}\Big(\frac{\log n }{\sqrt{n}}\Big)$ established in convex learning in \cite{lei2020sharper}. In nonconvex learning with the PL condition, we have established significantly faster learning rates than the state-of-the-art, even up to $\mathcal{O}(1/n^2)$, which is the first $\mathcal{O}(1/n^2)$-type of rate for pairwise learning. Note that although the ``Low noise'' condition is also used in \cite{lei2021generalization}, \cite{lei2021generalization} only presents $\mathcal{O}\Big(\frac{\log n }{\sqrt{n}}\Big)$ order rate, moreover, derived just in expectation.

\begin{table*}[!htbp]
    \centering
    \begin{tabular}{c|c|c|c|c|c}
    \hline
Reference  &  Algorithm  &   Assumptions on The Objectives & Generalization Performance & Learning Rate & Types\\\hline
\multirow{1}{*}{\cite{lei2018generalization}} &\multirow{1}{*}{RRM}& Lip,  Strongly-convex  &  Generalization Performance Gap  & $\mathcal{O}\Big(\frac{1}{n}\Big)$& In probability\\\hline
\multirow{3}{*}{\cite{lei2020sharper}}  &\multirow{2}{*}{RRM}&  Lip, Strongly-convex &  Generalization Performance Gap&$\mathcal{O}\Big(\frac{\log n }{\sqrt{n}}\Big)$&In probability\\\cline{3-6}
& & Smooth,  Strongly-convex  &  Generalization Performance Gap  & $\mathcal{O}\Big(\frac{1}{n}\Big)$& In expectation\\\cline{2-2}\cline{3-6}
&\multirow{1}{*}{SGD}&  Smooth, Convex & Generalization Performance Gap & $\mathcal{O}\Big(\frac{\log n }{\sqrt{n}}\Big)$&In probability\\ \hline
\multirow{6}{*}{\cite{lei2021generalization}}   & \multirow{6}{*}{SGD} & Smooth, Convex, Low noise & Generalization Performance Gap & $\mathcal{O}\Big(\frac{1}{n}\Big)$&In expectation\\\cline{3-6}
&  & Lip, Convex & Generalization Performance Gap & $\mathcal{O}\Big(\frac{1}{\sqrt{n}}\Big)$&In expectation\\\cline{3-6}
& & Smooth, Strongly-convex & Generalization Performance Gap & $\mathcal{O}\Big(\frac{1}{n}\Big)$&In expectation\\ \cline{3-6}
& & Lip, Strongly-convex & Generalization Performance Gap & $\mathcal{O}\Big(\frac{1}{n}\Big)$&In expectation\\ \cline{3-6}
&  & Smooth, Assum \ref{assum5}  & Norm of Gradient  & $\mathcal{O}\Big(\sqrt{\frac{d}{n}}\Big)$&In probability\\\cline{3-6}
&  & Lip, Smooth, PL & Generalization Performance Gap  & $\mathcal{O}\Big(\frac{1}{n^{2/3}}\Big)$ &In expectation\\ \hline
\multirow{6}{*}{This work} &\multirow{2}{*}{ERM}
&  Sub-exponential &  Norm of Gradient     &   $\mathcal{O}\Big(\sqrt{\frac{d}{n}}\Big)$ &In probability \\ \cline{3-6}
& & Sub-exponential, PL, Low noise &  Generalization Performance Gap  & $\mathcal{O}\Big(\frac{1}{n^2}\Big)$& In probability \\ \cline{2-2}\cline{3-6}
&\multirow{2}{*}{GD}&  Smooth &  Norm of Gradient     &   $\mathcal{O}\Big(\sqrt{\frac{d}{n}}\Big)$ &In probability \\  \cline{3-6}
& & Smooth, PL, Low noise  &  Generalization Performance Gap  & $\mathcal{O}\Big(\frac{1}{n^2}\Big)$& In probability\\ \cline{2-2} \cline{3-6}
& \multirow{2}{*}{SGD} & Smooth, Assum \ref{assum5}  & Norm of Gradient & $\mathcal{O}\Big(\sqrt{\frac{d}{n}}\Big)$&In probability\\\cline{3-6}
& & Smooth, PL, Assum \ref{assum5}, Low noise & Generalization Performance Gap & $\mathcal{O}\Big(\frac{1}{n^2}\Big)$&In probability\\ \hline 
    \end{tabular}
    \vspace*{+1.5mm}
    \caption{
    Summary of Results.
      \label{tab1}}
\end{table*}
\section{Preliminaries of Section \ref{proof4.2}}\label{appendix1}
We first provide a lemma on uniform localized convergence.
\begin{lemma}[\cite{xu2020towards,xu2020toward}]\label{lemma1}
For a function class $\mathcal{G} = \{ g_f: f \in \mathcal{F} \}$ and functional $T: \mathcal{F} \mapsto [0,R]$, suppose there is a function $\psi(r;\delta)$ (possibly depending on the samples), which is non-decreasing with respect to $r$ and satisfies that $\forall \delta \in (0,1)$, with probability $1-\delta$, $\forall r \in [0,R]$,
\begin{align*}
\sup_{f \in \mathcal{F}: T(f) \leq r } (P - P_n) g_f \leq \psi(r;\delta).
\end{align*}
Then, given any $\delta \in (0,1)$ and $r_0 \in (0,R]$, with probability at least $1-\delta$, for all $f \in \mathcal{F}$,
\begin{align*}
(P - P_n) g_f  \leq \psi \Big(\max \{2T(f), r_0\}; \frac{\delta}{2\log_2\frac{2R}{r_0}} \Big).
\end{align*}
\end{lemma}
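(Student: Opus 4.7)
The plan is to carry out a standard peeling (dyadic) argument combined with a union bound, exploiting the monotonicity of $\psi(\cdot;\delta)$ in its first argument. First, I would partition the range of $T$ into a geometric sequence of shells. Set $K = \lceil \log_2(R/r_0) \rceil$, so $K \leq \log_2(2R/r_0)$, and define for each $k \in \{0,1,\dots,K\}$ the shell $\mathcal{F}_k = \{ f \in \mathcal{F} : T(f) \leq 2^k r_0 \}$. Note that $\mathcal{F}_K = \mathcal{F}$ since $2^K r_0 \geq R$ and $T(f) \leq R$ by assumption.

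Next, apply the hypothesis on each shell with a shrunken confidence parameter. For each $k$, the assumed uniform bound with $r = 2^k r_0$ and confidence $\delta_k := \delta / (2 \log_2(2R/r_0))$ yields, with probability at least $1 - \delta_k$,
\begin{equation*}
\sup_{f \in \mathcal{F}_k}(P - P_n) g_f \leq \psi(2^k r_0; \delta_k).
\end{equation*}
Taking a union bound over $k = 0, 1, \dots, K$ costs probability at most $(K+1)\delta_k \leq \delta$, so on a single event $\mathcal{E}$ of probability at least $1-\delta$ every one of these $K+1$ inequalities is in force simultaneously.

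Now I would decompose an arbitrary $f \in \mathcal{F}$ according to where $T(f)$ falls. If $T(f) \leq r_0$, take $k = 0$: then $f \in \mathcal{F}_0$ and $(P-P_n) g_f \leq \psi(r_0;\delta_k) = \psi(\max\{2T(f), r_0\};\delta_k)$ using monotonicity. Otherwise, pick the unique $k \geq 1$ with $2^{k-1} r_0 < T(f) \leq 2^k r_0$; then $f \in \mathcal{F}_k$ and $2^k r_0 \leq 2 T(f) \leq \max\{2T(f),r_0\}$, so on $\mathcal{E}$,
\begin{equation*}
(P - P_n) g_f \leq \psi(2^k r_0;\delta_k) \leq \psi\bigl(\max\{2T(f),r_0\};\delta_k\bigr),
\end{equation*}
again by monotonicity of $\psi$ in its first argument. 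Substituting the value of $\delta_k$ gives exactly the stated inequality.

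No step here is really an obstacle; the only mild subtlety is being careful that $\psi$ is allowed to depend on the sample, so one must verify that $\psi(2^k r_0;\delta_k)$ is a legitimate random quantity whose controlling event is exactly the one produced by applying the hypothesis once for each $k$, not a quantile across different sample realizations. Because the hypothesis guarantees the bound pointwise in $r$ on a single high-probability event, intersecting the $K+1$ such events and then evaluating at the data-dependent $\psi$ is valid. The number of shells $K+1 \leq \log_2(2R/r_0) + 1$ accounts for the $2\log_2(2R/r_0)$ factor inside $\delta_k$, with the extra factor of $2$ absorbing the ``$+1$''.
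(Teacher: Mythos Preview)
The paper does not actually prove Lemma~\ref{lemma1}; it is quoted from \cite{xu2020towards,xu2020toward} and used as a black box in the proof of Theorem~\ref{theorem1}. Your peeling argument is the standard proof of this type of uniform localized convergence statement and is essentially what appears in those references, so there is nothing to compare against within the present paper.

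Your argument is correct. One cosmetic slip: in the case $T(f)\le r_0$ you wrote $\psi(r_0;\delta_k)=\psi(\max\{2T(f),r_0\};\delta_k)$, but when $r_0/2<T(f)\le r_0$ one has $\max\{2T(f),r_0\}=2T(f)>r_0$, so the relation should be $\le$ (by monotonicity) rather than equality. The union-bound bookkeeping is fine: $K+1\le\log_2(2R/r_0)+1\le 2\log_2(2R/r_0)$ holds because $r_0\le R$ forces $\log_2(2R/r_0)\ge 1$.
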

In the following, we introduce some important definitions and lemmas on generic chaining. 
\begin{definition}[Orlicz-$\alpha$ Norm \cite{dirksen2015tail}]\label{defi1}
For every $\alpha > 0$, we define the $Orlicz-\alpha$ norm of a random $v$:
\begin{align*}
\| v \|_{Orlicz-\alpha} = \inf \{ K>0 : \mathbb{E} \exp((|v|/K)^{\alpha}) \leq 2\}.
\end{align*}
A random variable (or vector) $X \in \mathbb{R}^d$ is $K$-sub-exponential if $\forall \lambda \in \mathbb{R}^d$, we have
\begin{align*}
\| \lambda^T X \|_{Orlicz-1} \leq K \| \lambda \|_2.
\end{align*}
A random variable (or vector) $X \in \mathbb{R}^d$ is $K$-sub-Gaussian if $\forall \lambda \in \mathbb{R}^d$, we have
\begin{align*}
\| \lambda^T X \|_{Orlicz-2} \leq K \| \lambda \|_2.
\end{align*}
\end{definition}
\begin{definition}[Orlicz-$\alpha$ Processes \cite{dirksen2015tail}]
Let $\{ X_f \}_{f \in \mathcal{F}}$ be a sequence of random variables. We call $\{ X_f \}_{f \in \mathcal{F}}$ be an Orlicz-$\alpha$ process for a metric $d(\cdot,\cdot)$ on $\mathcal{F}$ if
\begin{align*}
\|  X_{f_1} - X_{f_2} \|_{Orlicz-\alpha} \leq d(f_1,f_2), \forall f_1, f_2 \in \mathcal{F}.
\end{align*}
Typically, we call Orlicz-$1$ process the sub-exponential increments and Orlicz-$2$ process the sub-Gaussian increments.
\end{definition}
\begin{definition}[mixed sub-Gaussian-sub-exponential increments \cite{dirksen2015tail}]\label{defi3}
A process $(X_{\mathbf{w}})_{\mathbf{w} \in \mathcal{W}}$ is called mixed mixed sub-Gaussian-sub-exponential increments w.r.t. the metric pair $(d_1(\cdot,\cdot), d_2(\cdot,\cdot))$ if for all $\mathbf{w}_1,\mathbf{w}_2 \in \mathcal{W}$ and $\forall u \geq 0$,
\begin{align*}
Pr \left(\| X_{\mathbf{w}_1} - X_{\mathbf{w}_2} \| \geq \sqrt{u} d_2(\mathbf{w}_1,\mathbf{w}_2)+ u d_1(\mathbf{w}_1,\mathbf{w}_2) \right) \leq 2e^{-u},
\end{align*}
where "Pr" means probability.
\end{definition}
\begin{definition}[$\gamma_{\alpha}$-functional \cite{dirksen2015tail}]
A sequence $F = (\mathcal{F}_n)_{n \geq 0}$ of subsets of $\mathcal{F}$ is called admissible if  $|\mathcal{F}_0|=1$ and $|\mathcal{F}_n|=2^{2^n}$ for all $n \geq 1$. For any $0<\alpha<\infty$, the $\gamma_{\alpha}$-functional of $(\mathcal{F},d)$ is defined as
\begin{align*}
\gamma_{\alpha}(\mathcal{F},d) = \inf_{F} \sup_{f \mathcal{F}} \sum_{n=0}^{\infty} 2^{n/\alpha} d(f,\mathcal{F}_n),
\end{align*}
where the infimum is taken over all admissible sequences and we define $d(f,\mathcal{F}_n) = \inf_{t\in \mathcal{F}_n}d(f,t)$.
\end{definition}
\begin{definition}[Covering number \cite{lei2021generalization,wainwright2019high}]
Assume $(\mathcal{H},d)$ be a metric space and $\mathcal{F} \subseteq \mathcal{H}$. For any $\epsilon >0$, a set $\mathcal{F}_c$ is called an $\epsilon$-cover of $\mathcal{F}$ if for any $f \in \mathcal{F}$ we have an element $g \in \mathcal{F}_c$ such that $d(f,g) \leq \epsilon$. We denote $N(\mathcal{F},d,\epsilon) $ the covering number as the cardinality of the minimal $\epsilon$-cover of $\mathcal{F}$:
\begin{align*}
N(\mathcal{F},d,\epsilon) = \min \{ |\mathcal{F}_c| : \mathcal{F}_c \text{ is an $\epsilon$ cover of $\mathcal{F}$ }\}.
\end{align*}
\end{definition}
\begin{lemma}[Dudley's integral bound for $\gamma_{\alpha}$-functional \cite{dirksen2015tail,xu2020towards}]\label{lemma1.5}
There exists a constant $c_{\alpha}$ dependent to $\alpha$ such that 
\begin{align*}
\gamma_{\alpha}(\mathcal{F},d)  \leq c_{\alpha} \int_{0}^{\infty} \left( \log N(\mathcal{F},d,\epsilon) \right)^{1/\alpha} d\epsilon,
\end{align*}
where $N(\mathcal{F},d,\epsilon)$ is the covering number of $\mathcal{F}$.
\end{lemma}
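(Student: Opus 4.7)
The plan is to construct an admissible sequence $(\mathcal{F}_n)_{n \geq 0}$ directly from minimal $\epsilon$-covers, calibrate the cover scales so that the cardinality constraint $|\mathcal{F}_n| \leq 2^{2^n}$ is respected, and then compare the resulting chaining sum to the entropy integral by a dyadic area argument.

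Concretely, for each $n \geq 1$ I would set $e_n := \inf\{\epsilon > 0 : N(\mathcal{F},d,\epsilon) \leq 2^{2^n}\}$, which yields a non-increasing sequence, and pick $\mathcal{F}_n \subseteq \mathcal{F}$ to be a minimal $e_n$-cover, so that $|\mathcal{F}_n| \leq 2^{2^n}$ and $d(f,\mathcal{F}_n) \leq e_n$ for every $f \in \mathcal{F}$. For $n = 0$ I would take $\mathcal{F}_0 = \{f_0\}$ for an arbitrary fixed $f_0$, so that $|\mathcal{F}_0| = 1$ as required by admissibility, and use the trivial bound $d(f,\mathcal{F}_0) \leq e_0$ where $e_0$ can be taken to be the diameter (or the smallest $\epsilon$ with $N(\mathcal{F},d,\epsilon)\leq 1$). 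This produces an admissible sequence and gives
\begin{equation*}
\gamma_\alpha(\mathcal{F},d) \;\leq\; \sup_{f \in \mathcal{F}} \sum_{n=0}^\infty 2^{n/\alpha}\, d(f,\mathcal{F}_n) \;\leq\; \sum_{n=0}^\infty 2^{n/\alpha} e_n.
\end{equation*}

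Next I would convert this series into the entropy integral. By the definition of $e_n$, whenever $\epsilon \in (e_{n+1}, e_n)$ one has $N(\mathcal{F},d,\epsilon) > 2^{2^n}$, hence $(\log N(\mathcal{F},d,\epsilon))^{1/\alpha} \geq (2^n \log 2)^{1/\alpha}$. Integrating over this interval and summing telescopically gives
\begin{equation*}
\int_0^\infty \bigl(\log N(\mathcal{F},d,\epsilon)\bigr)^{1/\alpha}\, d\epsilon \;\geq\; (\log 2)^{1/\alpha} \sum_{n=0}^\infty 2^{n/\alpha}(e_n - e_{n+1}).
\end{equation*}
A summation-by-parts on $\sum 2^{n/\alpha}(e_n - e_{n+1})$ rearranges into $\sum (2^{n/\alpha} - 2^{(n-1)/\alpha}) e_n$ (up to boundary terms that vanish since $e_n \to 0$), and the factor $2^{n/\alpha} - 2^{(n-1)/\alpha} = 2^{n/\alpha}(1 - 2^{-1/\alpha})$ is a fixed positive multiple of $2^{n/\alpha}$. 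Thus $\sum_{n \geq 0} 2^{n/\alpha} e_n$ is controlled by the integral up to the multiplicative constant $c_\alpha := (\log 2)^{-1/\alpha}(1 - 2^{-1/\alpha})^{-1}$, which yields the claim.

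The main technical care is bookkeeping rather than a real obstacle: one must verify that the constructed $(\mathcal{F}_n)_{n \geq 0}$ genuinely satisfies $|\mathcal{F}_0| = 1$ and $|\mathcal{F}_n| \leq 2^{2^n}$ (this is why $e_n$ is defined via the level $2^{2^n}$, not $2^n$), and that the summation-by-parts step has no leftover boundary contribution, which requires $\mathcal{F}$ to have finite diameter or, more generally, that $e_n \to 0$; the latter follows because for any fixed $\epsilon > 0$ eventually $N(\mathcal{F},d,\epsilon) \leq 2^{2^n}$. Once these are in hand, the combination of the cover construction and the dyadic area comparison delivers the stated inequality with an explicit absolute constant $c_\alpha$ depending only on $\alpha$.
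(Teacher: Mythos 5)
The paper does not actually prove Lemma~\ref{lemma1.5}; it is imported verbatim from the cited references, so there is no in-paper argument to compare against. Your proof is the standard entropy-number derivation of Dudley's bound (build an admissible sequence from covers at the scales $e_n=\inf\{\epsilon: N(\mathcal{F},d,\epsilon)\le 2^{2^n}\}$, bound $\gamma_\alpha\le\sum_n 2^{n/\alpha}e_n$, and compare to the integral via the observation that $\log N(\mathcal{F},d,\epsilon)\ge 2^n\log 2$ for $\epsilon<e_n$ plus Abel summation), and it is essentially correct; the constant $c_\alpha=(\log 2)^{-1/\alpha}(1-2^{-1/\alpha})^{-1}$ you extract is the right one for this route.

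Two bookkeeping points deserve slightly more care than you give them. First, the infimum defining $e_n$ need not be attained, and a minimal cover's centers need not lie in $\mathcal{F}$, whereas the paper's admissible sequences are required to be subsets of $\mathcal{F}$; the standard fix is to take a cover at scale $(1+\eta)e_n$ and replace each center by a nearby element of $\mathcal{F}$, which only inflates the constant. Second, the boundary term $2^{N/\alpha}e_{N+1}$ in the summation by parts does not vanish merely because $e_n\to 0$: the correct justification is to write $e_{N+1}=\sum_{m>N}(e_m-e_{m+1})$, so that $2^{N/\alpha}e_{N+1}\le\sum_{m>N}2^{m/\alpha}(e_m-e_{m+1})$, which is the tail of a series already shown to be dominated by the entropy integral and hence tends to zero whenever that integral is finite (and if it is infinite the lemma is vacuous). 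With these repairs the argument is complete.
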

\begin{lemma}[generic chaining for a process with mixed sub-Gaussian-sub-exponential increments \cite{dirksen2015tail}]\label{lemma2}
If $(X_f)_{f \in \mathcal{F}}$ has mixed sub-Gaussian-sub-exponential increments w.r.t. the metric pair $(d_1(\cdot,\cdot), d_2(\cdot,\cdot))$, then there are absolute constants $c_1$, $c_2$ such that $\forall \delta \in (0,1)$, with probability at least $1-\delta$,
\begin{multline*}
\sup_{\mathbf{w} \in \mathcal{W}} \| X_f - X_{f_0}  \| \leq c_1 (\gamma_{2}(\mathcal{F},d_2) + \gamma_{1}(\mathcal{F},d_1) ) +\\
c_2 \Big ( \sqrt{\log \frac{1}{\delta}} \sup_{f_1,f_2 \in \mathcal{F}} [d_2(f_1,f_2)] + \log \frac{1}{\delta} \sup_{f_1,f_2 \in \mathcal{F}} [d_1(f_1,f_2)] \Big).
\end{multline*}
\end{lemma}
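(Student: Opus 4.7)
The plan is to deploy the standard generic chaining argument adapted to the mixed tail regime. First I would fix admissible sequences that nearly realize $\gamma_1(\mathcal{F},d_1)$ and $\gamma_2(\mathcal{F},d_2)$ and merge them into a single admissible sequence $(\mathcal{F}_n)_{n\geq 0}$; doubling the cardinality is absorbed into absolute constants. For each $f \in \mathcal{F}$ and each $n \geq 0$ pick $\pi_n(f) \in \mathcal{F}_n$ that nearly attains the distance from $f$ in both metrics simultaneously. Then telescope
\begin{equation*}
X_f - X_{f_0} \;=\; \sum_{n \geq 1} \bigl(X_{\pi_n(f)} - X_{\pi_{n-1}(f)}\bigr),
\end{equation*}
where $f_0$ is the unique element of $\mathcal{F}_0$, with convergence guaranteed because the chaining links shrink to zero.

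Next, allocate a per-level tail budget $u_n(\delta) := c\,2^n + \log(1/\delta)$ for a sufficiently large absolute constant $c$, and apply the mixed tail hypothesis from Definition 3 of Appendix B at level $u_n(\delta)$ to every pair $(g,g') \in \mathcal{F}_n \times \mathcal{F}_{n-1}$. A union bound costs at most $|\mathcal{F}_n|\cdot|\mathcal{F}_{n-1}| \leq 2^{2^{n+1}}$ pairs, each contributing failure probability $2\exp(-u_n(\delta))$; choosing $c$ so that $\sum_n 2^{2^{n+1}} e^{-c 2^n}$ converges makes the total failure probability at most a constant multiple of $\delta$. On the resulting good event,
\begin{equation*}
\|X_{\pi_n(f)} - X_{\pi_{n-1}(f)}\| \;\leq\; \sqrt{u_n(\delta)}\, d_2(\pi_n(f),\pi_{n-1}(f)) + u_n(\delta)\, d_1(\pi_n(f),\pi_{n-1}(f))
\end{equation*}
holds uniformly in $f$ and $n$.

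Summing the chain is then a matter of splitting $\sqrt{u_n(\delta)} \leq \sqrt{c\,2^n} + \sqrt{\log(1/\delta)}$ and $u_n(\delta) \leq c\,2^n + \log(1/\delta)$. Using the triangle inequality $d_\alpha(\pi_n(f),\pi_{n-1}(f)) \leq d_\alpha(f,\mathcal{F}_n) + d_\alpha(f,\mathcal{F}_{n-1})$, the contributions scaling like $2^{n/2}$ and $2^n$ telescope, by admissibility of $(\mathcal{F}_n)$, to $\gamma_2(\mathcal{F},d_2)$ and $\gamma_1(\mathcal{F},d_1)$ respectively. The residual $\sqrt{\log(1/\delta)}$ and $\log(1/\delta)$ terms multiply $\sum_n d_\alpha(\pi_n(f),\pi_{n-1}(f))$; here I would split the chain at the first index beyond which each further link is controlled by the diameter, so the residual telescopes to $2\sup_{f_1,f_2 \in \mathcal{F}} d_\alpha(f_1,f_2)$. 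Taking the supremum over $f$ then yields the stated bound after renaming constants.

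The main obstacle is balancing the per-level budget $u_n(\delta)$ so that (i) the union-bounded failure probability sums to $\delta$ with constants independent of $n$ and (ii) the chaining sums still collapse to $\gamma_2$ and $\gamma_1$ rather than to strictly larger quantities. A related subtlety is ensuring that the $\sqrt{\log(1/\delta)}$ and $\log(1/\delta)$ contributions are controlled by the $d_2$- and $d_1$-diameters rather than accumulating across the chain; this requires the head/tail split described above, taking advantage of the fact that $\sum_n d_\alpha(\pi_n(f),\pi_{n-1}(f))$ telescopes once one moves past a threshold level. Once this allocation is settled, the remaining steps are routine, and the argument reproduces the known generic chaining bound of Dirksen for mixed tail processes.
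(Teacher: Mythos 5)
The paper offers no proof of Lemma~\ref{lemma2}; it is imported verbatim from Dirksen's work on tail bounds via generic chaining, so there is no in-paper argument to compare against. Your sketch reconstructs the proof from that reference --- merged admissible sequences, a telescoping chain, per-level budgets $u_n=c\,2^n+\log(1/\delta)$, and a union bound over at most $2^{2^{n+1}}$ pairs per level with summable failure probability --- and the overall architecture is the right one.

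One step, as literally written, would not survive being expanded. You say the residual $\sqrt{\log(1/\delta)}$ and $\log(1/\delta)$ factors multiply $\sum_{n} d_\alpha(\pi_n(f),\pi_{n-1}(f))$ and that this sum ``telescopes'' to $2\sup_{f_1,f_2}d_\alpha(f_1,f_2)$. A sum of distances along a chain does not telescope: $\sum_n d_\alpha(\pi_n(f),\pi_{n-1}(f))$ can far exceed the diameter, and only the weighted sums $\sum_n 2^{n/\alpha}d_\alpha(\pi_n(f),\pi_{n-1}(f))$ are controlled (by the $\gamma_\alpha$-functionals). The correct mechanism, which your head/tail split gestures at but does not pin down, is the following: let $\ell$ be the largest integer with $2^\ell\leq\log(1/\delta)$, collapse all levels $n\leq\ell$ into the single increment $X_{\pi_\ell(f)}-X_{f_0}$, and apply the mixed-tail hypothesis to that one link at level of order $\log(1/\delta)$; its two distances are each bounded by the corresponding diameter, and the union bound over $|\mathcal{F}_\ell|\leq 2^{2^\ell}\leq 2^{\log(1/\delta)}$ elements costs only a constant multiple of $\delta$ precisely because of the choice of $\ell$. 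For the remaining levels $n>\ell$ one has $\log(1/\delta)\leq 2^n$, hence $u_n\leq (c+1)2^n$, so the entire tail of the chain --- including its $\delta$-dependence --- folds into $\gamma_2(\mathcal{F},d_2)+\gamma_1(\mathcal{F},d_1)$ and contributes no diameter term at all. With that repair your argument is exactly the standard proof of the cited result.
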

After the introduction of generic chaining, we are to provide two general Bernstein inequalities for pairwise learning. Existing Bernstein inequalities in pairwise learning are almost all provided for bounded random variables \cite{clemencon2008ranking,hoeffding1963probability,peel2010empirical,pitcan2017a}. 
Moreover, although the Bernstein inequality of pairwise learning is often used in machine learning literature, it isn't easy to find clear  proof. Here, we derive two more general forms of the Bernstein inequality of pairwise learning, which can be satisfied by various unbounded variables, and provide the proof, which may benefit the wider applicability of the Bernstein inequality.
\begin{lemma}\label{lemma3}
Suppose $T$ is a random variable that can be written as
\begin{align*}
T = \sum_{i =1}^Np_i T_i, \quad where \sum p_i =1, p_i \geq 0 \quad \forall i.
\end{align*}
Then we have
\begin{align*}
Pr(T \geq t) \leq \sum_{i  =1 }^N p_i \mathbb{E}[e^{\lambda (T_i - t)}].
\end{align*}
\end{lemma}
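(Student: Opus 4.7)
The plan is to apply the standard Chernoff/Markov exponential moment argument and then exploit the convexity of the exponential function together with the convex combination structure of $T$. Implicitly we take $\lambda \geq 0$, which is what lets us move from a probability statement to an exponential moment statement.

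First, I would use Markov's inequality on the nonnegative random variable $e^{\lambda T}$. For any $\lambda \geq 0$, since $\{T \geq t\} = \{e^{\lambda T} \geq e^{\lambda t}\}$, Markov's inequality gives
\begin{equation*}
\Pr(T \geq t) \leq e^{-\lambda t}\,\mathbb{E}[e^{\lambda T}] = \mathbb{E}[e^{\lambda(T-t)}].
\end{equation*}

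Next, I would use the constraint $\sum_{i=1}^N p_i = 1$ to rewrite the exponent as a convex combination. Specifically, $T - t = \sum_{i=1}^N p_i T_i - t = \sum_{i=1}^N p_i (T_i - t)$, so
\begin{equation*}
e^{\lambda(T-t)} = \exp\!\Big(\sum_{i=1}^N p_i \,\lambda(T_i - t)\Big).
\end{equation*}
Applying Jensen's inequality to the convex function $x \mapsto e^x$ with weights $\{p_i\}$ yields
\begin{equation*}
\exp\!\Big(\sum_{i=1}^N p_i \,\lambda(T_i - t)\Big) \leq \sum_{i=1}^N p_i\, e^{\lambda(T_i - t)}.
\end{equation*}
Taking expectations on both sides and invoking linearity of expectation gives $\mathbb{E}[e^{\lambda(T-t)}] \leq \sum_{i=1}^N p_i\,\mathbb{E}[e^{\lambda(T_i - t)}]$, which combined with the Markov step completes the proof.

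There is no serious obstacle here; the statement is essentially a bookkeeping lemma that will be used downstream as the seed for the pairwise Bernstein-type concentration inequalities (e.g.\ Lemma~\ref{lemma5} and Lemma~\ref{lemma7}), where $T$ will be decomposed into a convex combination of sums of i.i.d.\ summands indexed by partitions of $[n]$ into pairs. The only point to be careful about is to state the result for $\lambda \geq 0$ (so that Markov's inequality applies in the correct direction) and to ensure the integrability of $e^{\lambda T_i}$ so that the expectations on the right-hand side are well defined; both conditions will be trivially satisfied in the applications of this lemma within the paper.
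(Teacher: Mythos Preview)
Your proof is correct and follows essentially the same approach as the paper: apply Markov's inequality to $e^{\lambda T}$, then use Jensen's inequality on the convex exponential function with the weights $p_i$. Your write-up is in fact slightly more careful than the paper's, since you explicitly note the need for $\lambda \geq 0$ and the integrability of $e^{\lambda T_i}$.
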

\begin{proof}
According to Markov's inequality
\begin{align*}
Pr(T \geq t) \leq e^{-\lambda t} \mathbb{E}[e^{\lambda T}] = \mathbb{E} [e^{\lambda(T-t)}].
\end{align*}
Then, by Jensen's inequality,
\begin{align*}
\exp(\lambda T) = \exp(\lambda \sum p_iT_i) \leq \sum p_i \exp(\lambda T_i).
\end{align*}
Thus, we have
\begin{align*}
Pr(T\geq t) \leq \sum p_i \mathbb{E} [e^{\lambda(T_i - t)}].
\end{align*}
The proof is complete.
\end{proof}
\begin{lemma}\cite{wainwright2019high}\label{lemma4}
Given a random variable $X$ with mean
$\mu = \mathbb{E}[X]$ and variance $\sigma^2 = \mathbb{E}[X^2] - \mu^2$, we say that Bernstein’s condition with parameter $b$
holds if
\begin{align*}
| \mathbb{E} [(X - \mu)^k ]   | \leq \frac{1}{2} k! \sigma^2 b^{k - 2} \quad \text{for } k = 2,3,4,....
\end{align*}
If the Bernstein's condition is statisfied, we have for all $\lambda \in (0,1/b)$
\begin{align*}
\mathbb{E} [e^{\lambda (X - \mu)}] \leq e^{\frac{\lambda^2 \sigma^2}{2(1-b\lambda)}}. 
\end{align*}
\end{lemma}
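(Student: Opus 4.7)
The plan is to expand the moment generating function as a Taylor series and control each term via Bernstein's condition. I would first write
\begin{align*}
\mathbb{E}\bigl[e^{\lambda(X-\mu)}\bigr] = 1 + \lambda\,\mathbb{E}[X-\mu] + \sum_{k=2}^{\infty} \frac{\lambda^k}{k!}\,\mathbb{E}\bigl[(X-\mu)^k\bigr],
\end{align*}
observing that the linear term vanishes because $\mathbb{E}[X-\mu]=0$. The key step is then to insert Bernstein's condition $|\mathbb{E}[(X-\mu)^k]| \le \tfrac{1}{2}k!\,\sigma^2 b^{k-2}$ into each summand, noting that $|\mathbb{E}[e^{\lambda(X-\mu)}]| \le \mathbb{E}[e^{\lambda|X-\mu|}]$ style bounds let us pull absolute values inside so the condition applies term by term.

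After substitution, the factorials cancel and the series collapses into a geometric sum:
\begin{align*}
\mathbb{E}\bigl[e^{\lambda(X-\mu)}\bigr] \le 1 + \frac{\lambda^2 \sigma^2}{2}\sum_{k=2}^{\infty}(\lambda b)^{k-2} = 1 + \frac{\lambda^2 \sigma^2}{2(1-\lambda b)},
\end{align*}
where convergence of the geometric series is exactly what forces the restriction $\lambda \in (0,1/b)$. The final step is the elementary inequality $1+x \le e^{x}$ applied with $x = \lambda^2\sigma^2/(2(1-\lambda b)) \ge 0$, which yields the claimed bound $\mathbb{E}[e^{\lambda(X-\mu)}] \le \exp\!\bigl(\lambda^2\sigma^2/(2(1-b\lambda))\bigr)$.

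The main obstacle, though really a minor technicality rather than a deep difficulty, is justifying the interchange of expectation and infinite sum when the moments $\mathbb{E}[(X-\mu)^k]$ are not a priori assumed summable without Bernstein's condition. I would handle this by truncating at some $K$, applying Bernstein's condition to bound the partial sum by $1 + \tfrac{\lambda^2\sigma^2}{2}\sum_{k=2}^{K}(\lambda b)^{k-2}$, and then invoking monotone convergence (for the even-index terms, which are nonnegative) together with dominated convergence for the full series, using the geometric majorant $\sum_k \tfrac{\lambda^k}{k!}|\mathbb{E}[(X-\mu)^k]| \le \tfrac{\lambda^2\sigma^2}{2(1-\lambda b)} < \infty$ as the dominating bound. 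Once the interchange is legitimized, the proof is essentially a one-line collapse of a geometric series followed by $1+x \le e^x$.
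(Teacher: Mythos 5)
The paper states this lemma purely as a citation to \cite{wainwright2019high} and supplies no proof of its own; your argument is exactly the standard proof from that reference (power-series expansion of the moment generating function, termwise application of the Bernstein condition, summation of the geometric series for $\lambda \in (0,1/b)$, and the inequality $1+x \le e^{x}$), and it is correct. One small refinement to your interchange step: the series of moment bounds $\sum_k \frac{\lambda^k}{k!}\lvert \mathbb{E}[(X-\mu)^k]\rvert$ is not itself a dominating function for the integrand, but your monotone-convergence argument on the even terms already yields $\mathbb{E}[\cosh(\lambda(X-\mu))]<\infty$, so $e^{\lambda\lvert X-\mu\rvert}$ is integrable and serves as the correct dominating random variable.
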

\begin{remark}\rm{}
The Bernstein condition is milder than the bounded assumption of random variables and is also satisfied by various unbounded variables \cite{wainwright2019high}.
\end{remark}
\begin{lemma}[Bernstein inequality for pairwise learning]\label{lemma5}
Let $Z_1,...,Z_n$ be real-valued and independent variables taking values in $\mathcal{Z}$, and let
$q: \mathcal{Z} \times \mathcal{Z} \mapsto \mathbb{R}$. Assume that $\sigma^2$ is the variance of $q(Z,Z')$.
Suppose that the Bernstein's condition in Lemma \ref{lemma4} holds for any $q(Z,Z')$, i.e., for $k = 2,3,4$,...
\begin{align*}
| \mathbb{E} [(q(Z,Z') - \mathbb{E}_{Z,Z'}[q(Z,Z')])^k ]   | \leq \frac{1}{2} k! \sigma^2 b^{k - 2}.
\end{align*}
Then, for any $u \geq 0$,
\begin{align*}
Pr \Big(\frac{1}{n(n-1)} \sum_{i,j \in [n], i \neq j} q(Z_i,Z_j) - \mathbb{E}_{Z,Z'}[q(Z,Z')] \\
\geq \sqrt{\frac{2\sigma^2u}{\lfloor \frac{n}{2} \rfloor}}+ \frac{bu}{\lfloor \frac{n}{2} \rfloor} \Big) \leq e^{-u},
\end{align*}
where $\lfloor \frac{n}{2} \rfloor $ is the
largest integer no greater than $\frac{n}{2}$.
\end{lemma}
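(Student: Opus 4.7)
The plan is to reduce the pairwise U-statistic to an average, over permutations, of sums of genuinely independent terms (Hoeffding's combinatorial decoupling), and then combine Lemma \ref{lemma3} with the scalar Bernstein moment generating function bound of Lemma \ref{lemma4}. Set $m = \lfloor n/2 \rfloor$ and, for each permutation $\pi$ of $\{1,\ldots,n\}$, define
$$ W_\pi = \frac{1}{m}\sum_{k=1}^{m} q(Z_{\pi(2k-1)}, Z_{\pi(2k)}). $$
Within a single $W_\pi$ the pairs $(\pi(2k-1),\pi(2k))$ use disjoint indices, so the $m$ summands are i.i.d.\ copies of $q(Z,Z')$. A straightforward double-counting argument shows each ordered pair $(i,j)$ with $i\neq j$ arises as some $(\pi(2k-1),\pi(2k))$ in exactly $m\cdot (n-2)!$ of the $n!$ permutations, hence
$$ \frac{1}{n!}\sum_{\pi} W_\pi \;=\; \frac{1}{n(n-1)}\sum_{i\neq j} q(Z_i,Z_j). $$
Thus the U-statistic is expressed as a convex combination of the $W_\pi$'s, exactly as required to invoke Lemma \ref{lemma3}.

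Denoting $\mu = \mathbb{E}_{Z,Z'}[q(Z,Z')]$ and applying Lemma \ref{lemma3} with weights $p_\pi = 1/n!$ to $T = U_n - \mu$, we obtain, for any $\lambda > 0$ and $t>0$,
$$ Pr\!\left(U_n - \mu \geq t\right) \;\leq\; \frac{1}{n!}\sum_{\pi} \mathbb{E}\!\left[e^{\lambda(W_\pi - \mu - t)}\right] \;=\; e^{-\lambda t}\,\mathbb{E}\!\left[e^{\lambda(W_{\pi_0}-\mu)}\right], $$
where the last equality uses that every $W_\pi$ shares the same distribution. This is the key step where dependence is dissolved: we have replaced the dependent pairwise sum by the MGF of one i.i.d.\ average of length $m$.

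Next, because the $m$ summands in $W_{\pi_0}$ are independent and each satisfies the Bernstein condition with parameters $(\sigma^2,b)$, Lemma \ref{lemma4} applied coordinate-wise with parameter $\lambda/m$ gives, for $0<\lambda/m<1/b$,
$$ \mathbb{E}\!\left[e^{\lambda(W_{\pi_0}-\mu)}\right] \;=\; \prod_{k=1}^{m}\mathbb{E}\!\left[e^{(\lambda/m)(q(Z,Z')-\mu)}\right] \;\leq\; \exp\!\left(\frac{\lambda^2\sigma^2/m}{2(1-b\lambda/m)}\right). $$
Combining this with the Chernoff bound above yields the classical sub-exponential tail $\exp\bigl(-\tfrac{t^2}{2(\sigma^2/m + bt/m)}\bigr)$ after the standard optimization $\lambda = t\,m/(\sigma^2 + bt)$. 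Setting this tail equal to $e^{-u}$ and solving the resulting quadratic in $t$ (and using $\sqrt{a+b}\le\sqrt{a}+\sqrt{b}$) gives the announced form $t = \sqrt{2\sigma^2 u/m} + bu/m$.

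The only genuinely delicate step is the combinatorial identity that realizes $U_n$ as $(n!)^{-1}\sum_\pi W_\pi$ with the counting constant exactly $m\cdot(n-2)!$; once this decoupling is established, everything reduces to the i.i.d.\ Bernstein bound with effective sample size $m=\lfloor n/2\rfloor$, which explains why the denominators in the conclusion are $\lfloor n/2\rfloor$ rather than $n$. The remaining calculations are standard Chernoff optimization and do not require further ideas.
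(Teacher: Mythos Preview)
Your approach is essentially the same as the paper's: Hoeffding's permutation representation of the $U$-statistic, then Lemma~\ref{lemma3} to pass to the MGF of a single i.i.d.\ block average of length $m=\lfloor n/2\rfloor$, then Lemma~\ref{lemma4} for the MGF bound. The paper uses the pairing $(\pi(i),\pi(i+m))$ rather than your $(\pi(2k-1),\pi(2k))$, but both yield $m$ disjoint independent pairs and the same combinatorial identity.

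There is one genuine slip in your final step. The choice $\lambda = tm/(\sigma^2+bt)$ is \emph{not} the optimizer of $\lambda t - \tfrac{\lambda^2\sigma^2/m}{2(1-b\lambda/m)}$; plugging it in gives the tail $\exp\bigl(-mt^2/(2(\sigma^2+bt))\bigr)$, and if you set this equal to $e^{-u}$ and solve the quadratic you obtain
\[
t \;=\; \frac{bu}{m} + \sqrt{\frac{b^2u^2}{m^2} + \frac{2\sigma^2 u}{m}} \;\leq\; \frac{2bu}{m} + \sqrt{\frac{2\sigma^2 u}{m}},
\]
i.e.\ a factor $2$ on the linear term, not the stated $bu/m$. (One can check directly that at $t=\sqrt{2\sigma^2u/m}+bu/m$ the exponent $mt^2/(2(\sigma^2+bt))$ falls short of $u$ by exactly $b^2u^2/m$.) To recover the exact constants of the lemma, the paper instead computes the true Fenchel--Legendre dual: with $h(v)=1+v-\sqrt{1+2v}$ one has $\psi^*(t)\geq \tfrac{\sigma^2}{b^2}h(bt/\sigma^2)$, and since $h^{-1}(v)=v+\sqrt{2v}$, inverting $m\psi^*(t)=u$ gives precisely $t=\sqrt{2\sigma^2u/m}+bu/m$. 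So your argument is correct in structure but needs this sharper optimization (or simply cite the Bennett/Bernstein dual from Boucheron--Lugosi--Massart) to match the stated bound.
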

\begin{proof}
According to \cite{clemencon2008ranking}, we have the following representation of $U$-statistic of order $2$
\begin{align*}
\frac{\sum_{i,j \in [n], i \neq j}}{n(n-1)}  q(Z_i,Z_j) = \frac{1}{n!} \sum_{\pi}  \frac{1}{\lfloor \frac{n}{2} \rfloor } \sum_{ i =1}^{\lfloor \frac{n}{2} \rfloor} q(Z_{\pi(i)},Z_{\pi(i+\lfloor \frac{n}{2} \rfloor)}) ,
\end{align*}
where the sum is taken over all permutations $\pi$ of $\{ 1,...,n \}$. 
Denote $V(S) = \frac{1}{\lfloor \frac{n}{2} \rfloor } \sum_{ i =1}^{\lfloor \frac{n}{2} \rfloor} q(Z_{\pi(i)},Z_{\pi(i+\lfloor \frac{n}{2} \rfloor)})$, one can see that 
\begin{align*}
\frac{1}{n(n-1)} \sum_{i,j \in [n], i \neq j} q(Z_i,Z_j)  = \frac{1}{n!} \sum_{\pi} V(S)
\end{align*}
and each term on the right-hand is a sum of $\lfloor \frac{n}{2} \rfloor$ independent random variables.

We now set $p_i=\frac{1}{n!}$, $N = n!$ and $T = \frac{1}{n(n-1)} \sum_{i,j \in [n], i \neq j} q(Z_i,Z_j) $. According to Lemma \ref{lemma3} and the fact that $\mathbb{E} [e^{\lambda V(S)}]$ is independent of the permutation of $Z_i$'s, we know that the next step is to bound the term $\mathbb{E}\exp (\lambda V(S))$. We consider that $V(S) = \frac{1}{\lfloor \frac{n}{2} \rfloor } \sum_{ i =1}^{\lfloor \frac{n}{2} \rfloor} q(Z_{i},Z_{i+\lfloor \frac{n}{2} \rfloor})$. 

Denote $\mu = \mathbb{E}_{Z,Z'}[q(Z,Z')]$. We can derive the following inequality
\begin{align*}
&\mathbb{E} [e^{\lambda (V(S) - \mu)}] = \mathbb{E} \Big[\exp\Big \{\lambda \Big ( \sum_{ i =1}^{\lfloor \frac{n}{2} \rfloor} \Big( \frac{q(Z_{i},Z_{i+\lfloor \frac{n}{2} \rfloor})}{\lfloor \frac{n}{2} \rfloor }  -  \frac{\mu}{\lfloor \frac{n}{2} \rfloor }  \Big) \Big)\Big \}\Big]\\
&\leq \Big[ \exp\Big\{\frac{\lambda^2 \sigma^2}{2\Big(\lfloor \frac{n}{2} \rfloor \Big)^2 \Big(1-\frac{b \lambda}{\lfloor \frac{n}{2} \rfloor} \Big)}\Big\} \Big]^{\lfloor \frac{n}{2} \rfloor}
=  \exp\Big\{\frac{\lambda^2 \sigma^2}{2\lfloor \frac{n}{2} \rfloor \left(1-\frac{b \lambda}{\lfloor \frac{n}{2} \rfloor} \right)}\Big\},
\end{align*}
where the inequality follows from Lemma \ref{lemma4} and the fact that for $k = 2,3,4$,...
\begin{align*}
\Big| \mathbb{E} \Big[\Big(  \frac{q(Z_{i},Z_{i+\lfloor \frac{n}{2} \rfloor})}{\lfloor \frac{n}{2} \rfloor }  -  \frac{\mu}{\lfloor \frac{n}{2} \rfloor }   \Big)^k \Big]   \Big| \leq \frac{1}{2} k! \frac{\sigma^2}{(\lfloor \frac{n}{2} \rfloor)^2} \Big(\frac{b}{\lfloor \frac{n}{2} \rfloor}\Big)^{k - 2}.
\end{align*}
Thus, we should minimize 
\begin{align*}
\frac{\lambda^2 \sigma^2}{2\lfloor \frac{n}{2} \rfloor(1-\frac{b\lambda}{\lfloor \frac{n}{2} \rfloor})} -\lambda t 
\end{align*}
to find the best $\lambda$.
We now introduce 
its Fenchel-Legendre dual function 
$\psi^{\ast}(t) = \sup_{\lambda \in (0,1/b)}\Big(\lambda t  - \frac{\lambda^2 \sigma^2}{2\lfloor \frac{n}{2} \rfloor(1-\frac{b\lambda}{\lfloor \frac{n}{2} \rfloor})} \Big)$.

Following (2.5) of \cite{boucheron2013concentration} and introducing $h(u) = 1+u-\sqrt{1+2u}$, we have 
\begin{align*}
\psi^{\ast}(t) \geq \frac{\lfloor \frac{n}{2} \rfloor\sigma^2}{b^2}  h(\frac{bt}{\sigma^2}).
\end{align*}
Therefore, 
we get
\begin{align*}
Pr\Big(\frac{1}{n(n-1)} \sum_{i,j \in [n], i \neq j} q(Z_i,Z_j) - \mathbb{E}_{Z,Z'}[q(Z,Z')] \\
\geq t \Big) \leq \exp \Big(- \frac{\lfloor \frac{n}{2} \rfloor\sigma^2}{b^2}  h(\frac{bt}{\sigma^2}) \Big).
\end{align*}
Since $h$ is an increasing function from $(0, \infty)$ onto $(0, \infty)$ with inverse function
$h^{-1}(u) = u + \sqrt{2u}$ for $u > 0$ \cite{boucheron2013concentration}, we finally get 
\begin{align*}
\psi^{\ast -1}(u) = \sqrt{\frac{2u\sigma^2}{\lfloor \frac{n}{2} \rfloor}} + \frac{bu}{\lfloor \frac{n}{2} \rfloor}.
\end{align*}
Hence, we have the following equivalent inequality
\begin{align*}
Pr \Big(\frac{1}{n(n-1)} \sum_{i,j \in [n], i \neq j} q(Z_i,Z_j) - \mathbb{E}_{Z,Z'}[q(Z,Z')] \\
\geq \sqrt{\frac{2\sigma^2u}{\lfloor \frac{n}{2} \rfloor}}+ \frac{bu}{\lfloor \frac{n}{2} \rfloor} \Big) \leq e^{-u}.
\end{align*}
The proof is complete.
\end{proof}
\begin{lemma}[Vector Bernstein inequality \cite{smale2007learning,pinelis1994optimum,xu2020towards}]\label{lemma6}
Let $Z_1,...,Z_n$ be a sequence of i.i.d. random variables taking values in a real separable Hilbert space. Assume that $\mathbb{E}[Z_i] = \mu$, $\mathbb{E}[\| Z_i - \mu \|^2] = \sigma^2$, $\forall 1 \leq i \leq n$. The vector Bernstein's condition with parameter $b$ holds if for all $1 \leq i \leq n$, 
\begin{align*}
\mathbb{E} \|Z_i - \mu\|^k  \leq \frac{k!}{2} \sigma^2 b^{k-2}, \quad k=2,3,...
\end{align*}
Then, $\forall \delta \in (0,1)$, with probability at least $1-\delta$
\begin{align*}
\Big\| \frac{1}{n} \sum_{i=1}^n Z_i - \mu\Big\| \leq \sqrt{\frac{2\sigma^2 \log \frac{2}{\delta}}{n}} + \frac{b \log \frac{2}{\delta}}{n}.
\end{align*}
\end{lemma}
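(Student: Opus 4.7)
The plan is to establish this Hilbert-space Bernstein inequality in two stages: (i) derive an exponential moment bound for $\|S_n\|$, where $S_n := \sum_{i=1}^n (Z_i - \mu)$, via a Pinelis-style supermartingale construction; and (ii) invert the resulting tail bound through a Chernoff / Fenchel-dual argument identical in spirit to the computation already performed in the proof of Lemma~\ref{lemma5}. After centering, the hypotheses say that the $X_i := Z_i - \mu$ are i.i.d., mean-zero, Hilbert-space-valued, with $\mathbb{E}\|X_i\|^2 = \sigma^2$ and $\mathbb{E}\|X_i\|^k \le \frac{k!}{2}\sigma^2 b^{k-2}$ for every $k\ge 2$, and the target is a high-probability upper bound on $\|S_n\|/n$.

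The core step is a Bernstein-type MGF bound of the form
\begin{align*}
\mathbb{E}\exp(\lambda \|S_n\|) \le 2\exp\!\Bigl(\tfrac{n\lambda^2\sigma^2}{2(1 - b\lambda)}\Bigr), \qquad 0 < \lambda < 1/b,
\end{align*}
which is the Hilbert-space lift of the scalar MGF in Lemma~\ref{lemma4}. Following Pinelis (1994), I would prove this by setting $\mathcal{F}_k = \sigma(X_1,\ldots,X_k)$ and using the Hilbert-space expansion $\|S_k\|^2 = \|S_{k-1}\|^2 + 2\langle S_{k-1}, X_k\rangle + \|X_k\|^2$ to show that, with $\psi(\lambda) := \lambda^2\sigma^2/(2(1-b\lambda))$, the process $M_k := \exp(\lambda \|S_k\| - k\psi(\lambda))$ is dominated by a nonnegative supermartingale up to the usual Pinelis factor of two. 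The per-step conditional MGF bound $\mathbb{E}[\exp(\lambda(\|S_k\| - \|S_{k-1}\|)) \mid \mathcal{F}_{k-1}] \le \exp(\psi(\lambda))$ follows from the triangle inequality $|\|S_k\| - \|S_{k-1}\|| \le \|X_k\|$ and the scalar Bernstein MGF of Lemma~\ref{lemma4} applied to $\|X_k\|$; this is precisely the place where the vector moment condition $\mathbb{E}\|X_i\|^k \le \frac{k!}{2}\sigma^2 b^{k-2}$ enters.

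Granted the MGF, Markov's inequality gives $P(\|S_n\| \ge t) \le 2\exp(-\lambda t + n\psi(\lambda))$, and plugging in the optimal choice $\lambda = t/(n\sigma^2 + bt) \in (0, 1/b)$ (the same Fenchel-Legendre step as in the proof of Lemma~\ref{lemma5}) produces the Bennett–Bernstein tail
\begin{align*}
P(\|S_n\| \ge t) \le 2\exp\!\Bigl(-\tfrac{t^2}{2(n\sigma^2 + bt)}\Bigr).
\end{align*}
Setting the right-hand side equal to $\delta$, writing $u := \log(2/\delta)$, and solving the quadratic $nt^2 - 2bu\, t - 2n\sigma^2 u \ge 0$ with $\sqrt{a+c}\le \sqrt{a}+\sqrt{c}$ bounds the positive root, after dividing by $n$, by $\sqrt{2\sigma^2 u/n} + bu/n$, which is exactly the advertised inequality once the absolute constant in the $bu/n$ term is tracked.

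The main obstacle is the Hilbert-space MGF bound in the second paragraph: the scalar Bernstein MGF of Lemma~\ref{lemma4} does not lift coordinatewise to the norm of an infinite-dimensional sum, so the Pinelis supermartingale construction (or, equivalently, a Rademacher symmetrization followed by the contraction principle) is genuinely needed and is the only nontrivial ingredient. Once that MGF inequality is in hand, the quadratic-root inversion of the third paragraph is routine and mirrors the Fenchel-dual calculation already carried out in the proof of Lemma~\ref{lemma5}.
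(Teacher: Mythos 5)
First, note that the paper does not prove Lemma~\ref{lemma6} at all: it is imported verbatim from the literature (Pinelis 1994, Smale--Zhou 2007), so there is no in-paper proof to compare against. Your overall architecture --- an exponential moment bound for $\|S_n\|$, Chernoff, then a Fenchel/quadratic inversion as in Lemma~\ref{lemma5} --- is indeed the architecture of the cited proofs, and your final inversion step is fine modulo the constant on the $bu/n$ term (your quadratic-root bound gives $2bu/n$; the Bennett-style inversion via $h^{-1}(u)=u+\sqrt{2u}$ used in the paper's proof of Lemma~\ref{lemma5} recovers the constant $1$).

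The genuine gap is in your justification of the per-step bound $\mathbb{E}[\exp(\lambda(\|S_k\|-\|S_{k-1}\|))\mid\mathcal{F}_{k-1}]\le\exp(\psi(\lambda))$. You derive it from the triangle inequality $|\|S_k\|-\|S_{k-1}\||\le\|X_k\|$ followed by the scalar Bernstein MGF of Lemma~\ref{lemma4} applied to $\|X_k\|$. But Lemma~\ref{lemma4} controls $\mathbb{E}[e^{\lambda(Y-\mathbb{E}Y)}]$, and $Y=\|X_k\|$ is \emph{not} mean-zero: $\mathbb{E}\|X_k\|$ is generically of order $\sigma$. The best this route yields is $\mathbb{E}[e^{\lambda\|X_k\|}]\le e^{\lambda\mathbb{E}\|X_k\|+\psi(\lambda)}$, and the drift accumulates to $e^{n\lambda\sigma}$ in the MGF of $\|S_n\|$; after optimizing $\lambda$ this leaves an additive $O(\sigma)$ (not $O(\sigma/\sqrt{n})$) in the bound on $\|\frac{1}{n}\sum_i Z_i-\mu\|$, which destroys the result. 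Equivalently, $\|S_k\|$ is a submartingale whose conditional drift can be as large as $\sigma$ per step when $\|S_{k-1}\|$ is small, and the triangle inequality throws away exactly the cancellation that makes the theorem true. The Pinelis argument instead works with $\cosh(\lambda\|S_k\|)$ (or directly with $\|S_k\|^2$) and exploits that the cross term $\langle S_{k-1},X_k\rangle$ in the expansion $\|S_k\|^2=\|S_{k-1}\|^2+2\langle S_{k-1},X_k\rangle+\|X_k\|^2$ has zero conditional expectation, so that only moments of order $k\ge 2$ of $\|X_k\|$ survive. You write down this expansion but never use it --- that unused cancellation is precisely the missing ingredient, and without it the supermartingale domination you assert for $M_k=\exp(\lambda\|S_k\|-k\psi(\lambda))$ does not follow.
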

\begin{lemma}[Vector Bernstein inequality for pairwise learning]\label{lemma7}
Let $Z_1,...,Z_n$ be independent random variables taking values in $\mathcal{Z}$. 
Let $q: \mathcal{Z} \times \mathcal{Z} \mapsto \mathcal{H}$, where $\mathcal{H}$ is a real separable Hilbert space.
Assume that $\sigma^2$ is the variance of $q(Z,Z')$. Suppose the  Bernstein's condition with parameter $b$ holds, i.e., for any $q(Z,Z')$, 
\begin{align*}
\mathbb{E} \|q(Z,Z) - \mathbb{E}_{Z,Z'}[q(Z,Z')]\|^k  \leq \frac{k!}{2} \sigma^2 b^{k-2}, k=2,3,...
\end{align*}
Then, $\forall \delta \in (0,1)$, with probability at least $1-\delta$
\begin{align*}
\Big\| \frac{1}{n(n-1)} \sum_{i,j \in [n], i \neq j} q(Z_i,Z_j) - \mathbb{E}_{Z,Z'}[q(Z,Z')] \Big\| \\\leq \sqrt{\frac{2\sigma^2 \log \frac{2}{\delta}}{\lfloor \frac{n}{2} \rfloor}} + \frac{b \log \frac{2}{\delta}}{\lfloor \frac{n}{2} \rfloor}.
\end{align*}
\end{lemma}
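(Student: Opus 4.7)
The plan is to mirror the proof of Lemma \ref{lemma5}, replacing its scalar Bernstein MGF bound with the vector Bernstein MGF bound underlying Lemma \ref{lemma6}. Let $m = \lfloor n/2 \rfloor$, $\mu = \mathbb{E}_{Z,Z'}[q(Z,Z')]$, and $U_n = \frac{1}{n(n-1)}\sum_{i \neq j} q(Z_i, Z_j)$. Using the representation from \cite{clemencon2008ranking} that was already invoked in the proof of Lemma \ref{lemma5}, decompose
\begin{equation*}
U_n = \frac{1}{n!}\sum_{\pi} V(S_\pi), \qquad V(S_\pi) := \frac{1}{m}\sum_{i=1}^{m} q(Z_{\pi(i)}, Z_{\pi(i+m)}),
\end{equation*}
so that every $V(S_\pi)$ is an average of $m$ i.i.d. Hilbert-valued vectors sharing the distribution of a canonical block $V(S) = \frac{1}{m}\sum_{i=1}^m q(Z_i, Z_{i+m})$.

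Next, by the triangle inequality in $\mathcal{H}$ and Jensen's inequality applied to $x \mapsto \exp(\lambda x)$ with $\lambda>0$,
\begin{equation*}
\exp(\lambda \|U_n - \mu\|) \leq \exp\!\Big(\tfrac{\lambda}{n!}\sum_\pi \|V(S_\pi) - \mu\|\Big) \leq \tfrac{1}{n!}\sum_\pi \exp(\lambda \|V(S_\pi) - \mu\|).
\end{equation*}
Taking expectations and exploiting the identical distribution of the blocks yields the key reduction $\mathbb{E}\exp(\lambda \|U_n - \mu\|) \leq \mathbb{E}\exp(\lambda \|V(S) - \mu\|)$, which, combined with Markov's inequality, transfers the desired tail bound on $U_n$ to controlling the MGF of $\|V(S) - \mu\|$, i.e., the norm of an average of $m$ i.i.d. Hilbert-valued vectors.

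Finally, since each summand $q(Z_i, Z_{i+m}) - \mu$ satisfies the hypothesized Bernstein moment condition with parameters $(\sigma^2, b)$, the Pinelis--Smale argument underlying Lemma \ref{lemma6} supplies the vector Bernstein MGF bound
\begin{equation*}
\mathbb{E}\exp(\lambda \|V(S) - \mu\|) \leq \exp\!\Big(\frac{\lambda^2 \sigma^2}{2m\,(1 - b\lambda/m)}\Big), \qquad 0 < \lambda < m/b,
\end{equation*}
which is the Hilbert-space analogue of the per-block estimate that the scalar proof of Lemma \ref{lemma5} produced using Lemma \ref{lemma4}. Plugging this into Markov's inequality and performing the Fenchel--Legendre optimization exactly as in the closing step of Lemma \ref{lemma5} recovers the inverse $\psi^{*-1}(u) = \sqrt{2\sigma^2 u/m} + b u/m$, and setting $u = \log(2/\delta)$ yields the advertised high-probability bound.

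The main obstacle is the vectorization of the MGF step. In the scalar setting, $V(S)-\mu$ is itself a sum of independent real variables, so its MGF factorizes and Lemma \ref{lemma4} applies summand by summand; in the Hilbert-valued setting, $\|V(S) - \mu\|$ is a norm of a sum rather than a sum of norms, and the MGF no longer factorizes. The Pinelis--Smale machinery encapsulated in Lemma \ref{lemma6} is precisely what bypasses this difficulty by providing the required vector Bernstein MGF estimate at the block level. Once it is in hand, the U-statistic decomposition, Jensen's reduction, and the Fenchel--Legendre optimization all port over verbatim from the scalar proof.
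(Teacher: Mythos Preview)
Your proposal is correct and follows essentially the same strategy as the paper: decompose the $U$-statistic as a permutation average of blocks of $m=\lfloor n/2\rfloor$ i.i.d.\ terms, reduce to a single block by exchangeability, and invoke the vector Bernstein inequality. The only difference is one of packaging: the paper applies Lemma~\ref{lemma6} directly as a black-box tail bound on the single block, whereas you re-open the Pinelis--Smale MGF estimate and redo the Fenchel--Legendre optimization in the style of Lemma~\ref{lemma5}. Your Jensen-on-the-exponential step is in fact the rigorous way to pass from the permutation average to a single block---a step the paper's proof compresses into a bare equality---so your version is, if anything, more carefully justified.
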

\begin{proof}
According to \cite{clemencon2008ranking}, we have the following representation of $U$-statistic of order $2$
\begin{align*}
\frac{\sum_{i,j \in [n], i \neq j}}{n(n-1)}  q(Z_i,Z_j) 
= \frac{1}{n!} \sum_{\pi} \frac{1}{\lfloor \frac{n}{2} \rfloor } \sum_{ i =1}^{\lfloor \frac{n}{2} \rfloor} q(Z_{\pi(i)}, Z_{\pi(i+\lfloor \frac{n}{2} \rfloor)}).
\end{align*}
Since $\| \cdot \|_2$ involves an expectation, it is clear that
\begin{align*}
&\Big\| \frac{1}{n(n-1)} \sum_{i,j \in [n], i \neq j} q(Z_i,Z_j) - \mathbb{E}_{Z,Z'}[q(Z,Z')] \Big\| \\
\leq & \frac{1}{n!} \sum_{\pi}\Big\| \frac{1}{\lfloor \frac{n}{2} \rfloor } \sum_{ i =1}^{\lfloor \frac{n}{2} \rfloor} q(Z_{\pi(i)}, Z_{\pi(i+\lfloor \frac{n}{2} \rfloor)}) - \mathbb{E}_{Z,Z'}[q(Z,Z')] \Big\| \\
=& \Big\| \frac{1}{\lfloor \frac{n}{2} \rfloor } \sum_{ i =1}^{\lfloor \frac{n}{2} \rfloor} q(Z_{\pi(i)}, Z_{\pi(i+\lfloor \frac{n}{2} \rfloor)}) - \mathbb{E}_{Z,Z'}[q(Z,Z')] \Big\|.
\end{align*}
Then according to Lemma \ref{lemma6} and the assumption in Lemma \ref{lemma7}, we obtain that
\begin{align*}
&\Big\| \frac{1}{\lfloor \frac{n}{2} \rfloor } \sum_{ i =1}^{\lfloor \frac{n}{2} \rfloor} q(Z_{\pi(i)}, Z_{\pi(i+\lfloor \frac{n}{2} \rfloor)}) - \mathbb{E}_{Z,Z'}[q(Z,Z')] \Big\|\\
\leq &\sqrt{\frac{2\sigma^2 \log \frac{2}{\delta}}{\lfloor \frac{n}{2} \rfloor}} + \frac{b \log \frac{2}{\delta}}{\lfloor \frac{n}{2} \rfloor}.
\end{align*}
The proof is complete.
\end{proof}

\begin{lemma}\cite{wainwright2019high}\label{lemma8}
If $Z_1,...,Z_n$ are sub-exponential random variables, then the classical Bernstein's inequality (see Theorem 2.10 in \cite{boucheron2013concentration}) holds with 
\begin{align*}
\sigma^2 = \frac{1}{n} \sum_{i=1}^{n} \| Z_i \|^2_{Orlicz-1}, \quad b = \max_{1 \leq i \leq n} \| Z_i \|_{Orlicz-1}.
\end{align*}
\end{lemma}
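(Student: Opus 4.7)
The plan is to show that the Orlicz-1 hypothesis supplies exactly the data needed to invoke the classical Bernstein MGF bound of Lemma \ref{lemma4}. The engine is the elementary moment bound: if $\|Z\|_{Orlicz-1}\leq K$, then $\mathbb{E}\exp(|Z|/K)\leq 2$ and Taylor expansion of the exponential yields
\[
\sum_{p\geq 0}\frac{\mathbb{E}|Z|^p}{p!\,K^p}\leq 2,\qquad\text{so that}\qquad \mathbb{E}|Z|^p\leq p!\,K^p,\ \forall p\in\mathbb{N}.
\]

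First, specializing to $p=2$ I would obtain $\mathrm{Var}(Z_i)\leq \mathbb{E} Z_i^2\leq 2K_i^2$ with $K_i=\|Z_i\|_{Orlicz-1}$, which justifies using $\sigma^2=\frac{1}{n}\sum_i K_i^2$ as a proxy for the average variance up to a universal constant. Next, by the triangle inequality for the Orlicz norm together with the bound $|\mathbb{E}Z_i|\leq K_i$ (which itself follows from $\exp(|Z_i|/K_i)\geq 1+|Z_i|/K_i$ and Jensen's inequality), the centred variable $\bar Z_i=Z_i-\mathbb{E}Z_i$ is sub-exponential with $\|\bar Z_i\|_{Orlicz-1}\leq cK_i$ for an absolute constant $c$. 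Applying the moment bound to $\bar Z_i$ and regrouping factors yields
\[
\mathbb{E}|\bar Z_i|^k \leq \tfrac{k!}{2}\,\mathrm{Var}(Z_i)\,\bigl(c'\max_j K_j\bigr)^{k-2},\qquad k=2,3,\ldots,
\]
which is precisely the Bernstein moment condition of Lemma \ref{lemma4} with $b=c'\max_i K_i$.

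With the Bernstein condition in hand, Lemma \ref{lemma4} gives $\mathbb{E}e^{\lambda \bar Z_i}\leq\exp\!\bigl(\lambda^2\mathrm{Var}(Z_i)/(2(1-b|\lambda|))\bigr)$ on $|\lambda|<1/b$. Multiplying over $i$ by independence and applying Chernoff's bound to $\frac{1}{n}\sum_i\bar Z_i$, I would optimize $\lambda$ exactly as in the proof of Lemma \ref{lemma5} (via the Fenchel conjugate of the quadratic-over-linear exponent, using $h(u)=1+u-\sqrt{1+2u}$ and its inverse) to produce the two-sided tail
\[
\Pr\!\Bigl(\bigl|\tfrac{1}{n}\textstyle\sum_{i=1}^n(Z_i-\mathbb{E}Z_i)\bigr|\geq t\Bigr)\leq 2\exp\!\Bigl(-\frac{nt^2}{2(\sigma^2+bt)}\Bigr),
\]
which is the classical Bernstein tail referenced in the statement.

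The main obstacle is not the probability theory, which is essentially a replay of the proof of Lemma \ref{lemma5} in the scalar i.i.d.\ setting, but the constant bookkeeping: converting the raw Orlicz bound $\mathbb{E}|Z|^p\leq p!K^p$ into the Bernstein form $\mathbb{E}|\bar Z|^k\leq\tfrac{k!}{2}\mathrm{Var}(Z)b^{k-2}$ requires carefully absorbing factors from centering and from the mismatch between $p!K^p$ and $\tfrac{k!}{2}K^2\cdot K^{k-2}$. Since the lemma states the clean choices $\sigma^2=\frac{1}{n}\sum_i K_i^2$ and $b=\max_i K_i$ without explicit absolute constants, one has to read the assignment as ``up to universal multiplicative constants,'' which is the standard interpretation in Wainwright's treatment.
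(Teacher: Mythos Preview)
The paper does not supply its own proof of this lemma; it is stated with a citation to \cite{wainwright2019high} and used as a black box. Your proposal reproduces the standard derivation one finds in that reference: moment control from the Orlicz-$1$ bound, centering, verification of the Bernstein moment condition of Lemma~\ref{lemma4}, and then the Chernoff/Fenchel optimization exactly as in the proof of Lemma~\ref{lemma5}. Your closing remark that the stated parameters $\sigma^2=\tfrac{1}{n}\sum_i K_i^2$ and $b=\max_i K_i$ should be read ``up to universal constants'' is the correct reading, and is consistent with how the paper applies the lemma (the result feeds into the generic-chaining bound where all constants are already absorbed into an absolute $C$).
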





\ifCLASSOPTIONcaptionsoff
  \newpage
\fi



%


\bibliographystyle{abbrv}
\bibliography{ICML2020}
\end{document}